\newtheorem{assumption}{Assumption}
\begin{document}

\title{The Causal Round Trip: Generating Authentic Counterfactuals by Eliminating Information Loss}

\author{
    \name Rui Wu \email wurui22@mail.ustc.edu.cn \\
    \addr School of Management, University of Science and Technology of China \\
    96 Jinzhai Road, Hefei, 230026, Anhui, China
    \AND
    \name Lizheng Wang \email lzwang@ustc.edu.cn \\
    \addr School of Management, University of Science and Technology of China \\
    96 Jinzhai Road, Hefei, 230026, Anhui, China
    \AND
    \name Yongjun Li \thanks{Corresponding author.} \email lionli@ustc.edu.cn \\
    \addr School of Management, University of Science and Technology of China \\
    96 Jinzhai Road, Hefei, 230026, Anhui, China
}

\editor{Editor Name}

\maketitle

\begin{abstract}%
    Judea Pearl's vision of Structural Causal Models (SCMs) as engines for counterfactual reasoning hinges on faithful abduction: the precise inference of latent exogenous noise. For decades, operationalizing this step for complex, non-linear mechanisms has remained a significant computational challenge. The advent of diffusion models, powerful universal function approximators, offers a promising solution. However, we argue that their standard design, optimized for perceptual generation over logical inference, introduces a fundamental flaw for this classical problem: an inherent information loss we term the \textbf{Structural Reconstruction Error (SRE)}. To address this challenge, we formalize the principle of \textbf{Causal Information Conservation (CIC)} as the necessary condition for faithful abduction. We then introduce \textbf{BELM-MDCM}, the first diffusion-based framework engineered to be causally sound by eliminating SRE by construction through an \textbf{analytically invertible} mechanism. To operationalize this framework, a \textbf{Targeted Modeling} strategy provides structural regularization, while a \textbf{Hybrid Training Objective} instills a strong causal inductive bias. Rigorous experiments demonstrate that our Zero-SRE framework not only achieves state-of-the-art accuracy but, more importantly, enables the high-fidelity, individual-level counterfactuals required for deep causal inquiries. Our work provides a foundational blueprint that reconciles the power of modern generative models with the rigor of classical causal theory, establishing a new and more rigorous standard for this emerging field.
\end{abstract}

\begin{keywords}
  Causal Inference, Diffusion Models, Causal Information Conservation, Structural Causal Models, Counterfactual Generation, BELM, Structural Reconstruction Error
\end{keywords}

\section{Introduction}\label{sec:intro}

The fundamental challenge of causal inference, as articulated by \citet{rubin1974estimating}, is our inability to simultaneously observe an individual's potential outcomes. Generating authentic counterfactuals is thus the field's grand challenge. Structural Causal Models (SCMs), introduced by \citet{pearl2009causality}, provide the formal language for this pursuit. An SCM posits that an outcome $V_i$ is generated by a function of its parents $\mathbf{Pa}_i$ and a unique exogenous noise variable $U_i$. This noise, $U_i$, represents the \textbf{primordial causal information}—the collection of unobserved factors unique to an individual. This concept aligns directly with the long-standing focus in econometrics on \textbf{unobserved individual heterogeneity}, a central challenge in structural modeling for decades \citep{heckman2001micro}. Pearl's framework for causal reasoning, the Abduction-Action-Prediction cycle, hinges on the fidelity of the first step: abduction. To answer any "what if" question, one must first perfectly infer this primordial information $U_i$ from an observed outcome $v_i$. For decades, while this theoretical blueprint was clear, its practical realization for complex, non-linear mechanisms remained a major computational hurdle, often addressed in econometrics through strong parametric assumptions or linear approximations \citep{angrist2008mostly}.

The advent of deep generative models, particularly diffusion models \citep{ho2020denoising}, offers a powerful new hope for bridging this gap. As near-universal function approximators, they possess the expressive power to learn the complex, non-linear functions that have long challenged classical methods \citep{chao2023interventional, sanchez2022dcms}. However, this promise is shadowed by a critical, yet overlooked, "impedance mismatch." These models were engineered for perceptual tasks like image synthesis, where visual plausibility is paramount, not for the logical rigor demanded by causal abduction. We argue that their standard design, which relies on approximate inversion schemes like DDIM \citep{song2021denoising}, is fundamentally at odds with the strict requirements of this classical causal problem.

In this work, we diagnose and resolve this conflict. We begin by giving the classic requirement for faithful abduction a modern name: \textbf{Causal Information Conservation (CIC)}\footnote{In this work, 'Causal Information Conservation' is defined operationally as the lossless, deterministic recovery of the exogenous noise variable $U$. Its novelty lies in its application as a design principle and diagnostic tool for the diffusion model paradigm in causality, rather than as a formal information-theoretic quantity. Connecting this operational principle to formal measures, such as mutual information, is a compelling avenue for future research.}. Our core contribution is the identification that standard diffusion models systematically violate this principle due to an inherent algorithmic flaw. We formalize this flaw as the \textbf{Structural Reconstruction Error (SRE)}—a quantifiable information loss that imposes a hard theoretical ceiling on the fidelity of any counterfactual generated by such methods. The SRE is not an estimation error to be solved with more data, but a structural defect in the tool itself.

To solve the long-standing challenge of operationalizing faithful abduction, we introduce \textbf{BELM-MDCM}. It is not merely a new model, but the first diffusion-based framework re-engineered from first principles to be causally sound. Architected around an \textbf{analytically invertible} sampler \citep{liu2024belm}, it is the first \textbf{Zero-SRE causal framework by construction}. This design choice reconciles the expressive power of modern diffusion models with the logical rigor of Pearl's causal theory, ensuring the abduction step is lossless. Our primary contributions are therefore:

\begin{enumerate}[label=(\roman*)]
    \item \textbf{Diagnosing a Fundamental Barrier in a Classic Problem.} We are the first to identify that standard diffusion models, when applied to the classic problem of SCM abduction, suffer from a structural flaw we term the \textbf{Structural Reconstruction Error (SRE)}, which violates the foundational principle of \textbf{Causal Information Conservation}.

    \item \textbf{Proposing the First Causally-Sound Diffusion Framework.} We introduce BELM-MDCM, the first framework to eliminate SRE by design. By leveraging an analytically invertible mechanism, it ensures that the power of diffusion models can be applied to causality without compromising the integrity of the abduction process.

    \item \textbf{Developing a Principled Methodology to Operationalize the Framework.} To make our Zero-SRE framework practical and robust, we introduce two synergistic innovations: a \textbf{Targeted Modeling} strategy to manage complexity and a \textbf{Hybrid Training Objective} to provide a strong causal inductive bias, both supported by our theoretical analysis.
\end{enumerate}

Through a comprehensive experimental evaluation, we demonstrate that BELM-MDCM not only sets a new state-of-the-art in estimation accuracy but, more critically, unlocks the generation of authentic individual-level counterfactuals for deep causal inquiries. By providing a foundational blueprint that resolves a core tension between modern machine learning and classical causal theory, our work establishes a new, more rigorous standard for this research direction.

\subsection{The Inversion Challenge in Diffusion-Based Causality}
Diffusion models \citep{ho2020denoising} are powerful generative models that learn to reverse a fixed, gradual noising process. They train a neural network, $\epsilon_\theta(\mathbf{x}_t, t)$, to predict the noise component of a corrupted sample $\mathbf{x}_t$ by optimizing a simple mean-squared error objective:
\begin{equation}\label{eq:simple_loss_main}
    \begin{split}
        L_{\text{simple}}(\theta) = \mathbb{E}_{t, \mathbf{x}_0, \boldsymbol{\epsilon}} \bigg[ \Big\| \boldsymbol{\epsilon} - \epsilon_\theta\big(\sqrt{\bar{\alpha}_t}\mathbf{x}_0
        + \sqrt{1-\bar{\alpha}_t}\boldsymbol{\epsilon}, t\big) \Big\|^2 \bigg]
    \end{split}
\end{equation}
where $\bar{\alpha}_t$ defines the noise schedule and $\boldsymbol{\epsilon} \sim \mathcal{N}(\mathbf{0}, \mathbf{I})$. This trained network is then used to iteratively denoise a variable from pure noise back to a clean sample. A standard deterministic method for this generative process is the Denoising Diffusion Implicit Model (DDIM) \citep{song2021denoising}:
\begin{equation}\label{eq:ddim_reverse_main}
    \begin{split}
        \mathbf{x}_{t-1} = \sqrt{\bar{\alpha}_{t-1}} \left( \frac{\mathbf{x}_t - \sqrt{1-\bar{\alpha}_t}\epsilon_\theta(\mathbf{x}_t,t)}{\sqrt{\bar{\alpha}_t}} \right) 
        + \sqrt{1-\bar{\alpha}_{t-1}} \cdot \epsilon_\theta(\mathbf{x}_t, t)
    \end{split}
\end{equation}
However, causal abduction requires the inverse operation: encoding an observed data point $\mathbf{x}_0$ into its latent noise code $\mathbf{x}_T$. Standard frameworks \citep{chao2023interventional} use the DDIM inversion, which only approximates this path:
\begin{equation}\label{eq:ddim_inversion_main}
    \begin{split}
        \mathbf{x}_{t+1} = \sqrt{\bar{\alpha}_{t+1}} \left( \frac{\mathbf{x}_t - \sqrt{1-\bar{\alpha}_t}\epsilon_\theta(\mathbf{x}_t,t)}{\sqrt{\bar{\alpha}_t}} \right) 
        + \sqrt{1-\bar{\alpha}_{t+1}} \cdot \epsilon_\theta(\mathbf{x}_t, t)
    \end{split}
\end{equation}
This inversion is approximate because it relies on the noise prediction $\epsilon_\theta(\mathbf{x}_t, t)$ remaining constant across the step, which introduces discretization errors that accumulate \citep{liu2022pseudo}. This structural flaw, which we term the \textbf{Structural Reconstruction Error (SRE)}, systematically corrupts the inferred exogenous noise $U_i$. The initial error in the abduction step then propagates through the entire Abduction-Action-Prediction cycle, compromising the fidelity of the final counterfactual.

\subsection{Our Solution: A Zero-SRE Causal Framework}
To eliminate SRE by construction, we build our framework upon an analytically invertible sampler: the \textbf{B}idirectional \textbf{E}xplicit \textbf{L}inear \textbf{M}ulti-step (BELM) sampler \citep{liu2024belm}. BELM overcomes the "memoryless" limitation of single-step samplers like DDIM by using a history of noise predictions, a principle grounded in classical theory for solving ODEs \citep{hairer2006solving}.

Specifically, we employ a second-order BELM. During decoding, it computes a more stable effective noise, $\boldsymbol{\epsilon}_{\text{eff}}$, using predictions from the current and previous timesteps:
\begin{equation}\label{eq:belm_eff_noise_reverse}
    \boldsymbol{\epsilon}_{\text{eff}} = \frac{3}{2}\epsilon_\theta(\mathbf{x}_t, t) - \frac{1}{2}\epsilon_\theta(\mathbf{x}_{t+1}, t+1)
\end{equation}
This improved estimate is then used in a DDIM-like update. The key innovation is that the corresponding encoding process is constructed to be the exact algebraic inverse of this decoding process, guaranteeing that the round-trip is lossless, i.e., $\mathbf{H}(\mathbf{T}(\mathbf{x}_0)) = \mathbf{x}_0$. While the original work on BELM focused on general generative tasks, \textbf{we are the first to identify, leverage, and theoretically justify its analytical invertibility as the key to satisfying the principle of Causal Information Conservation for rigorous counterfactual generation.} Our choice of a second-order BELM represents a deliberate trade-off, providing substantial accuracy gains over single-step methods while maintaining practical efficiency \citep{liu2024belm}, making it ideal for our causal framework.

\subsection{Methodological Gaps in Applying Invertible SCMs}
However, achieving high-fidelity causal inference requires more than a simple substitution of one sampler for another. The principle of analytical invertibility, while theoretically sound, exposes new challenges in practical SCM implementation that our framework is designed to address. 

\paragraph{The Challenge of Model Specification: Targeted Modeling.}
A key decision in SCM construction is assigning a causal mechanism to each node. Naively applying a complex, computationally expensive BELM-based diffusion model to \textit{every} node in the causal graph is suboptimal. This motivates our \textbf{Targeted Modeling} strategy, where model complexity is treated as a resource to be allocated judiciously across the graph.

\paragraph{The Challenge of Downstream Tasks: Hybrid Training.}
The second challenge arises from a fundamental mismatch in objectives. A diffusion model is trained on a generative objective, $L_{\text{diffusion}}(\theta)$, while a downstream predictive task is optimized using a discriminative loss, $L_{\text{task}}(\phi)$. These two objectives are not aligned. This "objective mismatch" motivates our \textbf{Hybrid Training} strategy, which seeks to unify these two goals.

\section{Theoretical Analysis: An Operator-Theoretic Framework}\label{sec:theory}
To formalize our thesis that \textbf{Causal Information Conservation} is paramount and its violation via \textbf{Structural Reconstruction Error} is a fundamental barrier, we develop a rigorous operator-theoretic framework. This perspective is essential for analyzing the fidelity of the \textbf{causal mapping process itself}, moving beyond simple prediction errors. We present the first formal analysis that decomposes the counterfactual error in diffusion-based causal models to explicitly isolate the SRE, proving how our Zero-SRE design eliminates this critical structural limitation.

Our analysis first establishes the conditions for perfect counterfactual generation (\S\ref{sec:theory}.1-\S\ref{sec:theory}.3) and proves that standard methods produce a non-zero SRE, which our sampler eliminates by construction (Proposition \ref{prop:ddim_error}-\ref{prop:belm_invertibility}; \S\ref{sec:theory}.4). The centerpiece is a novel error decomposition theorem that isolates the SRE, motivating our Zero-SRE design (\S\ref{sec:theory}.5-\S\ref{sec:theory}.7). We conclude with learnability guarantees and a discussion of implications for advanced causal tasks like transportability (\S\ref{sec:theory}.8-\S\ref{sec:transportability}).

\subsection{Problem Formulation and Causal Operators}
Let $(\Omega, \mathcal{F}, P)$ be a probability space. We consider endogenous variables $\mathbf{V}$ as elements of the Hilbert space of square-integrable random variables, $\mathcal{X} := L^2(\Omega, \mathbb{R}^d)$. Unless otherwise specified, all vector norms $\| \cdot \|$ in the subsequent analysis refer to the standard Euclidean ($L_2$) norm.

\begin{definition}[Functional SCM Operator]
A Structural Causal Model is defined by a set of unknown, true functional operators $\{\mathbf{F}_i\}_{i=1}^d$, where each $\mathbf{F}_i: \mathcal{X}^{\text{pa}_i} \times \mathcal{U}_i \to \mathcal{X}_i$ is a map such that $V_i := \mathbf{F}_i(\mathbf{Pa}_i, U_i)$, with $\mathbf{Pa}_i$ being the set of parent random variables and $U_i$ an exogenous noise variable. We establish the convention that the corresponding lowercase bold letter, $\mathbf{pa}_i$, denotes a specific vector of observed values for these parents.
\end{definition}

Our goal is to learn a model parameterized by $\theta$ that approximates this SCM. Our model consists of a pair of conditional operators for each variable $V_i$:
\begin{enumerate}
    \item A \textbf{decoder (generative) operator} $\mathbf{H}_\theta: \mathcal{U} \times \mathcal{X}^p \to \mathcal{X}$, which aims to approximate $\mathbf{F}$.
    \item An \textbf{encoder (inference) operator} $\mathbf{T}_\theta: \mathcal{X} \times \mathcal{X}^p \to \mathcal{U}$, which aims to perform abduction by inferring the latent noise.
\end{enumerate}
These operators are realized by solving the probability flow ODE (Appendix \ref{sec:appendix_theory}). The decoder $\mathbf{H}_\theta$ solves the ODE from $t=T$ to $t=0$, while the encoder $\mathbf{T}_\theta$ solves it from $t=0$ to $t=T$. Our BELM sampler is a high-fidelity numerical solver designed such that these forward and backward operations are exact algebraic inverses.

\subsection{Identifiability and Exact Counterfactual Generation}
We adapt principles from identifiable generative modeling \citep{chao2023interventional} to formalize the conditions for exact counterfactuals. This requires assuming the SCM is invertible with respect to its noise term, a condition discussed in Section \ref{sec:assumptions}.

\begin{theorem}[Identifiability via Statistical Independence]\label{thm:identifiability}
Given an SCM operator $X := \mathbf{F}(\mathbf{Pa}, U)$ where $U \perp\!\!\!\perp \mathbf{Pa}$ and $\mathbf{F}$ is invertible w.r.t. $U$. If a learned encoder $\mathbf{T}_\theta$ (with sufficient capacity) yields a latent representation $Z = \mathbf{T}_\theta(X, \mathbf{Pa})$ that is statistically independent of the parents $\mathbf{Pa}$, then $Z$ is an isomorphic representation of the exogenous noise $U$.
\end{theorem}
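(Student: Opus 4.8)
The plan is to leverage the two invertibility hypotheses --- that $\mathbf{F}$ is invertible in $U$, and that a ``sufficient-capacity'' encoder realizes an invertible map in its first argument --- to exhibit a parent-indexed family of bijections linking $Z$ to $U$, and then to collapse that family to a single bijection using the independence constraint $Z \perp\!\!\!\perp \mathbf{Pa}$.

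First I would record the hypotheses as explicit maps. Invertibility of $\mathbf{F}$ w.r.t.\ $U$ supplies a measurable $\mathbf{F}^{-1}(\mathbf{pa},\cdot):\mathcal{X}_i\to\mathcal{U}_i$ that is, for each fixed value $\mathbf{pa}$, a two-sided inverse of $u\mapsto\mathbf{F}(\mathbf{pa},u)$, so that $U=\mathbf{F}^{-1}(\mathbf{Pa},X)$ almost surely. On the model side I would read ``sufficient capacity'' as the guarantee that, subject to the imposed independence constraint, $\mathbf{T}_\theta(\cdot,\mathbf{pa})$ is itself a bijection for each $\mathbf{pa}$ --- equivalently, that the encoder/decoder pair forms a lossless round trip $\mathbf{H}_\theta(\mathbf{T}_\theta(x,\mathbf{pa}),\mathbf{pa})=x$, which is exactly the property our BELM sampler enforces by construction. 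Composing, set $\psi_{\mathbf{pa}} := \mathbf{T}_\theta(\cdot,\mathbf{pa})\circ\mathbf{F}(\mathbf{pa},\cdot)$; this is a bijection for each $\mathbf{pa}$, and substituting $X=\mathbf{F}(\mathbf{Pa},U)$ gives $Z=\psi_{\mathbf{Pa}}(U)$ almost surely.

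Next I would bring in the laws. Since $U\perp\!\!\!\perp\mathbf{Pa}$, the conditional distribution of $Z$ given $\mathbf{Pa}=\mathbf{pa}$ is the pushforward of the fixed law $P_U$ under $\psi_{\mathbf{pa}}$; since $Z\perp\!\!\!\perp\mathbf{Pa}$, this pushforward is one and the same measure $P_Z$ for every $\mathbf{pa}$ in the support. Hence for any two parent values $\mathbf{pa},\mathbf{pa}'$ the bijection $\psi_{\mathbf{pa}'}^{-1}\circ\psi_{\mathbf{pa}}$ preserves $P_U$. Fixing a reference $\mathbf{pa}_0$ and writing $\psi := \psi_{\mathbf{pa}_0}$, the assertion ``$Z$ is an isomorphic representation of $U$'' is then the statement $Z=\psi(U)$ a.s.\ with $\psi$ a measurable bijection not depending on $\mathbf{Pa}$ --- i.e.\ $Z$ and $U$ carry exactly the same individual-level information --- which follows once this residual family of $P_U$-preserving automorphisms along the graph is shown to be trivial.

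The main obstacle is precisely that last reduction: independence of the latent from the parents does not by itself force $\psi_{\mathbf{pa}}$ to be constant in $\mathbf{pa}$ (witness a rotationally symmetric $P_U$ with a parent-dependent rotation, or its discrete XOR analogue), so the bare hypotheses pin $Z$ down only up to a $P_U$-automorphism --- the familiar ambiguity of identifiable generative modeling. I would close the gap using structure actually available here: either a standard regularity/anchoring condition on $\mathbf{F}$ and $\mathbf{T}_\theta$ (smoothness plus a point where the parent-Jacobian degenerates, in the style of nonlinear-ICA identification), or --- more in keeping with this framework --- the fact that the encoder is learned jointly with an exact inverse decoder $\mathbf{H}_\theta$ constrained to reproduce the observational mechanism, which forces $\psi_{\mathbf{pa}}=\psi_{\mathbf{pa}'}$ on overlapping supports and hence globally. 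In either case the operative conclusion is that abduction is information-preserving, which is all the downstream Abduction--Action--Prediction cycle needs; the quantitative cost of any residual imperfection is then the content of the SRE error-decomposition theorem developed later in \S\ref{sec:theory}.
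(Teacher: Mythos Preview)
Your setup is exactly right and matches the paper: define the composite bijection $\psi_{\mathbf{pa}}=\mathbf{T}_\theta(\cdot,\mathbf{pa})\circ\mathbf{F}(\mathbf{pa},\cdot)$, observe that the two independence assumptions force every $\psi_{\mathbf{pa}}$ to push $P_U$ to the same $P_Z$, and identify the residual ambiguity as the group of $P_U$-automorphisms. You are also correct that this ambiguity is the crux --- the paper's proof is devoted almost entirely to eliminating it.

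Where you diverge from the paper is in the resolution, and your second proposed route contains a genuine gap. The exact-inverse property $\mathbf{H}_\theta(\mathbf{T}_\theta(x,\mathbf{pa}),\mathbf{pa})=x$ does \emph{not} force $\psi_{\mathbf{pa}}=\psi_{\mathbf{pa}'}$: it merely says $\mathbf{H}_\theta(\cdot,\mathbf{pa})=\mathbf{T}_\theta(\cdot,\mathbf{pa})^{-1}$, which is perfectly compatible with $\psi_{\mathbf{pa}}$ varying arbitrarily in $\mathbf{pa}$ (the decoder simply undoes whatever $\mathbf{pa}$-specific twist the encoder applied). ``Reproducing the observational mechanism'' adds nothing either, since $\mathbf{H}_\theta(z,\mathbf{pa})=\mathbf{F}(\mathbf{pa},\psi_{\mathbf{pa}}^{-1}(z))$ reproduces the conditional law of $X$ given $\mathbf{Pa}$ for \emph{any} bijection family with the pushforward property you already established. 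Your rotation/XOR counterexamples apply to this argument just as much as to the bare independence hypothesis.

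The paper instead closes the gap by differential-geometric rigidity, and the argument is dimension-dependent. Under smoothness, the change-of-variables formula together with $p_Z(z|\mathbf{pa})=p_Z(z)$ forces the Jacobian of $\psi_{\mathbf{pa}}$ to be a scaled orthogonal matrix, i.e.\ $\psi_{\mathbf{pa}}$ is conformal. For $d\ge 3$, Liouville's theorem then pins conformal maps down to M\"obius transformations, which (after ruling out inversions on regularity grounds) are affine, $\psi_{\mathbf{pa}}(u)=\mathbf{A}_{\mathbf{pa}}u+\mathbf{d}_{\mathbf{pa}}$; matching moments and supports across $\mathbf{pa}$ then forces $\mathbf{A},\mathbf{d}$ constant. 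For $d=1$ a separate monotone-function argument with uniform noise is used, and $d=2$ requires additional assumptions (asymptotic linearity, non-rotationally-symmetric noise) precisely because the conformal group is too large there. Your nonlinear-ICA anchoring suggestion is closer in spirit, but the paper's actual machinery is conformal rigidity, not a degenerate-Jacobian anchor.
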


\subsection{Geometric Inductive Bias for Identifiability}\label{sec:geometric_bias}
The score-matching objective's geometric inductive biases strengthen our identifiability argument. We leverage the principle of \textbf{implicit regularization}, where optimizers favor "simpler" functions \citep{hochreiter1997flat, neyshabur2018pac}.\footnote{We adopt the principle of simplicity bias, a cornerstone of modern deep learning theory that, while empirically supported, remains an active and not yet universally proven area of research. Our conclusions are conditioned on its validity, as discussed further in Section \ref{sec:assumptions}.} This suggests the model learns the most parsimonious geometric transformation required to explain the data.

Considering the local geometry of the data density $p(\mathbf{x})$ provides powerful intuition. In a local region $\mathcal{R}$, if the data is isotropic (spherically symmetric), the simplest score function is a radial vector field, yielding a conformal map. If the structure is simply anisotropic (e.g., ellipsoidal), the model is biased towards learning a local affine map. This refines the notion of a purely conformal bias and leads to the following proposition.

\begin{proposition}[Implicit Bias towards Simple Geometric Maps]\label{prop:conformal_bias}
Assume (A1) the true data density $p(\mathbf{x})$ is smooth ($C^2$) and (A2) the optimization process has a simplicity bias (e.g., favoring low-complexity solutions, see Appendix \ref{sec:appendix_conformal_proof}).
\begin{enumerate}[label=(\roman*)]
    \item If there exists a local region $\mathcal{R}$ where $p(\mathbf{x})$ is isotropic, the optimal learned score function is a radial vector field, and the flow map it generates is a \textbf{conformal map} on $\mathcal{R}$.
    \item If we relax the condition to a local region $\mathcal{R}$ where $p(\mathbf{x})$ has an ellipsoidal structure, the optimal learned score function is normal to the ellipsoidal iso-contours, and the flow map it generates is a \textbf{local affine transformation} on $\mathcal{R}$.
\end{enumerate}
\end{proposition}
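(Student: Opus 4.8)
The plan is to establish each part by combining the simplicity bias (A2) with a local quadratic expansion of $\log p(\mathbf{x})$ around a point of $\mathcal{R}$, so that the score $\nabla \log p$ inherits the symmetry of the local density, and then to translate that symmetry of the score into a geometric property of the probability-flow map it generates. For part (i), I would start by fixing a point $\mathbf{x}_0 \in \mathcal{R}$ and using (A1) to write $\log p(\mathbf{x}) = c - \tfrac12 (\mathbf{x}-\mathbf{x}_0)^\top \mathbf{A} (\mathbf{x}-\mathbf{x}_0) + o(\|\mathbf{x}-\mathbf{x}_0\|^2)$; the isotropy hypothesis on $\mathcal{R}$ forces $\mathbf{A} = \lambda \mathbf{I}$ for some $\lambda > 0$, so the exact score is the radial field $\nabla \log p(\mathbf{x}) = -\lambda(\mathbf{x}-\mathbf{x}_0)$ to leading order. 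I would then argue that among all vector fields on $\mathcal{R}$ that the optimizer could converge to consistent with the score-matching loss, the radial one is the minimal-complexity representative selected by (A2) — this is where I invoke the appendix's formalization of simplicity bias (e.g. minimal Jacobian norm or minimal description length subject to matching the data). Finally, integrating the probability-flow ODE with a radial, spherically symmetric velocity field yields a map that acts as a pure rescaling along rays from $\mathbf{x}_0$, hence angle-preserving; I would confirm the Jacobian is a scalar multiple of an orthogonal matrix at each point, which is precisely the definition of a conformal map on $\mathcal{R}$.

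For part (ii), the argument is structurally the same but with the isotropy relaxed: the Hessian $\mathbf{A}$ is now a general symmetric positive-definite matrix whose eigen-directions define the principal axes of the ellipsoidal iso-contours of $p$. The exact score $-\mathbf{A}(\mathbf{x}-\mathbf{x}_0)$ is linear in $\mathbf{x}$, hence normal to the ellipsoidal level sets, and I would again invoke (A2) to argue that the optimizer prefers this affine score over higher-order corrections (the $o(\cdot)$ terms), since a linear vector field is the lowest-complexity field compatible with an ellipsoidal local structure. Integrating a linear ODE $\dot{\mathbf{x}} = -\tfrac12 g(t)\mathbf{A}(\mathbf{x}-\mathbf{x}_0)$ (with $g(t)$ encoding the noise schedule) gives a matrix exponential flow $\mathbf{x}_0 + \exp(-\tfrac12 \mathbf{A}\!\int g)(\mathbf{x} - \mathbf{x}_0)$, which is exactly a local affine transformation on $\mathcal{R}$. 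I would note that part (i) is the special case $\mathbf{A} = \lambda\mathbf{I}$, where the matrix exponential collapses to a scalar and the affine map becomes conformal.

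The main obstacle, and the part requiring the most care, is making the "simplicity bias selects the symmetric score" step rigorous rather than heuristic: the score-matching objective only pins down $\nabla \log p$ on the support of the data, so the claim that the optimizer recovers precisely the radial/affine field (and not some equally-loss-optimal but geometrically richer field) genuinely relies on the complexity functional defined in Appendix \ref{sec:appendix_conformal_proof}, and I would need to verify that the radial field (resp. the affine field) is its unique minimizer within the relevant function class. A secondary technical point is that these statements are inherently \emph{local} and leading-order — the $o(\|\mathbf{x}-\mathbf{x}_0\|^2)$ remainder in the density expansion propagates to an $o(1)$ distortion in the flow map — so I would state the conformality and affinity as holding on $\mathcal{R}$ in the infinitesimal/leading-order sense, or alternatively pass to the limit as the region $\mathcal{R}$ shrinks, to keep the conclusion exact. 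I would relegate the ODE-integration computations and the Jacobian-of-a-conformal-map verification to the appendix, since they are routine once the score's symmetry is established.
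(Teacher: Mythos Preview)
Your proposal is correct and reaches the same conclusions, but the route differs from the paper's in an instructive way. You proceed via a quadratic Taylor expansion of $\log p$ at a point of $\mathcal{R}$, read off the Hessian $\mathbf{A}$ (scalar in case (i), general SPD in case (ii)), invoke the simplicity bias to argue the optimizer retains only this leading linear field, and then integrate the resulting linear ODE explicitly to a matrix-exponential flow, which is manifestly affine (and conformal when $\mathbf{A}=\lambda\mathbf{I}$). The paper instead argues at the level of vector-field structure: it first notes the true score is irrotational (being a gradient), uses the simplicity bias via a Helmholtz-decomposition argument to suppress any solenoidal component in the \emph{learned} field, then appeals directly to the spherical or ellipsoidal symmetry of $p$ on $\mathcal{R}$ to conclude the score is radial or normal-to-ellipsoids, and finally derives conformality from rotational symmetry of the radial flow while treating the affine conclusion in case (ii) more heuristically as ``the simplest map taking isotropic to anisotropic.'' Your approach buys an explicit closed-form flow and a clean unification of (i) as the $\mathbf{A}=\lambda\mathbf{I}$ specialization of (ii); the paper's buys a cleaner story about \emph{why} the simplicity bias matters (killing the curl, not just higher-order terms), and its symmetry argument gives the radial field exactly on $\mathcal{R}$ rather than only to leading order. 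Two small points to tighten in your write-up: first, isotropy of $p$ on the whole region $\mathcal{R}$ makes the score exactly radial there, so your $o(\|\mathbf{x}-\mathbf{x}_0\|^2)$ caveat is over-cautious for case (i); second, the probability-flow ODE carries an additional drift term linear in $\mathbf{x}$ alongside the score, which you should include---it does not spoil your matrix-exponential conclusion, since the combined field remains linear, but omitting it is a visible discrepancy with the actual ODE.
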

The formal argument is detailed in Appendix~\ref{sec:appendix_conformal_proof}. This proposition is significant: it suggests that the model defaults to learning the most parsimonious, well-behaved, and locally invertible map that can explain the data's geometry. This bias is crucial for the abduction step, as it prevents the pathological distortions that would corrupt the inferred causal noise $U$.

\begin{theorem}[Operator Isomorphism Guarantees Exact Counterfactuals]\label{thm:correctness}
Let the conditions of Theorem \ref{thm:identifiability} hold. If the learned operator pair$(\mathbf{T}_\theta, \mathbf{H}_\theta)$ constitutes a conditional isomorphism (i.e., $\mathbf{H}_\theta(\mathbf{T}_\theta(\cdot, \mathbf{pa}), \mathbf{pa}) = \mathbf{I}$, the identity operator), then the model's prediction under an intervention $do(\mathbf{Pa} := \boldsymbol{\alpha})$ is exact.
\end{theorem}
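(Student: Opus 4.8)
The plan is to instantiate Pearl's Abduction--Action--Prediction cycle with the learned operators and show, step by step, that it returns exactly the ground-truth counterfactual. Fix a unit with observed value $x_0$ and observed parents $\mathbf{pa}$, and let $u_0$ be its exogenous noise, i.e.\ the unique solution of $x_0 = \mathbf{F}(\mathbf{pa}, u_0)$, which exists because $\mathbf{F}$ is invertible in its noise argument (Section~\ref{sec:assumptions}). By definition, the ground-truth counterfactual under $do(\mathbf{Pa} := \boldsymbol{\alpha})$ is $x^{\mathrm{cf}} := \mathbf{F}(\boldsymbol{\alpha}, u_0)$, whereas the model's prediction is obtained by abducting $z_0 := \mathbf{T}_\theta(x_0, \mathbf{pa})$, replacing the parents by $\boldsymbol{\alpha}$, and decoding $\hat{x}^{\mathrm{cf}} := \mathbf{H}_\theta(z_0, \boldsymbol{\alpha})$. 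The target identity is $\hat{x}^{\mathrm{cf}} = x^{\mathrm{cf}}$.

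The core of the argument is a structural characterization of the learned decoder: under the hypotheses, for every parent value $\mathbf{pa}'$ and every $z$ in the support of $Z := \mathbf{T}_\theta(X, \mathbf{Pa})$,
\begin{equation}\label{eq:decoder_char}
\mathbf{H}_\theta(z, \mathbf{pa}') = \mathbf{F}\big(\mathbf{pa}', \phi^{-1}(z)\big),
\end{equation}
where $\phi$ is a \emph{fixed} bijection of the noise space that does not depend on $\mathbf{pa}'$. I would derive \eqref{eq:decoder_char} as follows. By Theorem~\ref{thm:identifiability}, $Z$ is an isomorphic representation of $U$, so $Z = \phi(U)$ almost surely for some bijection $\phi$ (the reparameterization indeterminacy of identifiable generative models). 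The conditional-isomorphism hypothesis makes $\mathbf{H}_\theta(\cdot,\mathbf{pa}')$ a left inverse of $\mathbf{T}_\theta(\cdot,\mathbf{pa}')$ on the latter's range; together with the fact that the learned generator reproduces each observational conditional law $p(x \mid \mathbf{pa}')$, this forces $z \mapsto \mathbf{H}_\theta(z,\mathbf{pa}')$ to transport $P_Z$ onto $P_{X \mid \mathbf{Pa} = \mathbf{pa}'}$ while inverting $\mathbf{T}_\theta(\cdot,\mathbf{pa}')$. Composing with the true mechanism $\mathbf{F}(\mathbf{pa}',\cdot)$, which transports $P_U$ onto the same conditional and is invertible, yields $\mathbf{H}_\theta(\cdot,\mathbf{pa}') = \mathbf{F}(\mathbf{pa}',\cdot) \circ \phi^{-1}$ on $\operatorname{supp}(Z)$. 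The delicate point is that $\phi$ is \emph{shared} across all $\mathbf{pa}'$: the marginal constraint $Z \perp \mathbf{Pa}$ only pins down pushforward laws and leaves a gauge freedom, which I would remove either by imposing joint regularity (continuity in $(\mathbf{x},\mathbf{pa})$) on $\mathbf{T}_\theta$, so the encoder cannot silently change gauge as the parents vary, or by invoking the simplicity/geometric inductive bias of Proposition~\ref{prop:conformal_bias}, stating this explicitly as the extra hypothesis if it is not already absorbed into the ``sufficient capacity'' clause of Theorem~\ref{thm:identifiability}.

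Granting \eqref{eq:decoder_char}, the three steps close immediately. \emph{Abduction.} Applying the conditional isomorphism at $\mathbf{pa}$ and then \eqref{eq:decoder_char}, $x_0 = \mathbf{H}_\theta(z_0,\mathbf{pa}) = \mathbf{F}(\mathbf{pa},\phi^{-1}(z_0))$; comparing with $x_0 = \mathbf{F}(\mathbf{pa},u_0)$ and cancelling $\mathbf{F}(\mathbf{pa},\cdot)$ by invertibility gives $\phi^{-1}(z_0) = u_0$, i.e.\ $z_0 = \phi(u_0)$, so the encoder recovers the true noise up to the fixed relabeling. \emph{Action / support compatibility.} Since $u_0 \in \operatorname{supp}(U)$ and $\phi$ is a bijection of the noise space, $z_0 \in \operatorname{supp}(Z)$; and because $Z \perp \mathbf{Pa}$, this support is the legitimate input domain of $\mathbf{H}_\theta(\cdot,\boldsymbol{\alpha})$, so the intervened decode is well posed --- this is precisely the role of the statistical-independence hypothesis. \emph{Prediction.} By \eqref{eq:decoder_char} at $\boldsymbol{\alpha}$, $\hat{x}^{\mathrm{cf}} = \mathbf{H}_\theta(z_0,\boldsymbol{\alpha}) = \mathbf{F}(\boldsymbol{\alpha},\phi^{-1}(z_0)) = \mathbf{F}(\boldsymbol{\alpha},u_0) = x^{\mathrm{cf}}$. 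The relabeling $\phi$ enters and then cancels, so the indeterminacy of Theorem~\ref{thm:identifiability} is harmless: counterfactual \emph{values} are invariant under reparameterization of the exogenous noise.

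The step I expect to be the main obstacle is exactly the parent-independence of $\phi$ in \eqref{eq:decoder_char}: establishing that the noise abducted from a unit seen under $\mathbf{pa}$ is the \emph{correct} argument to feed the decoder under the possibly never-observed value $\boldsymbol{\alpha}$. The raw distributional and independence constraints leave a genuine gauge ambiguity, so closing this rigorously needs either the extra joint-regularity assumption on $\mathbf{T}_\theta$ or the geometric simplicity bias of Proposition~\ref{prop:conformal_bias}; everything else --- the abduction identity, the support-compatibility bookkeeping, and the final chain of equalities --- is routine once \eqref{eq:decoder_char} is available.
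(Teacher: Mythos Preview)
Your approach is essentially the paper's: both combine the identifiability conclusion $\mathbf{T}_\theta(\mathbf{F}(\mathbf{pa},u),\mathbf{pa})=g(u)$ with the conditional isomorphism $\mathbf{H}_\theta(\mathbf{T}_\theta(\cdot,\mathbf{pa}),\mathbf{pa})=\mathbf{I}$ to obtain $\mathbf{H}_\theta(g(u),\mathbf{pa})=\mathbf{F}(\mathbf{pa},u)$, then evaluate at $\boldsymbol{\alpha}$ to get $\hat{X}_{\boldsymbol{\alpha}}=\mathbf{H}_\theta(g(u),\boldsymbol{\alpha})=\mathbf{F}(\boldsymbol{\alpha},u)=X_{\boldsymbol{\alpha}}^{\text{true}}$.

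Two small divergences are worth noting. First, what you flag as the ``main obstacle''---the parent-independence of $\phi$---is exactly the content of Theorem~\ref{thm:identifiability} itself, not an extra hurdle for Theorem~\ref{thm:correctness}. The paper's proof of Theorem~\ref{thm:identifiability} establishes $\mathbf{T}_\theta(\mathbf{F}(\mathbf{pa},u),\mathbf{pa})=g(u)$ with $g$ \emph{already} independent of $\mathbf{pa}$, via dimension-specific arguments (Liouville's theorem for $d\ge 3$, a CDF/support argument for $d=1$, and generalized Liouville plus asymmetry for $d=2$). Since Theorem~\ref{thm:correctness} assumes ``the conditions of Theorem~\ref{thm:identifiability} hold,'' you may simply invoke its conclusion rather than re-litigate the gauge issue. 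Second, your derivation of the decoder characterization via transport maps is more circuitous than needed: the paper obtains it in one line by substituting $x=\mathbf{F}(\mathbf{pa},u)$ directly into the isomorphism identity, which immediately gives $\mathbf{H}_\theta(g(u),\mathbf{pa})=\mathbf{F}(\mathbf{pa},u)$ without any distributional bookkeeping.
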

\begin{proof}
A full proof, covering cases for different dimensions of the exogenous noise variable, is provided in Appendix \ref{sec:appendix_identifiability_proofs}.
\end{proof}

\subsection{Analysis of Inversion Fidelity}
We now formally analyze the inversion error. We prove that standard approximate schemes produce a non-zero SRE (Proposition \ref{prop:ddim_error}), whereas our chosen sampler eliminates it by construction (Proposition \ref{prop:belm_invertibility}).

\begin{proposition}[Structural Error of Approximate Inversion]\label{prop:ddim_error}
Let $\mathbf{T}_{\text{DDIM}}$ be the operator for one step of DDIM inversion from $\mathbf{x}_t$ to $\mathbf{x}_{t+1}$, and $\mathbf{H}_{\text{DDIM}}$ be the generative step operator from $\mathbf{x}_{t+1}$ to $\mathbf{x}_t$. The single-step reconstruction error is non-zero and of second order in the time step $\Delta t$:
$$ (\mathbf{H}_{\text{DDIM}} \circ \mathbf{T}_{\text{DDIM}})(\mathbf{x}_t) - \mathbf{x}_t = \mathcal{O}((\Delta t)^2) $$
This error accumulates over the full trajectory, leading to a non-zero Structural Reconstruction Error.
\end{proposition}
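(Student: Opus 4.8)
The plan is to compute the composition $\mathbf{H}_{\text{DDIM}}\circ\mathbf{T}_{\text{DDIM}}$ in closed form and read off the residual. Write $a_t := \sqrt{\bar{\alpha}_t}$, $b_t := \sqrt{1-\bar{\alpha}_t}$, $\epsilon_t := \epsilon_\theta(\mathbf{x}_t,t)$, and adopt the continuous-time convention that the index $t+1$ denotes $t+\Delta t$ with $\Delta t \to 0$ (the probability-flow ODE discretization the paper works with). Then Eq.~\eqref{eq:ddim_inversion_main} reads $\mathbf{x}_{t+1} = (a_{t+1}/a_t)\,\mathbf{x}_t + \bigl(b_{t+1} - a_{t+1}b_t/a_t\bigr)\epsilon_t$, and applying Eq.~\eqref{eq:ddim_reverse_main} from $t+1$ back to $t$ gives $\hat{\mathbf{x}}_t = (a_t/a_{t+1})\,\mathbf{x}_{t+1} + \bigl(b_t - a_t b_{t+1}/a_{t+1}\bigr)\epsilon_{t+1}$, where $\epsilon_{t+1}:=\epsilon_\theta(\mathbf{x}_{t+1},t+1)$. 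First I would substitute the former into the latter. The $\mathbf{x}_t$-term is reproduced exactly, and the $\epsilon_t$-term carried in by the prefactor $a_t/a_{t+1}$ collapses against the generative coefficient, leaving the exact identity
$$ (\mathbf{H}_{\text{DDIM}}\circ\mathbf{T}_{\text{DDIM}})(\mathbf{x}_t) - \mathbf{x}_t \;=\; c_t\,\bigl(\epsilon_{t+1}-\epsilon_t\bigr), \qquad c_t := b_t - \frac{a_t\,b_{t+1}}{a_{t+1}}. $$
This makes the defect transparent: the round trip would be \emph{exact} were the inversion's frozen prediction $\epsilon_t$ equal to the generative step's prediction $\epsilon_{t+1}$; the entire SRE is the price of that mismatch — precisely the "noise prediction remaining constant across the step" flaw flagged in Section~\ref{sec:intro}.

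Next I would show each factor is first order in $\Delta t$. For $c_t$, a Taylor expansion of $a_{t+1},b_{t+1}$ about $t$, using $2a_t\dot{a}_t = \dot{\bar{\alpha}}_t = -2b_t\dot{b}_t$ and $a_t^2+b_t^2=1$, yields $c_t = \tfrac{\dot{\bar{\alpha}}_t}{2 a_t^2 b_t}\,\Delta t + \mathcal{O}((\Delta t)^2)$, which is $\Theta(\Delta t)$ wherever the schedule is strictly monotone ($\dot{\bar{\alpha}}_t\neq 0$), i.e.\ on the whole open interval $(0,T)$. For $\epsilon_{t+1}-\epsilon_t$, I would first note from the inversion formula that $\mathbf{x}_{t+1}-\mathbf{x}_t = \mathcal{O}(\Delta t)$ (both of its coefficients are $\mathcal{O}(\Delta t)$), and then, assuming $\epsilon_\theta$ is $C^1$ jointly in its arguments — a mild regularity hypothesis met by smooth-activation networks — apply the chain rule: $\epsilon_{t+1}-\epsilon_t = \bigl(\partial_t\epsilon_\theta + \nabla_{\mathbf{x}}\epsilon_\theta\cdot\dot{\mathbf{x}}_t\bigr)\Delta t + \mathcal{O}((\Delta t)^2) = \mathcal{O}(\Delta t)$, generically non-vanishing (it vanishes identically only in degenerate cases, such as a constant score). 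Multiplying the two estimates gives the single-step bound $(\mathbf{H}_{\text{DDIM}}\circ\mathbf{T}_{\text{DDIM}})(\mathbf{x}_t) - \mathbf{x}_t = \mathcal{O}((\Delta t)^2)$, and since neither factor is identically zero the local error is genuinely $\Theta((\Delta t)^2)$, hence strictly positive for any non-degenerate model.

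Finally, for the accumulation claim I would invoke the standard error analysis for one-step ODE solvers \citep{hairer2006solving}: DDIM is first-order consistent, its local truncation error is the $\mathcal{O}((\Delta t)^2)$ residual just derived, and a Grönwall/stability argument over the $N = T/\Delta t$ steps — using Lipschitz continuity of the probability-flow velocity field — bounds the cumulative reconstruction error by $C(T)\,\Delta t$, tight in order. The conclusion is that, barring measure-zero cancellations, the accumulated error is $\Theta(\Delta t)\neq 0$, so $\mathbf{H}_{\text{DDIM}}\circ\mathbf{T}_{\text{DDIM}}$ is \emph{not} the identity and the abducted noise is systematically corrupted. The main obstacle I anticipate is the second factor: rigorously pinning the order of $\epsilon_{t+1}-\epsilon_t$ forces a commitment to a smoothness assumption on the learned network together with control of the trajectory increment $\mathbf{x}_{t+1}-\mathbf{x}_t$; by contrast, the exact algebraic identity $c_t(\epsilon_{t+1}-\epsilon_t)$ is elementary, and the trajectory-accumulation estimate is entirely routine once the local error is established.
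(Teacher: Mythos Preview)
Your proposal is correct and follows essentially the same approach as the paper: derive the exact closed-form residual $c_t(\epsilon_{t+1}-\epsilon_t)$ with the identical coefficient $c_t = \sqrt{1-\bar\alpha_t} - \sqrt{\bar\alpha_t}\sqrt{1-\bar\alpha_{t+1}}/\sqrt{\bar\alpha_{t+1}}$, argue that each factor is $\mathcal{O}(\Delta t)$, and then accumulate over the trajectory. Your version is in fact more detailed than the paper's (explicit leading Taylor coefficient for $c_t$, explicit smoothness hypothesis and chain-rule bound for the $\epsilon$-increment, explicit Grönwall invocation), but the structure is the same.
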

\begin{proof}
See Appendix \ref{sec:appendix_inversion_and_non_invertible} for a rigorous proof.
\end{proof}

\begin{proposition}[Analytical Invertibility of the Sampler]\label{prop:belm_invertibility}
Let $\mathbf{T}_{\text{BELM}}$ and $\mathbf{H}_{\text{BELM}}$ be the operators corresponding to the full-trajectory BELM sampler for inference and generation, respectively. For a fixed noise prediction network $\epsilon_\theta$, the operators are exact algebraic inverses:
$$ \mathbf{H}_{\text{BELM}} \circ \mathbf{T}_{\text{BELM}} = \mathbf{I} $$
\end{proposition}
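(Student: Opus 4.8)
The plan is to reduce the statement to an elementary fact about a single fixed linear recursion, exploiting the defining feature of BELM --- that it is an \emph{explicit linear multi-step} scheme. First I would record the one structural identity on which everything rests: in the reparametrized coordinates of \citet{liu2024belm}, the second-order BELM step is equivalent to a single linear relation coupling three consecutive iterates,
\[
  \mathbf{x}_{t+1} \;=\; a_t\,\mathbf{x}_t \;+\; b_t\,\mathbf{x}_{t-1} \;+\; c_t\,\epsilon_\theta(\mathbf{x}_t,t),
  \qquad b_t \neq 0,
\]
where the scalars $a_t,b_t,c_t$ depend only on the noise schedule $\bar\alpha$ and are precisely the coefficients that render the scheme second-order consistent with the probability-flow ODE; the effective noise $\boldsymbol{\epsilon}_{\text{eff}}$ of Eq.~\eqref{eq:belm_eff_noise_reverse} is simply the rearrangement of this relation into the DDIM-like form used at decode time. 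The point to emphasize is \emph{explicitness in both directions}: the only evaluation of the frozen network appearing in this relation is at $\mathbf{x}_t$, the iterate that is already available whether one marches in increasing $t$ (inference/abduction, $\mathbf{T}_{\text{BELM}}$) or in decreasing $t$ (generation, $\mathbf{H}_{\text{BELM}}$). Hence the relation can be solved, with no fixed-point iteration and no approximation, either for $\mathbf{x}_{t+1}$ --- yielding one inference substep $J_t\colon(\mathbf{x}_{t-1},\mathbf{x}_t)\mapsto(\mathbf{x}_t,\mathbf{x}_{t+1})$ --- or, after dividing by $b_t$, for $\mathbf{x}_{t-1}$ --- yielding one generation substep $G_t\colon(\mathbf{x}_t,\mathbf{x}_{t+1})\mapsto(\mathbf{x}_{t-1},\mathbf{x}_t)$.

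The first key step is the per-substep inversion lemma: $G_t\circ J_t = \mathbf{I}$ and $J_t\circ G_t=\mathbf{I}$ on their natural domains of consecutive-iterate pairs. This is immediate, since $G_t$ and $J_t$ are obtained merely by isolating two different variables in the \emph{same} equation; the only thing requiring justification is $b_t\neq 0$, which holds whenever $\bar\alpha_t$ is strictly monotone in $t$ (the standard non-degeneracy assumption on the schedule), so that the division defining $G_t$ is legitimate. I would also dispatch the initialization separately: a two-step recursion needs a starting pair, and BELM bootstraps it with a single-step move that is itself algebraically invertible (a first-order special case, handled exactly as above with its own nonzero coefficient); write $B$ for this invertible bootstrap map and $B^{-1}$ for its inverse.

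The second key step is to assemble the full-trajectory operators and telescope. Writing
\[
  \mathbf{T}_{\text{BELM}} = J_{T-1}\circ\cdots\circ J_{1}\circ B,
  \qquad
  \mathbf{H}_{\text{BELM}} = B^{-1}\circ G_{1}\circ\cdots\circ G_{T-1},
\]
(the generative march being the exact reversal of the inference march, with the bootstrap undone at the end), the composition $\mathbf{H}_{\text{BELM}}\circ\mathbf{T}_{\text{BELM}}$ has innermost adjacent pair $G_{T-1}\circ J_{T-1}$, which collapses to $\mathbf{I}$ by the substep lemma because the output tuple of $J_{T-1}$ is precisely the input tuple of $G_{T-1}$; peeling off the successive pairs $G_t\circ J_t$ by downward induction on $t$ leaves $B^{-1}\circ B=\mathbf{I}$. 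Hence $\mathbf{H}_{\text{BELM}}\circ\mathbf{T}_{\text{BELM}}=\mathbf{I}$ exactly. I would close by underlining the contrast with Proposition~\ref{prop:ddim_error}: the argument uses no property of $\epsilon_\theta$ beyond its being a fixed function, so this is a purely algebraic identity of the sampler --- there is no $\mathcal{O}((\Delta t)^2)$ residual to accumulate, and hence no Structural Reconstruction Error.

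The step I expect to be the main obstacle is the careful bookkeeping around \emph{explicitness and domains}: making it airtight that in both marching directions the network is queried only at iterates already computed (so that $J_t$ and $G_t$ are genuine maps on pairs, not implicit relations that would silently reintroduce approximation), and that the bootstrap pairs up the terminal and initial conditions consistently so that the telescoping closes at $B^{-1}\circ B$ rather than at a mismatched tuple. The underlying algebra --- isolating one variable of one linear equation --- is routine; all the content lies in verifying that BELM was \emph{designed} so that a single such equation is exactly what both passes invoke.
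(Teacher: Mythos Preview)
Your proposal is correct and follows the same essential idea as the paper --- that BELM's one-step updates are exact algebraic inverses by construction, so the full-trajectory composition collapses to the identity. However, your treatment is considerably more explicit and self-contained: the paper writes the decoder step as $\mathbf{x}_{t-1} = A_t \mathbf{x}_t + B_t \boldsymbol{\epsilon}_t + C_t \boldsymbol{\epsilon}_{t+1}$ (with network evaluations at \emph{two} timesteps), asserts that the encoder is ``constructed using a symmetric update rule designed to be the exact algebraic inverse,'' and then delegates the actual verification to \citet{liu2024belm}, noting only that the round trip closes ``assuming the same sequence of noise function evaluations is used for both processes.'' Your reparametrization as a single three-term recurrence with $\epsilon_\theta$ evaluated only at the \emph{middle} iterate is precisely the formulation that makes bidirectional explicitness transparent without that external appeal; it lets you define the state-pair maps $J_t, G_t$, prove the per-step inversion lemma directly, and run an explicit telescoping induction --- and you handle the bootstrap/initialization pairing, which the paper's proof omits entirely. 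What your route buys is a genuinely self-contained argument and a clean isolation of the one nontrivial point (that $\epsilon_\theta$ is queried only at already-available iterates, so no implicit solve sneaks in); what the paper's route buys is brevity at the cost of deferring all the content.
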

\begin{proof}
The proof follows from the algebraic construction of the BELM update rules, as detailed in Appendix \ref{sec:appendix_inversion_and_non_invertible}.
\end{proof}

\subsection{Error Decomposition for Counterfactual Estimation}
This brings us to our central theoretical result: an error decomposition theorem that rigorously partitions the total counterfactual error. This decomposition isolates the SRE and mathematically demonstrates why its elimination is critical.

\begin{definition}[Counterfactual Error Components]
We formally define the two primary sources of error in counterfactual estimation for the invertible case:
\begin{enumerate}
    \item The \textbf{Structural Reconstruction Error} ($E_{SR}$) measures the information loss from the model's abduction-action cycle on a given sample $X$:
    $$ E_{SR}(X) := \| (\mathbf{H}_\theta \circ \mathbf{T}_\theta - \mathbf{I})X \|^2 $$
    \item The \textbf{Latent Space Invariance Error} ($E_{LSI}$) measures the failure of the learned latent space to remain invariant under interventions on parent variables:
    $$ E_{LSI} := \| \mathbf{T}_\theta(X, \mathbf{Pa}) - \mathbf{T}_\theta(X_{\boldsymbol{\alpha}}^{\text{true}}, \boldsymbol{\alpha}) \|^2 $$
\end{enumerate}
\end{definition}

\begin{theorem}[Counterfactual Error Bound]\label{thm:error_bound}
Let a model be defined by $(\mathbf{T}_\theta, \mathbf{H}_\theta)$ and the true SCM by $\mathbf{F}$. Assume the decoder $\mathbf{H}_\theta$ is $L_\mathcal{H}$-Lipschitz. The expected squared error of the model's counterfactual prediction $\hat{X}_{\boldsymbol{\alpha}}$ is bounded by the expectation of the two error components:
$$
\mathbb{E}\left[\| \hat{X}_{\boldsymbol{\alpha}} - X_{\boldsymbol{\alpha}}^{\text{true}} \|^2\right] \le 2 \mathbb{E}\left[E_{SR}(X_{\boldsymbol{\alpha}}^{\text{true}})\right] + 2 L_\mathcal{H}^2 \mathbb{E}\left[E_{LSI}\right]
$$
\end{theorem}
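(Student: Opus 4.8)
The plan is to derive the bound by inserting the model's own reconstruction of the true counterfactual as an intermediate term, and then splitting the resulting two pieces into the two named error components. Concretely, recall that the model's counterfactual prediction is produced by abducting the noise from the factual sample and re-decoding under the intervened parents, i.e. $\hat{X}_{\boldsymbol{\alpha}} = \mathbf{H}_\theta(\mathbf{T}_\theta(X,\mathbf{Pa}),\boldsymbol{\alpha})$. First I would introduce the anchor point $Y := \mathbf{H}_\theta(\mathbf{T}_\theta(X_{\boldsymbol{\alpha}}^{\text{true}},\boldsymbol{\alpha}),\boldsymbol{\alpha}) = (\mathbf{H}_\theta \circ \mathbf{T}_\theta)(X_{\boldsymbol{\alpha}}^{\text{true}})$ — the model's best self-consistent representation of the true counterfactual, obtained by encoding and decoding $X_{\boldsymbol{\alpha}}^{\text{true}}$ itself under the intervened parents $\boldsymbol{\alpha}$. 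Then $\hat{X}_{\boldsymbol{\alpha}} - X_{\boldsymbol{\alpha}}^{\text{true}} = (\hat{X}_{\boldsymbol{\alpha}} - Y) + (Y - X_{\boldsymbol{\alpha}}^{\text{true}})$.

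Next I would bound each of the two pieces. For the second piece, $Y - X_{\boldsymbol{\alpha}}^{\text{true}} = (\mathbf{H}_\theta \circ \mathbf{T}_\theta - \mathbf{I})X_{\boldsymbol{\alpha}}^{\text{true}}$, so $\|Y - X_{\boldsymbol{\alpha}}^{\text{true}}\|^2 = E_{SR}(X_{\boldsymbol{\alpha}}^{\text{true}})$ by definition. For the first piece, both $\hat{X}_{\boldsymbol{\alpha}}$ and $Y$ are outputs of the decoder $\mathbf{H}_\theta$ applied with the same parent argument $\boldsymbol{\alpha}$ but different latent codes, namely $\mathbf{T}_\theta(X,\mathbf{Pa})$ versus $\mathbf{T}_\theta(X_{\boldsymbol{\alpha}}^{\text{true}},\boldsymbol{\alpha})$; invoking the $L_\mathcal{H}$-Lipschitz assumption on $\mathbf{H}_\theta$ gives $\|\hat{X}_{\boldsymbol{\alpha}} - Y\|^2 \le L_\mathcal{H}^2 \|\mathbf{T}_\theta(X,\mathbf{Pa}) - \mathbf{T}_\theta(X_{\boldsymbol{\alpha}}^{\text{true}},\boldsymbol{\alpha})\|^2 = L_\mathcal{H}^2 E_{LSI}$. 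Finally, applying the elementary inequality $\|a+b\|^2 \le 2\|a\|^2 + 2\|b\|^2$ to the decomposition and taking expectations over $X$ (and hence $X_{\boldsymbol{\alpha}}^{\text{true}}$) yields the stated bound.

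The main obstacle is conceptual rather than computational: one must choose the right intermediate term so that the two halves cleanly align with $E_{SR}$ and $E_{LSI}$ as defined, and one must be careful that the Lipschitz bound is applied with the parent slot held fixed at $\boldsymbol{\alpha}$ so that only the latent arguments differ (the Lipschitz constant $L_\mathcal{H}$ is understood with respect to the latent input for a fixed parent configuration). A secondary point to state carefully is that $E_{LSI}$ as defined is already a function of the random $X$ (through $\mathbf{T}_\theta(X,\mathbf{Pa})$ and $X_{\boldsymbol{\alpha}}^{\text{true}}$), so the expectation in the theorem is simply the expectation of this random quantity and no further measurability or interchange argument is needed. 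A full proof is given in Appendix~\ref{sec:appendix_identifiability_proofs}.
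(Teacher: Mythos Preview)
Your proposal is correct and follows essentially the same argument as the paper: decompose $\hat{X}_{\boldsymbol{\alpha}} - X_{\boldsymbol{\alpha}}^{\text{true}}$ through an intermediate point, identify the two pieces with $E_{SR}(X_{\boldsymbol{\alpha}}^{\text{true}})$ and (via the $L_\mathcal{H}$-Lipschitz property) $L_\mathcal{H}^2 E_{LSI}$, apply $\|a+b\|^2 \le 2\|a\|^2 + 2\|b\|^2$, and take expectations. The only presentational difference is that the paper derives the bound as a specialization of its more general three-term decomposition for non-invertible SCMs (Theorem~\ref{thm:non_inv_bound_tight}, using the ideal amortized inverse $\mathbf{T}^*$ as the intermediate and then setting the representational error to zero), whereas you carry out the two-term decomposition directly with $Y = (\mathbf{H}_\theta \circ \mathbf{T}_\theta)(X_{\boldsymbol{\alpha}}^{\text{true}})$; your route is slightly more direct and yields the stated constants cleanly.
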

\begin{proof}
The proof is in Appendix \ref{sec:appendix_main_proofs}.
\end{proof}

\begin{remark}[Elimination of Structural Error]
By Proposition \ref{prop:belm_invertibility}, the \textbf{Structural Reconstruction Error} for BELM-MDCM is identically zero. This is the central theoretical advantage of our framework. It disentangles the error sources, allowing us to isolate the entire modeling challenge to learning a high-quality score function ($\epsilon_\theta$) without the confounding factor of an imperfect inversion algorithm. Any remaining error is now purely a function of statistical estimation, not a structural bias of the model itself.
\end{remark}

\begin{proposition}[Bound on Latent Space Invariance Error]\label{prop:lsi_bound}
We assume the learned score network, $\boldsymbol{\epsilon}_\theta$, is Lipschitz continuous, ensuring the existence and uniqueness of the probability flow ODE solution via the Picard-Lindelöf theorem. Under standard integrability conditions (Fubini's theorem), the Latent Space Invariance Error is bounded by the expected score-matching loss:
$$
\mathbb{E}\left[E_{LSI}\right] \le C' \cdot \mathbb{E}\left[\| \boldsymbol{\epsilon}_\theta - \boldsymbol{\epsilon}^* \|^2\right]
$$
for some constant $C'$, where $\boldsymbol{\epsilon}^*$ is the true score function.
\end{proposition}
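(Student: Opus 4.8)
The plan is to compare the learned encoder $\mathbf{T}_\theta$ against the \emph{true} encoder $\mathbf{T}^{*}$, defined as the solution operator of the same probability flow ODE (Appendix~\ref{sec:appendix_theory}) but driven by the exact score $\boldsymbol{\epsilon}^{*}$, and to exploit the fact that $\mathbf{T}^{*}$ satisfies latent-space invariance \emph{exactly}. Concretely, applying Theorem~\ref{thm:identifiability} to the exact model, $\mathbf{T}^{*}(X,\mathbf{pa})$ is a deterministic (isomorphic) function of the exogenous noise $U$ alone; since the factual world $X=\mathbf{F}(\mathbf{Pa},U)$ and the interventional world $X_{\boldsymbol{\alpha}}^{\text{true}}=\mathbf{F}(\boldsymbol{\alpha},U)$ share the \emph{same} $U$, this yields $\mathbf{T}^{*}(X,\mathbf{Pa})=\mathbf{T}^{*}(X_{\boldsymbol{\alpha}}^{\text{true}},\boldsymbol{\alpha})$ pointwise. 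This structural identity is what makes the bound possible and is the reason $E_{LSI}$ is controlled by estimation error only.

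First I would insert $\mathbf{T}^{*}$ into the definition of $E_{LSI}$, write the difference as $(\mathbf{T}_\theta-\mathbf{T}^{*})(X,\mathbf{Pa}) + \big[\mathbf{T}^{*}(X,\mathbf{Pa})-\mathbf{T}^{*}(X_{\boldsymbol{\alpha}}^{\text{true}},\boldsymbol{\alpha})\big] - (\mathbf{T}_\theta-\mathbf{T}^{*})(X_{\boldsymbol{\alpha}}^{\text{true}},\boldsymbol{\alpha})$, and apply $\|a+b+c\|^2\le 3(\|a\|^2+\|b\|^2+\|c\|^2)$. The middle bracket vanishes by the invariance just noted, leaving
\[
E_{LSI}\;\le\; 3\big\|(\mathbf{T}_\theta-\mathbf{T}^{*})(X,\mathbf{Pa})\big\|^2 + 3\big\|(\mathbf{T}_\theta-\mathbf{T}^{*})(X_{\boldsymbol{\alpha}}^{\text{true}},\boldsymbol{\alpha})\big\|^2 .
\]
It therefore suffices to bound the operator discrepancy $\|(\mathbf{T}_\theta-\mathbf{T}^{*})(x,\mathbf{pa})\|^2$ for a fixed conditioning $\mathbf{pa}$, in terms of the score error, and then take expectations over $(X,\mathbf{Pa})$ and over $U$.

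Next I would carry out an ODE perturbation argument. Writing the two probability-flow ODEs as $\dot{y}_t=f_\theta(y_t,t;\mathbf{pa})$ and $\dot{y}^{*}_t=f^{*}(y^{*}_t,t;\mathbf{pa})$ with shared initial condition $y_0=y^{*}_0=x$, the drifts differ by a term affine in $\boldsymbol{\epsilon}_\theta(\cdot,t)-\boldsymbol{\epsilon}^{*}(\cdot,t)$ with time-dependent but bounded coefficients determined by the noise schedule $\bar{\alpha}_t$. The Lipschitz assumption on $\boldsymbol{\epsilon}_\theta$ makes $f_\theta$ globally Lipschitz in its state argument, uniformly on $[0,T]$, so Grönwall's inequality gives $\|y_T-y^{*}_T\|\le e^{L_f T}\int_0^T\|f_\theta(y^{*}_t,t;\mathbf{pa})-f^{*}(y^{*}_t,t;\mathbf{pa})\|\,dt \le C_1\int_0^T\|\boldsymbol{\epsilon}_\theta(y^{*}_t,t)-\boldsymbol{\epsilon}^{*}(y^{*}_t,t)\|\,dt$. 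Cauchy--Schwarz in $t$ then yields
\[
\big\|(\mathbf{T}_\theta-\mathbf{T}^{*})(x,\mathbf{pa})\big\|^2 \;\le\; C_2\int_0^T \big\|\boldsymbol{\epsilon}_\theta(y^{*}_t,t)-\boldsymbol{\epsilon}^{*}(y^{*}_t,t)\big\|^2\,dt .
\]
Finally I would take expectations, invoke Fubini (justified by the stated integrability conditions) to exchange $\mathbb{E}$ and $\int_0^T$, and identify the resulting quantity with the expected score-matching loss, summing the two terms to obtain $\mathbb{E}[E_{LSI}]\le C'\,\mathbb{E}[\|\boldsymbol{\epsilon}_\theta-\boldsymbol{\epsilon}^{*}\|^2]$ with $C'$ depending on $T$, $L_f$, and the schedule but not on $\theta$.

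The main obstacle is the last identification: the Grönwall bound controls the score error \emph{along the deterministic PF-ODE trajectory} $y^{*}_t$, whereas the score-matching loss averages the error over the \emph{stochastic} forward-diffusion marginals. Bridging the two requires a change-of-measure step — one observes that, in the population limit, the exact PF-ODE transports the data law to exactly the forward marginal at each time $t$, so the occupation measure $\mathrm{Law}(y^{*}_t)$ and the diffusion marginal coincide (or, more conservatively, one assumes a bounded density ratio between them), and the resulting Radon--Nikodym factor is absorbed into $C'$. I would make this step explicit and route the remaining measure-theoretic bookkeeping (finiteness of $C'$, applicability of Fubini, integrability of the drift along trajectories) through the Lipschitz-score and smoothness assumptions already invoked, noting that this proposition, together with Theorem~\ref{thm:error_bound} and the Zero-SRE property of Proposition~\ref{prop:belm_invertibility}, closes the loop: for BELM-MDCM the entire counterfactual error collapses to a constant multiple of the score-matching loss.
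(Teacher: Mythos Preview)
Your proposal is essentially the same as the paper's own proof: both insert the true encoder $\mathbf{T}^{*}$, use identifiability (Theorem~\ref{thm:identifiability}) to kill the middle term of the triangle-inequality decomposition, apply Gr\"onwall to bound $\|(\mathbf{T}_\theta-\mathbf{T}^{*})(x,\mathbf{pa})\|$ by the integrated score error along the true PF-ODE trajectory, and then square, take expectations, and use a convexity step (you write Cauchy--Schwarz in $t$; the paper writes Jensen) plus Fubini. If anything you are more careful than the paper, which simply asserts that the final expectation ``is the score-matching loss'' without addressing the change-of-measure issue you flag as the main obstacle.
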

\begin{proof}
The proof is in Appendix \ref{sec:appendix_main_proofs}.
\end{proof}
This proposition formally establishes that by eliminating structural error, the causal fidelity of BELM-MDCM is directly and provably controlled by its ability to accurately learn the data's score function.

\subsection{Decomposing Error: A Motivation for Empirical Validation}\label{sec:error_decomp_motivation}
The error decomposition in Theorem \ref{thm:error_bound} provides a clear strategy for empirical validation by isolating two distinct error sources: the \textbf{Structural Reconstruction Error ($E_{SR}$)} and the \textbf{Latent Space Invariance Error}. While developing a single score combining these is future work, these components directly motivate our empirical investigations. Our ablation study (Section \ref{sec:ablation}) is designed to measure the impact of a non-zero $E_{SR}$, while our stress-test (Section \ref{sec:exp_many_to_one}) probes robustness when latent space invariance is challenged by a non-invertible SCM.

\subsection{Theoretical Roles of Targeted Modeling and Hybrid Training}
With algorithmic error eliminated by our Zero-SRE design, the challenge becomes minimizing the modeling error ($E_{LSI}$). Our two methodological innovations, Targeted Modeling and Hybrid Training, are principled strategies for this purpose.

\paragraph{Targeted Modeling as Formal Complexity Control.}
Our Targeted Modeling strategy acts as a form of structural regularization. The finite sample bound in Theorem \ref{thm:finite_sample_specific} is governed by the Rademacher complexity $\mathfrak{R}_n(\mathcal{F}_\Theta)$ of the entire SCM's hypothesis space. By assigning low-complexity models to a subset of nodes, we directly constrain the overall complexity.

\begin{remark}[Effect on Generalization Bound]
Our Targeted Modeling strategy is formally justified as a complexity control mechanism. The Rademacher complexity of a composite SCM is bounded by the sum of the complexities of its individual mechanisms \citep{mohri2018foundations}. By strategically substituting a high-complexity diffusion model $\mathcal{F}_{\text{diff}}$ with a lower-complexity alternative $\mathcal{F}_{\text{simple}}$ for non-critical nodes, Targeted Modeling directly minimizes this upper bound. This leads to a tighter generalization bound and improves the statistical efficiency of the overall SCM.
\end{remark}

\paragraph{Hybrid Training as a Weighted Score-Matching Objective.}
The Hybrid Training Objective, $L_{\text{total}} = L_{\text{diffusion}} + \lambda \cdot L_{\text{task}}$, imparts a crucial inductive bias for learning a \textit{causally salient} score function. The task-specific loss acts as a \textit{conductor's baton}, forcing the model to prioritize learning an accurate score function in regions of the data manifold most critical to the causal question. We formalize this by proposing that the auxiliary loss implicitly implements a weighted score-matching objective.

\begin{proposition}[Hybrid Objective as a Weighted Score-Matching Regularizer]\label{prop:hybrid_objective}
The auxiliary task loss $L_{\text{task}}$ provides a lower bound for the model's error, weighted by a function reflecting the causal salience of the data manifold. Minimizing the hybrid objective $L_{\text{total}}$ is thereby equivalent to solving a weighted score-matching problem that prioritizes accuracy in causally salient regions, leading to a smaller effective Latent Space Invariance Error. (A rigorous proof is provided in Appendix \ref{sec:appendix_hybrid_proofs}.)
\end{proposition}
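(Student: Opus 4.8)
The plan is to establish the claim in three steps. First, recast the generative term $L_{\text{diffusion}}$ (the objective of Eq.~\ref{eq:simple_loss_main}) as an \emph{unweighted} denoising score-matching functional: by the standard denoising score-matching identity \citep{ho2020denoising}, $L_{\text{diffusion}}(\theta) = \int_{\mathcal{X}} \int_0^T \gamma(t)\,\|\boldsymbol{\epsilon}_\theta(\mathbf{x}_t,t) - \boldsymbol{\epsilon}^*(\mathbf{x}_t,t)\|^2\, dt\; p(\mathbf{x}_0)\, d\mathbf{x}_0 + \mathrm{const}$, with a $\theta$-independent schedule weight $\gamma$ and integration against the \emph{data density} $p$; this fixes the default weighting against which the hybrid term is compared. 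Second, show that $L_{\text{task}}$ is comparable --- up to $\theta$-independent constants and modulo regularity hypotheses --- to a \emph{weighted} score-matching functional whose weight is a causal-salience kernel. Third, add the two and read off that $L_{\text{total}} = L_{\text{diffusion}} + \lambda L_{\text{task}}$ is itself, up to constants, a single weighted score-matching objective whose effective weight strictly up-weights causally salient regions, then invoke a localized version of Proposition~\ref{prop:lsi_bound} to conclude the effective $E_{LSI}$ shrinks.

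For the second step: the task prediction is a deterministic functional $\Psi$ of the learned operator pair evaluated on an observation, $\hat{X}_{\boldsymbol{\alpha}} = \mathbf{H}_\theta(\mathbf{T}_\theta(X,\mathbf{Pa}),\boldsymbol{\alpha})$ post-composed with a prediction head, and both $\mathbf{T}_\theta$ and $\mathbf{H}_\theta$ are flows of the probability-flow ODE whose drift is determined by $\boldsymbol{\epsilon}_\theta$. Running the Grönwall/sensitivity machinery already used for Proposition~\ref{prop:lsi_bound}, but now tracking \emph{which} points of the trajectory the query actually exercises, yields a two-sided control of the form $c_-\!\int_{\mathcal{X}}\!\int_0^T\! w(\mathbf{x},t)\,\|\boldsymbol{\epsilon}_\theta - \boldsymbol{\epsilon}^*\|^2\,dt\,d\mathbf{x} \le L_{\text{task}} - L_{\text{task}}^{\star} \le c_+\!\int_{\mathcal{X}}\!\int_0^T\! w(\mathbf{x},t)\,\|\boldsymbol{\epsilon}_\theta - \boldsymbol{\epsilon}^*\|^2\,dt\,d\mathbf{x}$, where $L_{\text{task}}^{\star}$ is the Bayes floor (attained when $\boldsymbol{\epsilon}_\theta = \boldsymbol{\epsilon}^*$, by Theorems~\ref{thm:identifiability} and~\ref{thm:correctness}) and the salience kernel $w(\mathbf{x},t)$ is the push-forward of the query distribution along the flow --- equivalently, the influence kernel $w(\mathbf{x},t) \propto \|\partial\Psi/\partial\boldsymbol{\epsilon}_\theta(\mathbf{x},t)\|$ read off from the adjoint equation of the ODE. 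This $w$ is exactly the promised formalization of ``causal salience'': it is large precisely where a perturbation of the score most changes the counterfactual answer. The lower ($c_-$) inequality requires a non-degeneracy assumption on the prediction head (local coercivity/injectivity), without which a constant head makes $L_{\text{task}}$ insensitive to $\theta$.

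For the third step: adding the two bounds, $L_{\text{total}}$ is bracketed, up to $\theta$-independent constants, by $\int_{\mathcal{X}}\!\int_0^T\!\big(\gamma(t)\,p(\mathbf{x}) + \lambda c_{\pm} w(\mathbf{x},t)\big)\,\|\boldsymbol{\epsilon}_\theta - \boldsymbol{\epsilon}^*\|^2\,dt\,d\mathbf{x}$, so its minimizer coincides --- up to constants and under the stated regularity --- with that of a weighted score-matching objective whose effective weight $\gamma(t)p(\mathbf{x}) + \lambda c_{\pm} w(\mathbf{x},t)$ strictly dominates the pure-diffusion weight on the support of $w$. Finally, the integrand of $E_{LSI}$ only ``sees'' the trajectory the query traverses, so the argument of Proposition~\ref{prop:lsi_bound} specializes to $\mathbb{E}[E_{LSI}] \le C'\!\int_{\mathcal{X}}\!\int_0^T\! w(\mathbf{x},t)\,\|\boldsymbol{\epsilon}_\theta-\boldsymbol{\epsilon}^*\|^2\,dt\,d\mathbf{x}$; since the $\lambda$-term of $L_{\text{total}}$ penalizes exactly this quantity, minimizing $L_{\text{total}}$ drives the bound on $E_{LSI}$ down more aggressively than minimizing $L_{\text{diffusion}}$ alone, which is the assertion.

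The main obstacle is the second step, and within it the clean identification and regularity of the salience kernel $w$. The upper ($c_+$) bound is essentially the Grönwall argument of Proposition~\ref{prop:lsi_bound}; the delicate parts are (a) showing $w$ is well-defined, finite, and bounded away from zero on the support of the query --- this needs the adjoint ODE of the flow to be non-singular, i.e. the flow Jacobian to stay non-degenerate along the relevant trajectory, a property that the local-affine/conformal bias of Proposition~\ref{prop:conformal_bias} makes plausible but that must ultimately be posited --- and (b) the lower bound's dependence on the non-degeneracy of the task head. I would therefore state these as explicit hypotheses and present the conclusion as an equivalence \emph{up to constants and under these conditions}, matching the informal statement of the proposition.
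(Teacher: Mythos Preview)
Your approach and the paper's share the same core mechanism: both linearize the task head, use Duhamel/adjoint sensitivity of the probability-flow ODE to express the generated-sample error as an integral of score perturbations against an influence kernel, and then read off that kernel's squared norm as the salience weight $w$. The paper's proof, however, stops at the one-sided inequality $L_{\text{task}} \ge C\,\mathbb{E}[w\cdot\|\boldsymbol{\epsilon}_\theta-\boldsymbol{\epsilon}^*\|^2]$ (obtained via a Cauchy--Schwarz step whose direction is handled somewhat loosely) and leaves the interpretation qualitative. You go further: you seek a genuine two-sided bracketing, explicitly fold $L_{\text{diffusion}}$ back in to exhibit $L_{\text{total}}$ as a single weighted score-matching objective with effective weight $\gamma(t)p(\mathbf{x})+\lambda c_\pm w(\mathbf{x},t)$, and close the loop to $E_{LSI}$ via a localized form of Proposition~\ref{prop:lsi_bound}. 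These last two steps are claimed in the proposition statement but not actually carried out in the paper's appendix. You also correctly flag that the $c_-$ direction requires coercivity/non-degeneracy of the prediction head---a hypothesis the paper's argument elides. In short: same skeleton, but your treatment is the more complete and honest one about where the assumptions bite.
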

This proposition formally grounds our hybrid training strategy, revealing that the task-specific loss intelligently forces the diffusion model to prioritize accuracy in regions of the data manifold most critical to the causal question. This reinforces the CIC principle by avoiding information loss where it matters most, effectively implementing the simplicity bias principle from Section \ref{sec:geometric_bias}.

We can deepen this insight by analyzing its information-theoretic implications.
\begin{proposition}[Disentanglement via Hybrid Objective]
\label{prop:disentanglement}
Information-theoretically, the hybrid objective provides a strong inductive bias towards learning a disentangled latent representation. It encourages a "division of labor" where the task-specific component explains variance from the parents $\mathbf{Pa}$, while the diffusion component's latent code $Z = \mathbf{T}_\theta(V, \mathbf{Pa})$ models the residual information. This implicitly pushes $Z$ towards being independent of $\mathbf{Pa}$, a crucial step towards satisfying the identifiability conditions.
\end{proposition}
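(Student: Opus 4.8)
The plan is to make the "division of labor" claim precise by means of an information-theoretic decomposition of the mutual information $I(V; \mathbf{Pa})$ relative to the two components of the model, and then to argue that minimizing the hybrid objective $L_{\text{total}} = L_{\text{diffusion}} + \lambda L_{\text{task}}$ tightens a variational bound that forces the residual code $Z = \mathbf{T}_\theta(V, \mathbf{Pa})$ to carry only the parent-independent information. First I would set up the generative picture consistent with the rest of the paper: $V = \mathbf{H}_\theta(Z, \mathbf{Pa})$ with $\mathbf{H}_\theta$ the (exactly invertible, by Proposition \ref{prop:belm_invertibility}) decoder, so that $Z = \mathbf{T}_\theta(V,\mathbf{Pa})$ is a deterministic reparametrization of $V$ given $\mathbf{Pa}$. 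Because the map $V \mapsto Z$ is a $\mathbf{Pa}$-conditional bijection, one has the exact identity $I(Z;\mathbf{Pa}) = I(V;\mathbf{Pa}) - \big(H(V\mid\mathbf{Pa}) - H(Z\mid\mathbf{Pa})\big)$, i.e. the residual dependence in $Z$ equals the dependence in $V$ minus whatever the decoder's parent-conditioning "absorbs." I would phrase the task loss $L_{\text{task}}(\phi)$ as (minus) a variational lower bound on $I(V;\mathbf{Pa})$ in the usual Barber--Agakov sense: for a predictive head $q_\phi(\,\cdot\mid\mathbf{Pa})$, $I(V;\mathbf{Pa}) \ge H(V) - L_{\text{task}}$, with equality iff $q_\phi$ matches the true conditional. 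Thus driving $L_{\text{task}}$ down certifies that the parent-conditioned branch of the model captures the parent-explainable variance of $V$.

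The second step is to connect $L_{\text{diffusion}}$ to the residual channel. Here I would invoke the earlier-established link between the score-matching loss and the fidelity of the flow map (Proposition \ref{prop:lsi_bound} and the surrounding discussion): a small $L_{\text{diffusion}}$ means the learned encoder $\mathbf{T}_\theta$ is close to the true $\mathbf{Pa}$-conditional noise-inference map, whose output is exactly $U$ with $U \perp\!\!\!\perp \mathbf{Pa}$ by the SCM assumption. I would then combine the two pieces: on the manifold where $L_{\text{task}}$ is small, the parent-predictable component of $V$ is reproducible from $\mathbf{Pa}$ alone, so the bijective residual $Z$ — being, up to the decoder, "what is left after removing the $q_\phi$-predictable part" — has its conditional entropy $H(Z\mid\mathbf{Pa})$ pushed toward $H(V\mid\mathbf{Pa})$, which by the identity above sends $I(Z;\mathbf{Pa})\to 0$. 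Formally I would exhibit an upper bound of the shape $I(Z;\mathbf{Pa}) \le \psi\big(L_{\text{task}}\big) + \chi\big(L_{\text{diffusion}}\big)$ for monotone $\psi,\chi$ vanishing at zero, using the data-processing inequality to control each term and the $\mathbf{Pa}$-conditional invertibility to rule out information being silently destroyed rather than disentangled. Since identifiability (Theorem \ref{thm:identifiability}) is precisely the statement that $Z \perp\!\!\!\perp \mathbf{Pa}$ implies $Z \cong U$, this bound is exactly the "crucial step towards satisfying the identifiability conditions" asserted in the statement.

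The main obstacle I anticipate is the determinism/degeneracy of $Z$ given $\mathbf{Pa}$: because $\mathbf{T}_\theta(\cdot,\mathbf{pa})$ is a bijection, differential entropies are only defined up to the Jacobian of the map, and $I(Z;\mathbf{Pa})$ can be manipulated by volume-distorting (but information-preserving) reparametrizations — so a naive argument "small losses $\Rightarrow$ $Z$ independent of $\mathbf{Pa}$" is false without further structure. I would handle this by leaning on the geometric inductive bias established in Proposition \ref{prop:conformal_bias}: the simplicity/low-complexity bias of the optimizer rules out the pathological volume-distorting solutions and favors the near-isometric (locally conformal or affine) map, for which the Jacobian correction term is controlled and the entropy bookkeeping above is meaningful. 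A secondary difficulty is that $L_{\text{task}}$ only lower-bounds $I(V;\mathbf{Pa})$, so a small $L_{\text{task}}$ guarantees the parent branch captures \emph{at least} that much information but not that it captures \emph{only} parent information; I would resolve this by noting that any excess capacity the task head spends on non-parent variation is penalized indirectly through the diffusion branch's need to reconstruct $V$ exactly, so at the joint optimum the "division of labor" is the efficient one. These two points — regularizing away the Jacobian ambiguity and pinning down the optimum of the joint objective — are where the real work of the Appendix \ref{sec:appendix_hybrid_proofs} argument will concentrate; the rest is the standard variational-information-bound machinery.
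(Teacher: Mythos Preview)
Your route differs substantially from the paper's. The paper gives only a brief qualitative argument built on the chain rule $I(V;(\mathbf{Pa},Z)) = I(V;\mathbf{Pa}) + I(V;Z\mid\mathbf{Pa})$: the diffusion loss is read as maximizing the left-hand side (via maximum likelihood of $p_\theta(V\mid\mathbf{Pa})$), the task loss as saturating the first summand on the right, so the latent $Z$ is incentivized to carry only the conditional term $I(V;Z\mid\mathbf{Pa})$ --- the information in $V$ orthogonal to $\mathbf{Pa}$. The paper never attempts a quantitative bound of the form $I(Z;\mathbf{Pa})\le\psi(L_{\text{task}})+\chi(L_{\text{diffusion}})$; your program is more ambitious, and your use of the Barber--Agakov bound for $L_{\text{task}}$ is sharper than anything in the paper's argument.

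However, your central identity is false, and your diagnosis of why it might fail is off-target. You claim that $\mathbf{Pa}$-conditional bijectivity of $V\mapsto Z$ yields
\[
I(Z;\mathbf{Pa}) \;=\; I(V;\mathbf{Pa}) - \bigl(H(V\mid\mathbf{Pa}) - H(Z\mid\mathbf{Pa})\bigr).
\]
Take $V,\mathbf{Pa}\sim\mathcal{N}(0,1)$ independent and $Z=V+\mathbf{Pa}$, a $\mathbf{Pa}$-conditional bijection with unit Jacobian. Then $I(V;\mathbf{Pa})=0$ and $H(V\mid\mathbf{Pa})=H(Z\mid\mathbf{Pa})$, so your formula gives $I(Z;\mathbf{Pa})=0$; but in fact $I(Z;\mathbf{Pa})=\tfrac{1}{2}\log 2>0$. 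What your identity omits is the \emph{marginal} entropy $H(Z)$, which depends on how the $\mathbf{Pa}$-indexed family $\mathbf{T}_\theta(\cdot,\mathbf{pa})$ \emph{positions} the conditional laws relative to one another, not merely on their Jacobians. The counterexample has Jacobian identically one, so your proposed remedy --- invoking the conformal/affine bias of Proposition~\ref{prop:conformal_bias} to control volume distortion --- cannot repair it: the pathology is $\mathbf{Pa}$-dependent \emph{location} of the encoded distribution, which neither Jacobian bookkeeping nor the geometric bias constrains. If you want a quantitative route, you will need a different starting decomposition (closer to the paper's chain rule, or a direct control of $D_{\mathrm{KL}}\bigl(p(Z\mid\mathbf{Pa})\,\Vert\,p(Z)\bigr)$ through the score-matching error of Proposition~\ref{prop:lsi_bound}) rather than the identity you propose.
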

\begin{proof}
A detailed information-theoretic argument is provided in Appendix \ref{sec:appendix_hybrid_proofs}.
\end{proof}

\subsection{BELM-MDCM as a Unifying Framework}
The principle of Causal Information Conservation also unifies our framework with classical models. Simpler models like Additive Noise Models (ANMs) can be seen as special cases where this principle is met trivially, positioning our work as a generalization of established causal principles. For instance, in a classic ANM \citep{hoyer2009nonlinear}, $V_i = f_i(\mathbf{Pa}_i) + U_i$, the noise is recovered by a direct, lossless inversion: $U_i = V_i - f_i(\mathbf{Pa}_i)$. Our framework generalizes this principle to arbitrarily complex, non-additive mechanisms, offering a flexible, non-parametric extension to classical structural equation models \citep{wooldridge2010econometric}. The importance of noise distributions, particularly non-Gaussianity, for identifiability in linear models is also a well-established principle \citep{shimizu2006linear}.

\subsection{Learnability and Statistical Guarantees}
We now provide finite-sample learnability guarantees for our SCM framework.

\begin{proposition}[Asymptotic Consistency]\label{prop:consistency}
Under standard regularity conditions, as the number of data samples $n \to \infty$ and model capacity $N \to \infty$, the learned operators $(\hat{\mathbf{T}}_n, \hat{\mathbf{H}}_n)$ are consistent estimators of the ideal operators $(\mathbf{T}^*, \mathbf{H}^*)$: $\hat{\mathbf{T}}_n \xrightarrow{p} \mathbf{T}^*$ and $\hat{\mathbf{H}}_n \xrightarrow{p} \mathbf{H}^*$.
\end{proposition}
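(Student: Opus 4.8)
The plan is to treat $\hat{\mathbf{T}}_n$ and $\hat{\mathbf{H}}_n$ as M-estimators obtained by minimizing the empirical hybrid objective over a growing sieve of hypothesis classes $\mathcal{F}_{N}$, and to establish consistency via the classical decomposition of excess risk into an \emph{estimation} term (controlled by a uniform law of large numbers) and an \emph{approximation} term (controlled by the universal approximation property of the score network). Concretely, write $R(\theta) = \mathbb{E}[L_{\text{diffusion}}(\theta) + \lambda L_{\text{task}}(\theta)]$ for the population risk, $R_n(\theta)$ for its empirical counterpart, $\theta^{*}$ for the population minimizer inducing $(\mathbf{T}^{*},\mathbf{H}^{*})$ — unique by the identifiability result of Theorem~\ref{thm:identifiability} together with a local quadratic-growth (well-separation) condition on $R$ around $\theta^{*}$ — and $\hat\theta_n \in \arg\min_{\theta\in\mathcal{F}_{N}} R_n(\theta)$.

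\textbf{Step 1 (uniform convergence).} Using the Rademacher-complexity machinery already invoked for Theorem~\ref{thm:finite_sample_specific}, together with the assumed boundedness and Lipschitz regularity of the loss, I would bound $\sup_{\theta\in\mathcal{F}_{N}}|R_n(\theta)-R(\theta)| \le c\,\mathfrak{R}_n(\mathcal{F}_{N}) + \mathcal{O}_p(n^{-1/2})$. Under the standing assumption that the joint growth regime $N=N(n)$ is chosen so that $\mathfrak{R}_n(\mathcal{F}_{N(n)}) \to 0$, this supremum vanishes in probability. \textbf{Step 2 (approximation).} By the universal approximation property of diffusion score networks, $\inf_{\theta\in\mathcal{F}_{N}} R(\theta) - R(\theta^{*}) \to 0$ as $N\to\infty$. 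Sandwiching $R(\hat\theta_n)$ between its empirical value and $R(\theta^{*})$ via Steps 1 and 2 gives $R(\hat\theta_n) - R(\theta^{*}) \xrightarrow{p} 0$; the quadratic-growth condition then upgrades this to $\mathbb{E}[\|\hat{\boldsymbol{\epsilon}}_n - \boldsymbol{\epsilon}^{*}\|^{2}] \xrightarrow{p} 0$, i.e. $L^{2}$-consistency of the learned score.

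\textbf{Step 3 (score consistency $\Rightarrow$ operator consistency).} The operators $\mathbf{T}_\theta,\mathbf{H}_\theta$ are the time-$T$ and time-$0$ flow maps of the probability-flow ODE driven by $\boldsymbol{\epsilon}_\theta$. Since that vector field is Lipschitz (the hypothesis of Proposition~\ref{prop:lsi_bound}), a Grönwall argument bounds the flow-map discrepancy by the integrated score discrepancy, so $\mathbb{E}[\|\hat{\boldsymbol{\epsilon}}_n - \boldsymbol{\epsilon}^{*}\|^{2}]\xrightarrow{p}0$ yields $\hat{\mathbf{H}}_n \xrightarrow{p} \mathbf{H}^{*}$ in the $L^{2}\!\to\!L^{2}$ operator topology; the companion statement $\hat{\mathbf{T}}_n \xrightarrow{p} \mathbf{T}^{*}$ follows identically by integrating the ODE in reverse, and the exact algebraic invertibility of Proposition~\ref{prop:belm_invertibility} ensures the BELM solver contributes no additional non-vanishing bias to either direction. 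Combining with the consistency of the simpler mechanisms on the remaining (Targeted-Modeling) nodes — which are standard parametric or nonparametric regressions — extends the conclusion to the full SCM.

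\textbf{Main obstacle.} The delicate point is the Step 2–Step 3 interface: converting $L^{2}$ convergence of the score into the uniform-in-$t$ (or $L^{\infty}$/Sobolev) control along trajectories that Grönwall's inequality genuinely requires, and justifying the quadratic-growth condition that turns vanishing excess risk into vanishing score error. The former may force a strengthening of the "standard regularity conditions" to include a stability estimate for the probability-flow ODE or higher-norm control on $\hat{\boldsymbol{\epsilon}}_n$; the latter is essentially a quantitative, local restatement of the identifiability hypothesis of Theorem~\ref{thm:identifiability}. These are the two places where the regularity assumptions do all the real work, and I would state them explicitly in the appendix version rather than bury them.
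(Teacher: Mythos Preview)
The paper does not actually prove Proposition~\ref{prop:consistency}: it is stated without a proof pointer and without an associated appendix section (the appendices cover Theorems~\ref{thm:identifiability}--\ref{thm:finite_sample_specific} and the other propositions, but skip this one entirely). The proposition is simply asserted ``under standard regularity conditions'' as a warm-up to the finite-sample bound of Theorem~\ref{thm:finite_sample_specific}.

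Your three-step sieve M-estimator argument is therefore not competing with anything in the paper; it is supplying what the authors left implicit. The route you chose --- uniform convergence via Rademacher complexity, approximation via universal approximation, then Gr\"onwall to pass from score consistency to flow-map consistency --- is exactly the natural one given the paper's existing machinery (Theorem~\ref{thm:finite_sample_specific} for Step~1, the Gr\"onwall argument in the proof of Proposition~\ref{prop:lsi_bound} for Step~3). Your identification of the two genuine obstacles (the $L^2$-to-pathwise gap in Step~3 and the quadratic-growth/well-separation condition) is accurate and more candid than the paper itself, which hides all of this behind the phrase ``standard regularity conditions.'' If anything, your proposal is the proof the paper should have included.
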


\begin{theorem}[Finite Sample Bound for Causal Diffusion SCMs]\label{thm:finite_sample_specific}
Let an SCM consist of $d$ endogenous nodes, with a causal graph having a maximum in-degree of $d_{in}^{max}$. Assume each causal mechanism is implemented by a score network $\epsilon_\theta$ that is an $L$-layer MLP with ReLU activations, and the spectral norm of each weight matrix is bounded by $B$. Let the input space be appropriately normalized. Let the loss function be bounded by $M$. Then, for the parameters $\hat{\theta}_n$ learned from $n$ samples, the excess risk is bounded with probability at least $1-\delta$:
$$ R(\hat{\theta}_n) - R(\theta^*) \le C \cdot \frac{d \cdot L \cdot B^L \cdot \sqrt{d_{in}^{max} + d_{embed} + 1}}{\sqrt{n}} + M\sqrt{\frac{\log(1/\delta)}{2n}} $$
where $C$ is a constant independent of the network architecture and sample size, and $d_{embed}$ is the dimension of the time embedding.
\end{theorem}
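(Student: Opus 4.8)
The plan is to run the standard uniform-convergence pipeline for empirical risk minimization---symmetrization to a Rademacher complexity, a contraction step to strip the loss, a norm-based Rademacher bound for deep ReLU networks, and a McDiarmid concentration step---while carefully tracking how the compositional (multi-node) structure of the SCM aggregates the per-mechanism complexities.

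\textbf{Step 1 (reduction to the Rademacher complexity of the loss class).} Since $\hat\theta_n$ minimizes the empirical score-matching risk $\hat R_n$ over the product parameter space $\Theta=\Theta_1\times\cdots\times\Theta_d$ while $\theta^*$ minimizes the population risk $R$, the usual argument gives $R(\hat\theta_n)-R(\theta^*)\le 2\sup_{\theta\in\Theta}|\hat R_n(\theta)-R(\theta)|$. Because the loss is bounded by $M$, the bounded-differences (McDiarmid) inequality shows that with probability at least $1-\delta$ this supremum exceeds its expectation by at most $M\sqrt{\log(1/\delta)/(2n)}$---the second term of the claimed bound. The standard symmetrization inequality then controls the expected supremum by $2\,\mathfrak{R}_n(\ell\circ\mathcal{F}_\Theta)$, the Rademacher complexity of the composed loss class.

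\textbf{Step 2 (contraction and decomposition across nodes).} The total loss is a sum of $d$ per-node squared errors in the network outputs, evaluated on a normalized and therefore bounded domain, so it is Lipschitz in the network output with a constant controlled by $M$. Talagrand's contraction lemma, together with the vector-contraction inequality of Maurer for the multi-output score networks, reduces $\mathfrak{R}_n(\ell\circ\mathcal{F}_\Theta)$ to a universal constant times $\sum_{i=1}^d \mathfrak{R}_n(\mathcal{F}_i)$, where $\mathcal{F}_i$ is the hypothesis class of the $i$-th mechanism's score network; since every network shares the same architectural budget this is at most $d\cdot\max_i\mathfrak{R}_n(\mathcal{F}_i)$. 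This superadditivity of Rademacher complexity under composition is exactly the fact invoked in the Remark following the theorem statement and is standard \citep{mohri2018foundations}; it is the source of the factor $d$.

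\textbf{Step 3 (per-network spectral bound).} It remains to bound $\mathfrak{R}_n(\mathcal{F}_i)$ for one $L$-layer ReLU MLP whose weight matrices have spectral norm at most $B$ and whose input has dimension $d_{in}^{max}+d_{embed}+1$ (parent values, time embedding, and the node's own noisy coordinate). Invoking a norm-based Rademacher bound for deep ReLU networks---e.g.\ the layer-peeling argument of Golowich--Rakhlin--Shamir or the covering-number bound of Bartlett--Foster--Telgarsky---yields $\mathfrak{R}_n(\mathcal{F}_i)\le C'\, L\, B^L\,\sqrt{d_{in}^{max}+d_{embed}+1}/\sqrt{n}$, in which the $\sqrt{\cdot}$ factor is the bound on the (normalized) input norm and $L\,B^L$ is the depth-dependent product of spectral norms. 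Substituting this into Step 2, folding the loss's Lipschitz constant and all universal constants into a single $C$, and adding the McDiarmid term from Step 1 reproduces the stated inequality. \textbf{The main obstacle} is Step 3 and its interface with Step 2: one must select a deep-network bound whose depth dependence is no worse than linear in $L$ and whose input-dimension dependence is exactly $\sqrt{d_{in}^{max}+d_{embed}+1}$ under the stated normalization, and must ensure the passage from the vector-valued score loss to scalar Rademacher complexities introduces no hidden dimension factors---which the vector-contraction inequality guarantees at the cost of only a universal constant. The hypotheses ``appropriately normalized input'' and ``loss bounded by $M$'' are precisely what make the contraction and the McDiarmid steps go through; the remainder is bookkeeping of constants into $C$.
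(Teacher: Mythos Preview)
Your proposal is correct and follows essentially the same route as the paper's proof: a standard uniform-convergence bound via symmetrization and McDiarmid, sub-additivity of the Rademacher complexity across the $d$ per-node losses, Talagrand contraction to pass from loss to network, and a spectral-norm Rademacher bound for deep ReLU networks (the paper cites \citet{bartlett2017spectrally,neyshabur2018pac} for the last step, where you alternatively invoke Golowich--Rakhlin--Shamir). One small terminological slip: what you call ``superadditivity'' in Step~2 is in fact the \emph{sub}-additivity of Rademacher complexity for sums, though you apply the inequality in the correct direction.
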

\begin{proof}
The proof, which combines the sub-additivity of Rademacher complexity over the SCM with standard bounds for deep neural networks \citep{bartlett2017spectrally, neyshabur2018pac}, is detailed in Appendix~\ref{sec:appendix_learning_proofs_specific}.
\end{proof}

\begin{remark}[Interpretation of the Bound]
This refined bound quantitatively links the generalization error to:
\begin{enumerate}[label=(\roman*), topsep=0pt, noitemsep]
    \item \textbf{Causal Complexity ($d \cdot \sqrt{d_{in}^{max}}$):} The error scales with the number of causal mechanisms ($d$) and the graph's complexity ($d_{in}^{max}$), formalizing the intuition that more complex causal systems are harder to learn.
    \item \textbf{Network Complexity ($L \cdot B^L$):} The error scales with the depth and spectral norm of the score networks. This provides direct theoretical grounding for our Targeted Modeling strategy, as using simpler models tightens this generalization bound.
\end{enumerate}
\end{remark}

\subsection{Implications for Causal Transportability}
\label{sec:transportability}

Causal Information Conservation also provides a foundation for \textbf{transportability}—applying knowledge from a source domain $\mathcal{S}$ to a target domain $\mathcal{T}$ \citep{pearl2014transportability}. Transportability requires separating invariant causal knowledge from domain-specific mechanisms. By losslessly recovering the exogenous noise $U$ (the invariant "causal essence"), our framework achieves this separation by design; the decoders $\mathbf{H}_\theta$ represent the domain-specific mechanisms. This insight is formalized in the following theorem.

\begin{theorem}[Condition for Lossless Causal Transport]
\label{thm:transport}
Let a source domain $\mathcal{S}$ and a target domain $\mathcal{T}$ be described by SCMs $\mathcal{M}^\mathcal{S}$ and $\mathcal{M}^\mathcal{T}$, respectively. Assume the following conditions hold:
\begin{enumerate}[label=(\roman*), noitemsep]
    \item \textbf{Shared Structure:} Both domains share the same causal graph $\mathcal{G}$ and the same exogenous noise distributions $\{p_i(U_i)\}$. The domains differ only in a subset of causal mechanisms $\mathcal{K}_{\text{changed}}$.
    \item \textbf{Noise Independence:} The exogenous noise variables $\{U_i\}_{i=1}^d$ are mutually independent.
    \item \textbf{Information Conservation:} A model $(\mathbf{T}_\theta, \mathbf{H}_\theta)$ trained on data from $\mathcal{S}$ satisfies the Causal Information Conservation principle, achieving zero Structural Reconstruction Error.
\end{enumerate}
Then, causal knowledge can be losslessly transported from $\mathcal{S}$ to $\mathcal{T}$ by re-learning only the operators $\{\mathbf{T}_{\theta_k}, \mathbf{H}_{\theta_k}\}$ corresponding to the changed mechanisms $k \in \mathcal{K}_{\text{changed}}$, while directly reusing all operators for invariant mechanisms.
\end{theorem}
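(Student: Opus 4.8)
The plan is to prove the theorem by a structural induction over the causal graph $\mathcal{G}$, leveraging the Zero-SRE property to reconstruct each individual's exogenous noise and then re-simulate the target domain. First I would fix a sample drawn from the source domain $\mathcal{S}$ and apply the trained model: by condition (iii) and Proposition~\ref{prop:belm_invertibility}, the encoder $\mathbf{T}_\theta$ losslessly recovers the true exogenous noise assignment $\{u_i\}_{i=1}^d$, since $\mathbf{H}_\theta \circ \mathbf{T}_\theta = \mathbf{I}$ and (by Theorem~\ref{thm:identifiability}, given noise independence from condition (ii)) the latent code $Z_i = \mathbf{T}_\theta(V_i, \mathbf{Pa}_i)$ is an isomorphic representation of $U_i$. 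This is the crucial step: the recovered $u_i$ are domain-invariant "causal essence" because condition (i) posits the same exogenous distributions $\{p_i(U_i)\}$ in both domains, so the abducted noise from $\mathcal{S}$ is a valid noise draw for $\mathcal{T}$.

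Next I would construct the target-domain SCM $\mathcal{M}^\mathcal{T}$ explicitly by replacing, for each $k \in \mathcal{K}_{\text{changed}}$, the source operator pair $(\mathbf{T}_{\theta_k}, \mathbf{H}_{\theta_k})$ with a newly learned pair $(\mathbf{T}_{\theta_k'}, \mathbf{H}_{\theta_k'})$ fit on target data, while retaining $(\mathbf{T}_{\theta_i}, \mathbf{H}_{\theta_i})$ for all invariant $i \notin \mathcal{K}_{\text{changed}}$. The claim is that the resulting composite model correctly realizes $\mathcal{M}^\mathcal{T}$. I would argue this by induction on a topological ordering of $\mathcal{G}$: at a source node with no parents, the reconstructed noise is the full variable description and the relevant decoder (either reused or re-learned) reproduces the correct target marginal; at an internal node $V_j$, assume inductively that all parents $\mathbf{Pa}_j$ have been correctly generated under $\mathcal{M}^\mathcal{T}$, then invoke Theorem~\ref{thm:correctness} — since each retained operator pair is a conditional isomorphism and each re-learned pair is trained to convergence on $\mathcal{T}$, the prediction $\hat{V}_j = \mathbf{H}_{\theta_j}(\mathbf{T}_\theta(\cdot),\ \mathbf{Pa}_j^{\mathcal{T}})$ equals the true target value. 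The interventional-consistency machinery of Theorem~\ref{thm:correctness} applies because re-specifying a mechanism is formally identical to intervening on the functional form at that node while holding the abducted noise fixed.

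The main obstacle I anticipate is rigorously justifying that a mechanism change at node $k$ does not contaminate the reused operators at downstream invariant nodes — i.e., that "reusing the operator" is well-defined when the operator's parent-argument distribution has shifted. The reused decoder $\mathbf{H}_{\theta_j}$ was trained on the source distribution of $\mathbf{Pa}_j$, yet in $\mathcal{T}$ it is queried on a shifted parent distribution. To close this gap I would need to invoke the implicit-bias result (Proposition~\ref{prop:conformal_bias}) and the Lipschitz/consistency assumptions: the conditional operator, as a pointwise function of $(u_j, \mathbf{pa}_j)$, is the same map regardless of the input distribution, so as long as the shifted parent values remain in the support on which $\epsilon_\theta$ was accurately learned (a coverage assumption I would make explicit, subsumed under "standard regularity conditions"), the reused operator evaluates correctly. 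A secondary subtlety is the multi-dimensional-noise case, which I would handle exactly as in the proof of Theorem~\ref{thm:correctness} — deferring that casework to Appendix~\ref{sec:appendix_identifiability_proofs} — and confirming that mutual independence of $\{U_i\}$ (condition (ii)) is precisely what permits treating each abducted $u_i$ as an independent resource to be fed into the possibly-modified target mechanisms without introducing spurious cross-node dependence.
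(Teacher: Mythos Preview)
Your proposal is correct and follows essentially the same logic as the paper's proof: both rest on the modularity of the SCM guaranteed by condition~(ii), then split into the two cases (invariant mechanisms can be reused verbatim; changed mechanisms are re-learned locally without affecting others). The paper's version is shorter because it simply \emph{assumes} an idealized, perfectly trained model up front and then argues modularity directly, whereas you package the same argument as an explicit topological induction and invoke Theorems~\ref{thm:identifiability} and~\ref{thm:correctness} at each node.

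The one place where your write-up goes beyond the paper is your third paragraph: you correctly flag that a reused decoder $\mathbf{H}_{\theta_j}$ at a downstream invariant node will be queried on a shifted parent distribution in $\mathcal{T}$, and you propose handling this via a support-coverage assumption. The paper sidesteps this entirely by its idealized-setting stipulation (``the learned decoder $\mathbf{H}_{\theta_i}$ is identical to the true mechanism $\mathbf{F}_i$''), which makes the operator a pointwise-correct function and hence automatically valid on any input. Your treatment is more honest about what is being assumed, but note that invoking Proposition~\ref{prop:conformal_bias} here is unnecessary --- the coverage assumption alone (or the paper's idealization) suffices; the geometric-bias result concerns identifiability of the latent, not extrapolation of the decoder.
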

\begin{proof}
The proof is provided in Appendix \ref{sec:appendix_transport_proof}.
\end{proof}

\subsection{Discussion of Assumptions}\label{sec:assumptions}
Our framework rests on several key assumptions, which we now critically examine.

Our geometric inductive bias argument (Proposition \ref{prop:conformal_bias}) rests on the principle of simplicity bias. While this principle is a cornerstone of modern deep learning theory with substantial empirical backing, it remains an active area of research and is not a universally proven theorem. Our conclusions are therefore conditioned on the validity of this powerful but conjectural assumption.

The cornerstone of our identifiability theory (Theorem \ref{thm:identifiability}) is the SCM's invertibility with respect to its noise term $U$. This is a strong assumption; when violated (e.g., by a many-to-one function), the abduction task becomes ill-posed.

To address this foundational challenge, we provide an exhaustive theoretical treatment in \textbf{Appendix~\ref{sec:appendix_inversion_and_non_invertible}}. There, we formalize the irreducible "representational error" and derive a tighter, more general error bound (Theorem~\ref{thm:non_inv_bound_tight}). More importantly, we propose a concrete mitigation strategy: a novel prior-matching regularizer (Definition~\ref{def:prior_reg_appendix}), theoretically shown to reduce the error by encouraging the learned encoder to approximate the ideal Maximum a Posteriori (MAP) solution (Proposition~\ref{prop:regularizer_guarantee}). This highlights a primary contribution: even in the challenging non-invertible case, BELM-MDCM's zero-SRE design eliminates the \textit{algorithmic} error, thereby isolating the more fundamental \textit{representational} challenge. Our stress-test in Section \ref{sec:exp_many_to_one} empirically confirms this advantage, while validating our regularizer provides a clear direction for future work.

Our identifiability proof is dimension-dependent, leveraging Liouville's theorem for $d \ge 3$ and requiring stronger assumptions like asymptotic linearity for the special case of $d=2$. Other assumptions, such as Lipschitz continuity of the score network, are mild regularity conditions standard in deep generative model analysis and can be encouraged through architectural choices like spectral normalization.

\section{Architectural Design and Training}
\label{sec:arch}

The BELM-MDCM architecture embodies our core principles through a non-monolithic, theoretically-motivated design. Its central philosophy is \textbf{Targeted Modeling}: judiciously allocating the expressive power of our Zero-SRE \texttt{CausalDiffusionModel} to nodes of causal interest (e.g., \textbf{Treatment T}, \textbf{Outcome Y}), while using simpler, efficient mechanisms for confounders, as illustrated in Figure~\ref{fig:targeted_modeling}. This strategy provides practical complexity control, tightening the generalization bound as established in Theorem \ref{thm:finite_sample_specific}.

\begin{figure}[htbp]
    \centering
    \includegraphics[width=0.7\textwidth]{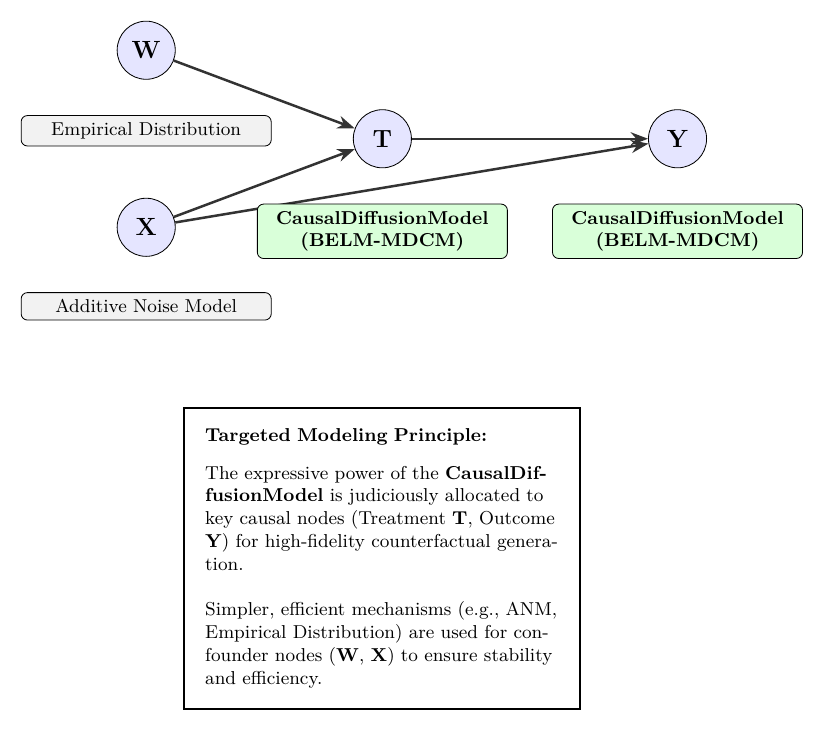}
    \caption{Illustration of the Targeted Modeling Principle. The expressive \texttt{CausalDiffusionModel} is judiciously allocated to key causal nodes (Treatment T, Outcome Y) for high-fidelity counterfactual generation. Simpler, efficient mechanisms (e.g., ANM, Empirical Distribution) are used for confounder nodes (W, X) to ensure stability and efficiency.}
    \label{fig:targeted_modeling}
\end{figure}
\begin{figure}[htbp]
    \centering
    \includegraphics[width=0.8\textwidth]{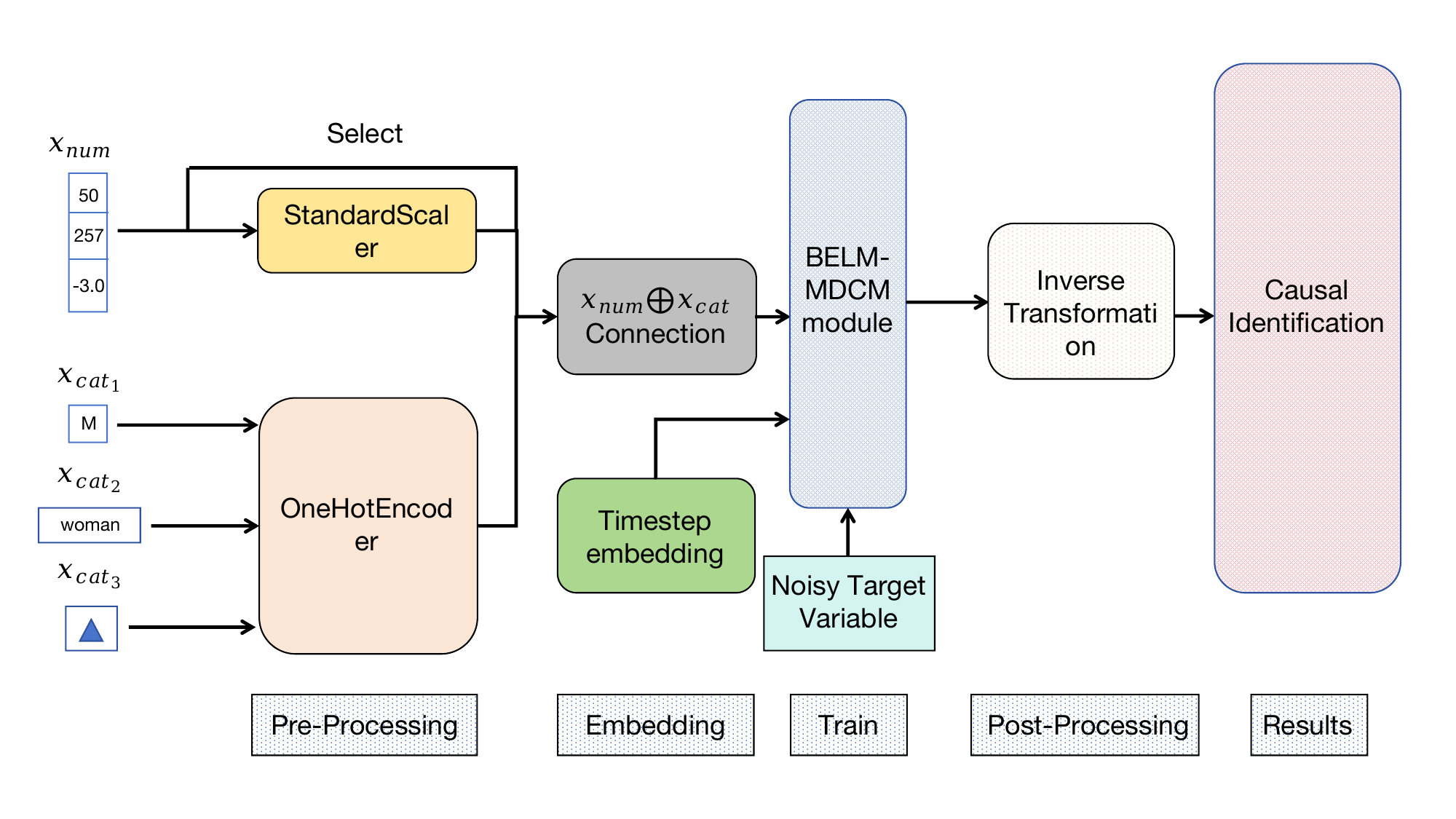}
    \caption{The detailed internal architecture of the \texttt{CausalDiffusionModel}. This diagram illustrates the end-to-end workflow of the causal mechanism designed for key nodes like Treatment T and Outcome Y, detailing the pre-processing, embedding, training, and post-processing stages.}
    \label{fig:model_structure}
\end{figure}
The internal architecture of the \texttt{CausalDiffusionModel} itself, depicted in Figure~\ref{fig:model_structure}, is engineered to learn the complex, non-linear mapping $v_i := f_i(\mathbf{pa}_i, u_i)$ with high fidelity.
\FloatBarrier

\subsection{Mechanism for Exogenous Nodes}
Exogenous nodes (without parents in the causal graph $\mathcal{G}$) are modeled non-parametrically via the \textbf{Empirical Distribution} of the observed data. This approach avoids distributional assumptions and provides a robust foundation for the Structural Causal Model (SCM).

\subsection{Mechanism for Endogenous Nodes: The CausalDiffusionModel}
For endogenous nodes $V_i$, particularly those central to the causal query (treatment, outcome, key mediators), we employ our bespoke \texttt{CausalDiffusionModel} to learn the functional mapping $v_i := f_i(\mathbf{pa}_i, u_i)$.

\subsubsection{Conditioning via Parent Node Transformation}
The denoising process is conditioned on the parent nodes $\mathbf{pa}_i$, which are transformed into a fixed-dimensional conditioning vector $\mathbf{c} \in \mathbb{R}^{d_c}$. A \texttt{ColumnTransformer} handles heterogeneous data types: continuous parents are standardized (\textbf{StandardScaler}) to unify scales, while categorical parents are one-hot encoded (\textbf{OneHotEncoder}) to prevent artificial ordinality. The resulting vectors are concatenated into $\mathbf{c}$, which remains constant for a given sample's diffusion trajectory.

\subsubsection{The Denoising Process}
The core of the \texttt{CausalDiffusionModel} is a denoising network $\epsilon_\theta(v_t, t, \mathbf{c})$, implemented as a \textbf{Residual MLP} \citep{he2016deep}. It takes as input the noisy variable $v_t$, a sinusoidal \textbf{Time Embedding} of timestep $t$, and the conditioning vector $\mathbf{c}$. Before the diffusion process, the target variable $V_i$ is also preprocessed (standardized for continuous values or label-encoded for categorical ones). The denoising process is driven by the BELM sampler, ensuring a mathematically exact and stable inversion path as established in Section \ref{sec:theory}.

\subsubsection{Hybrid Training Objective}\label{sec:hybrid_objective}
We introduce a \textbf{Hybrid Training Objective} to reconcile generative fidelity with predictive accuracy. As established in our theoretical analysis (Proposition \ref{prop:hybrid_objective}), this is more than a standard multi-task learning scheme; it acts as a powerful inductive bias, creating a weighted score-matching objective that prioritizes accuracy in causally salient regions of the data manifold. The total loss is a linearly weighted combination:
\begin{equation}
    L_{\text{total}} = L_{\text{diffusion}} + \lambda \cdot L_{\text{task}}
\end{equation}
where $L_{\text{diffusion}}$ is the noise prediction error (Equation \ref{eq:simple_loss_main}). The auxiliary loss $L_{\text{task}}$ is a Mean Squared Error for continuous nodes ($L_{\text{regression}}$) and a Cross-Entropy loss for discrete nodes ($L_{\text{classification}}$).

\subsubsection{Decoding and Counterfactual Generation}
For generation, the BELM sampler produces an output in the normalized space. This is then mapped back to the original data domain using the inverse transformations of the pre-fitted preprocessors (\texttt{StandardScaler} for continuous, \texttt{LabelEncoder} for categorical). For categorical outputs, the continuous value is rounded and clipped to the valid class range before the inverse mapping, ensuring that generated (counterfactual) data is interpretable and resides in the correct space.

\section{New Evaluation Metrics for Generative Causal Models}
\label{sec:new_metrics}

The principle of Causal Information Conservation demands new evaluation dimensions that traditional metrics like ATE and PEHE cannot capture. An accurate ATE score, for instance, could arise from a model with high SRE where individual errors fortuitously cancel out at the population level. To move beyond mere outcome accuracy and directly assess a model's adherence to our foundational principle, we propose a new, theoretically-grounded evaluation framework.

\subsection{Causal Information Conservation Score (CIC-Score)}
\label{subsec:cic_score}

The \textbf{Causal Information Conservation Score (CIC-Score)} is a direct empirical diagnostic for the Structural Reconstruction Error. It quantifies a framework's adherence to the CIC principle by disentangling algorithmic information loss (from an imperfect inversion process) from modeling error (from the statistical challenge of learning the true causal mechanism). We define the score, bounded in $[0, 1]$, using an exponential formulation:
\[
\text{CIC-Score} = \exp\left( - \left( \delta_U + \delta_{\text{SRE}} \right) \right)
\]
The error components are designed to isolate distinct failure modes:

\begin{itemize}[leftmargin=*, noitemsep]
    \item $\delta_U$, the \textbf{Relative Noise Recovery Error}, quantifies the \textit{modeling error}. It measures how well the trained network approximates the true score function, reflected in the fidelity of the recovered noise $\hat{U}$ versus the ground-truth $U_{\text{true}}$:
    \[
    \delta_U = \frac{\mathbb{E}[\|\hat{U}_{\text{scaled}} - U_{\text{true, scaled}}\|^2]}{\mathbb{E}[\|U_{\text{true, scaled}}\|^2]}
    \]
    This term captures all inaccuracies from finite data and imperfect optimization.

    \item $\delta_{\text{SRE}}$, the \textbf{Normalized Structural Error}, exclusively quantifies the \textit{algorithmic error} inherent to the inversion process itself. Its definition is model-dependent to allow for fair comparisons:
    \begin{itemize}
        \item For frameworks with analytical invertibility (e.g., our BELM-MDCM, ANMs), the algorithm introduces no information loss, so we set $\delta_{\text{SRE}} \equiv 0$ by construction. Any observed reconstruction error is a symptom of modeling error, already captured by $\delta_U$.
        \item For frameworks relying on approximate inversion (e.g., DDIM), $\delta_{\text{SRE}}$ is empirically measured to quantify this inherent algorithmic flaw:
         \[
         \delta_{\text{SRE}} = \frac{\mathbb{E}[\|(\mathbf{H}_\theta \circ \mathbf{T}_\theta - \mathbf{I})X\|^2]}{\mathbb{E}[\|X\|^2]}
         \]
    \end{itemize}
\end{itemize}
This principled decomposition allows the CIC-Score to fairly assess different frameworks by isolating structural design advantages from the universal challenge of model training.

\subsection{Causal Mechanism Fidelity Score (CMF-Score)}
\label{subsec:cmf_score}

A generative causal model's core promise is to learn true causal \textit{mechanisms}, not just outcomes. Naïve metrics like pairwise correlations fail to capture the non-linear, multi-variable, and directional nature of causality. We therefore propose the \textbf{Causal Mechanism Fidelity (CMF)} score, a hierarchical framework with two levels of increasing rigor.

\subsubsection{Level 1 (Pragmatic): The Conditional Mutual Information Score (CMI-Score)}
The Conditional Mutual Information (CMI), $I(V_i; V_j | \mathbf{Pa}_j \setminus \{V_i\})$, is a non-parametric, non-linear measure of the direct influence a parent $V_i$ has on its child $V_j$ after accounting for all other parents. The CMI-Score evaluates whether this influence is preserved. For a single mechanism $V_j$, it is the average consistency across all parent-child edges:
\[
\text{CMI-Score}(V_j) = \frac{1}{|\mathbf{Pa}_j|} \sum_{V_i \in \mathbf{Pa}_j} \left( 1 - \frac{\left| I_{\text{obs}}(V_i; V_j | \cdot) - I_{\text{cf}}(V_i; V'_j | \cdot) \right|}{I_{\text{obs}}(V_i; V_j | \cdot) + \epsilon} \right)
\]
where $I_{\text{obs}}$ and $I_{\text{cf}}$ are the CMI values from observational and counterfactual data, respectively. The final CMI-Score is the average over all SCM mechanisms.

\subsubsection{Level 2 (Gold Standard): The Kernelized Mechanism Discrepancy (KMD) Score}
To rigorously compare entire conditional \textit{distributions}, we use the Maximum Mean Discrepancy (MMD) \citep{gretton2012kernel}, a kernel-based statistical test for distributional equality. The KMD-Score applies this test to the conditional distributions $p(V_j | \mathbf{Pa}_j)$ that define each causal mechanism, measuring the discrepancy between the learned and observed conditionals. The final score is mapped to a similarity measure in $[0, 1]$:
\[
\text{KMD-Score} = \exp(-\gamma \cdot \mathbb{E}_{\mathbf{pa}_j \sim p(\mathbf{Pa}_j)}[\text{MMD}(p(V_j|\mathbf{pa}_j), p_\theta(V_j|\mathbf{pa}_j))])
\]
where $\gamma$ is a scaling parameter. A score of 1 indicates that the learned conditional mechanism is statistically indistinguishable from the observed one.

\paragraph{Complementary and Validated Evaluation Metrics.}
Our proposed metrics complement, rather than replace, traditional ones like ATE and PEHE. They evaluate distinct facets of performance: while ATE/PEHE measure \textbf{outcome accuracy}, the CMF-Score assesses \textbf{mechanism fidelity}. This distinction is critical, as a model can achieve a high ATE via fortuitous error cancellation despite failing to learn the true data-generating process. To ensure our metrics are practically reliable, we conducted a controlled micro-simulation study, detailed in \textbf{Appendix \ref{sec:appendix_metric_validation}}. The results provide strong empirical evidence for their validity and complementary roles: the CIC-Score acts as a high-sensitivity SRE detector; the CMI-Score robustly tracks the fidelity of causal associations; and the KMD-Score serves as a final arbiter of distributional similarity. This validation confirms that our evaluation framework offers a more complete, nuanced, and reliable assessment of generative causal models.
\FloatBarrier

\section{Experiments}\label{sec:experiments}

Our empirical evaluation is designed as a comprehensive test of our central thesis: that eliminating SRE is a necessary condition for generating authentic counterfactuals and unlocks analytical capabilities beyond the reach of conventional methods. We structured the study as a four-act narrative to rigorously test our claims. \textbf{Act I} establishes our model's state-of-the-art predictive fidelity on standard benchmarks. \textbf{Act II} provides a deep diagnostic analysis, using our proposed metrics as empirical evidence for the destructive effect of SRE. \textbf{Act III} showcases the unique capabilities unlocked by an information-conserving framework. Finally, \textbf{Act IV} validates the framework's robustness through a series of stress tests and a full ablation study.

\paragraph{Evaluation Protocol.}
For a rigorous evaluation, we employ two complementary protocols. This distinction is crucial, as it separates the assessment of our methodology's peak performance from the diagnostic analysis of its components.
\begin{enumerate}[label=(\alph*), topsep=0pt, noitemsep]
    \item \textbf{Ensemble Evaluation for SOTA Performance:} To benchmark against state-of-the-art methods (specifically, ITE estimation in Section~\ref{sec:ite_estimation}), we adopt the standard Deep Ensemble methodology. We train N=5 independent models and report the final metric (e.g., PEHE) on the ensembled prediction.
    \item \textbf{Individual Model Evaluation for Diagnostic Analysis:} In all other experiments where the goal is a fair, apples-to-apples architectural comparison or stability assessment, we report the \textbf{mean and standard deviation of metrics from individual model instances} across N=5 runs. This isolates the effect of design choices from the gains of ensembling.
\end{enumerate}
We estimate the Average Treatment Effect (ATE) throughout our experiments using a standard counterfactual imputation procedure, the pseudo-code for which is detailed in Algorithm~\ref{alg:ate_estimation_appendix} in Appendix~\ref{sec:appendix_algorithm}.
\paragraph{Baseline Estimators.}
The Directed Acyclic Graphs (DAGs) for our experiments are shown in Figure~\ref{fig:all_dags_combined}. We benchmark BELM-MDCM against a suite of baselines from the DoWhy library \citep{sharma2022dowhy}, spanning classical statistical methods to state-of-the-art machine learning estimators to ensure a comprehensive comparison.

\begin{figure*}[htbp]
    \centering
    \begin{subfigure}[b]{0.3\textwidth}
        \centering
        \includegraphics[width=\linewidth]{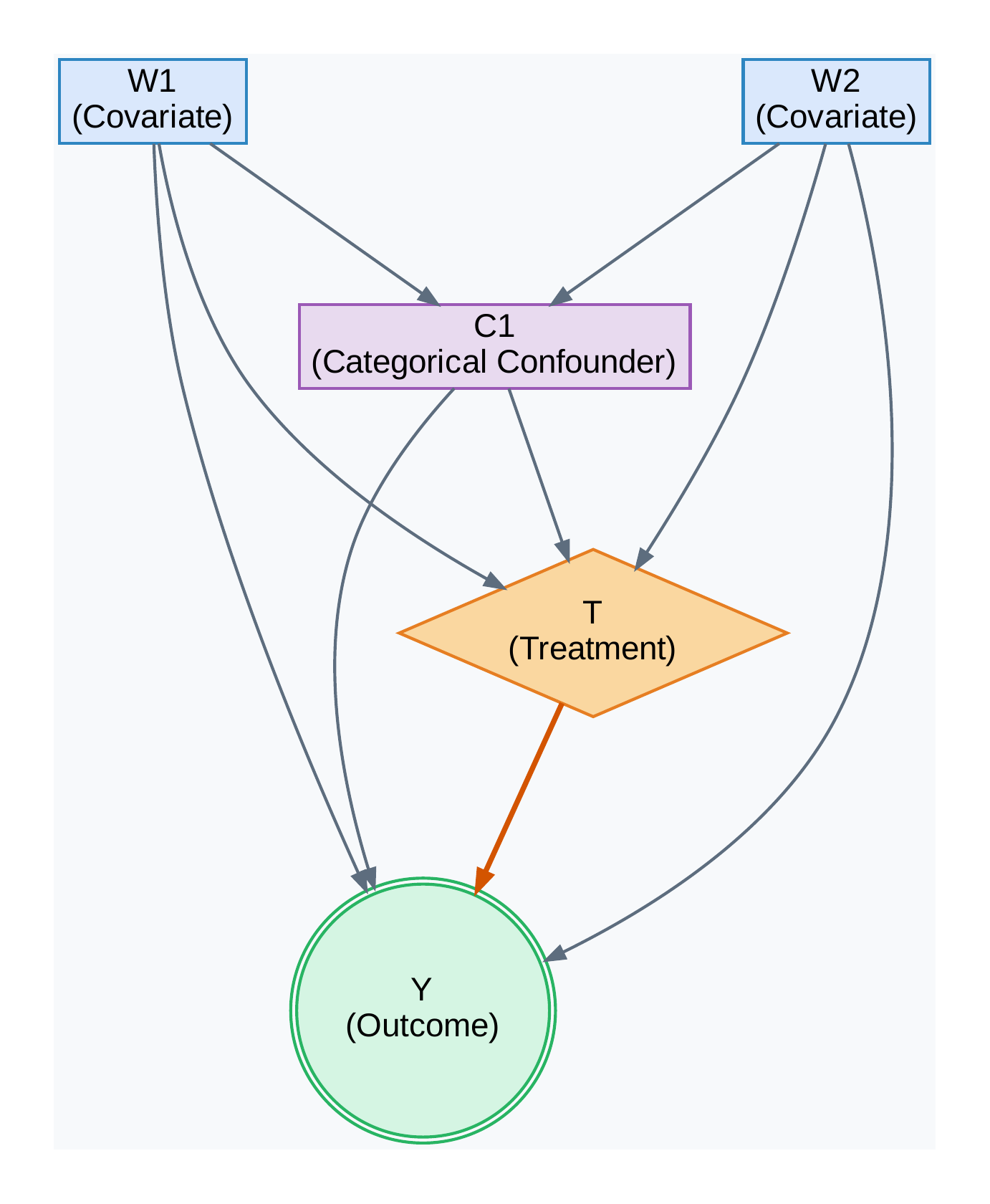}
        \caption{PSM Failure Scenario}
        \label{fig:sub_dag_psm_fail}
    \end{subfigure}
    \hfill
    \begin{subfigure}[b]{0.3\textwidth}
        \centering
        \includegraphics[width=\linewidth]{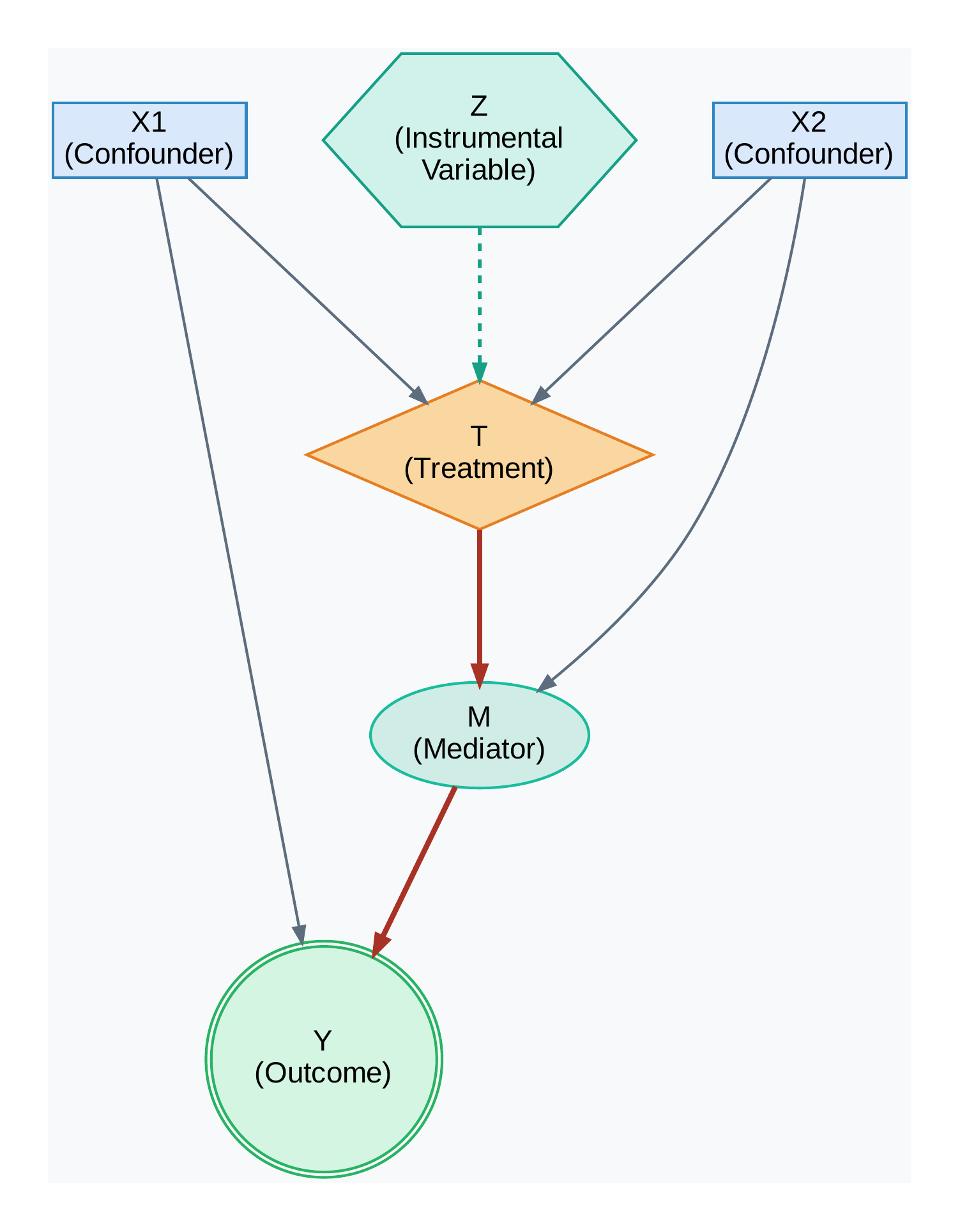}
        \caption{Ablation Study Scenario}
        \label{fig:sub_dag_advanced_mediation}
    \end{subfigure}
    \hfill
    \begin{subfigure}[b]{0.25\textwidth}
        \centering
        \includegraphics[width=\linewidth]{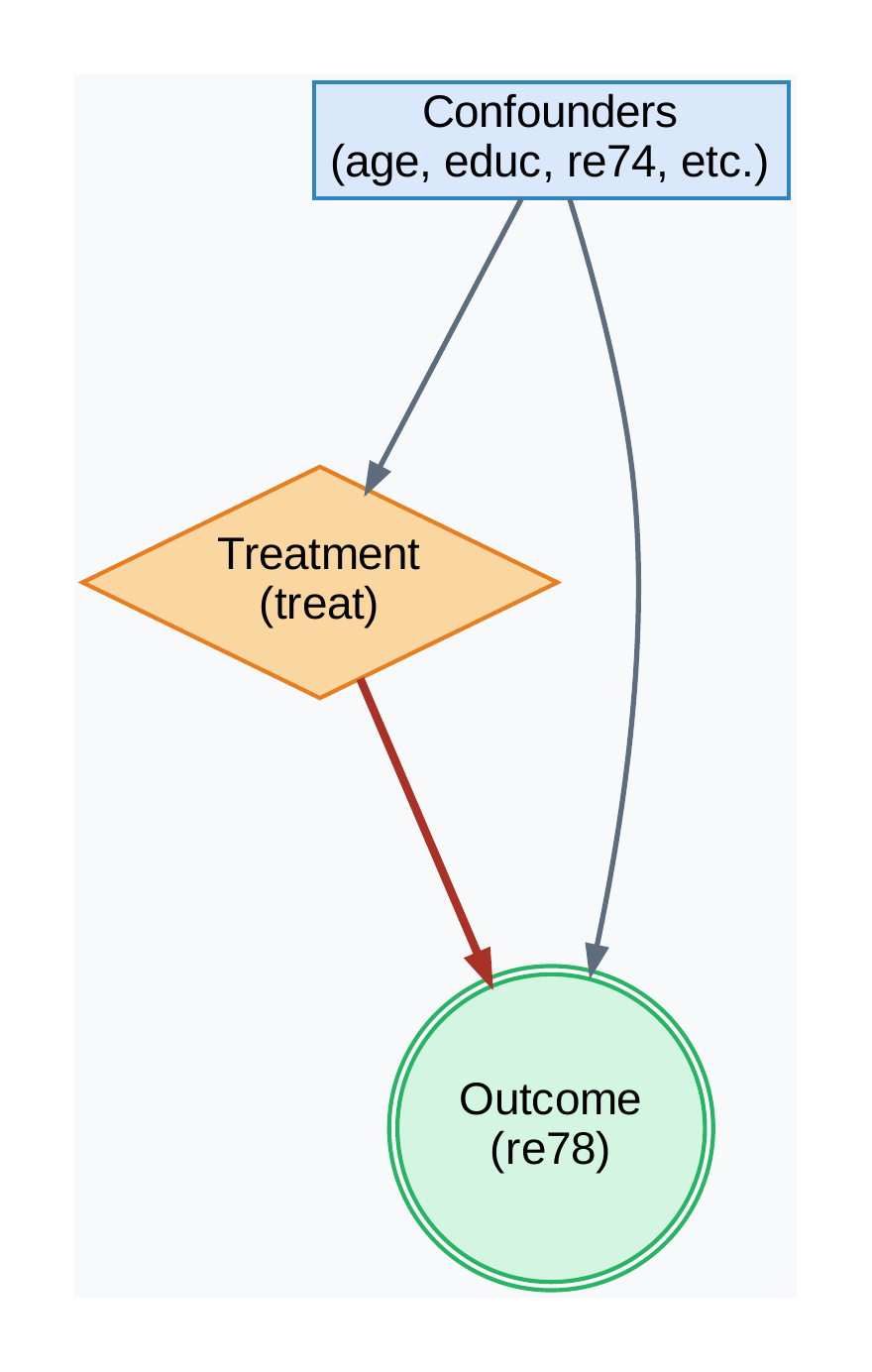}
        \caption{Lalonde Confounding Structure}
        \label{fig:sub_dag_lalonde}
    \end{subfigure}
    \caption{Directed Acyclic Graphs (DAGs) for key experiments. (a) A structure designed to challenge propensity score methods. (b) A mediation structure used for the ablation study. (c) The standard confounding structure assumed for both Lalonde-based experiments.}
    \label{fig:all_dags_combined}
\end{figure*}
\FloatBarrier

\subsection{Act I: Establishing State-of-the-Art Predictive Fidelity}
We first establish that our principled design achieves superior predictive fidelity on standard causal inference benchmarks.

\subsubsection{Robustness in Non-Linear Confounding Scenarios}\label{sec:psm_fail}
We tested our model in a challenging synthetic scenario (Figure~\ref{fig:sub_dag_psm_fail}) designed with highly non-linear confounding to cause propensity-based methods to fail. Table~\ref{tab:psm_fail_results} shows the results. While Causal Forest is exceptionally accurate on this specific DGP, our \textbf{BELM-MDCM} framework secures its position as the second most accurate method, delivering a highly stable and competitive ATE estimate. Crucially, it significantly outperforms the entire suite of propensity-based methods and powerful estimators like DML in accuracy. The high standard deviation of DML highlights its unreliability in this context, validating our model as a robust estimator where traditional approaches are compromised.

\begin{table}[hbt!]
    \small
    \centering
    \caption{ATE Estimation on the PSM Failure Scenario (True ATE = 5000). We report the mean ATE and standard deviation across multiple runs.}
    \label{tab:psm_fail_results}
    \begin{tabularx}{\linewidth}{X l r}
        \toprule
        \textbf{Method} & \textbf{Mean ATE $\pm$ Std Dev} & \textbf{Absolute Error} \\
        \midrule
        \textbf{BELM-MDCM} & \textbf{5266.87 $\pm$ 197.14} & \textbf{266.87} \\
        \midrule
        Causal Forest & 4895.77 $\pm$ 69.26 & 104.23 \\
        Propensity Score Stratification & 5309.38 $\pm$ 185.36 & 309.38 \\
        Linear Regression & 5348.82 $\pm$ 23.23 & 348.82 \\
        Propensity Score Matching & 5353.93 $\pm$ 191.36 & 353.93 \\
        Inverse Propensity Weighting & 5385.68 $\pm$ 52.03 & 385.68 \\
        Double Machine Learning & 4285.63 $\pm$ 550.97 & 714.37 \\
        \bottomrule
    \end{tabularx}
\end{table}

\subsubsection{Accuracy and Robustness on Real-World Observational Data}\label{sec:real_world_lalonde}
We next evaluated our framework on the canonical Lalonde dataset \citep{lalonde1986evaluating}, a challenging real-world benchmark with a known RCT ground truth. Table~\ref{tab:lalonde_ate_results} demonstrates the comprehensive superiority of our \textbf{BELM-MDCM} framework. It achieved a mean ATE estimate of \textbf{1567.36 $\pm$ 201.62}, the lowest error among all methods that correctly identified the treatment effect's positive direction. More critically, the results highlight a stark contrast in reliability. Classical methods failed entirely, while the powerful Causal Forest baseline suffered from extreme instability (Std Dev of \textbf{785.59}). In contrast, \textbf{BELM-MDCM} exhibited remarkable robustness, with a standard deviation approximately four times lower. This outstanding performance on a canonical benchmark validates that our framework delivers accurate estimates with the consistency essential for trustworthy causal inference.

\begin{table}[hbt!]
    \small
    \centering
    \caption{ATE Estimation Stability on the Lalonde Dataset (RCT Benchmark ATE $\approx 1794$). Results for all models are reported as Mean $\pm$ Standard Deviation across 5 independent runs.}
    \label{tab:lalonde_ate_results}
    \begin{tabularx}{\linewidth}{X l r}
        \toprule
        \textbf{Method} & \textbf{ATE (Mean $\pm$ Std)} & \textbf{Abs. Error (Mean)} \\
        \midrule
        \textbf{BELM-MDCM} & \textbf{1567.36 $\pm$ 201.62} & \textbf{226.64} \\
        \midrule
        Causal Forest & 1085.30 $\pm$ 785.59 & 708.70 \\
        Linear Regression & 46.33 $\pm$ 76.80 & 1747.67 \\
        Propensity Score Matching & -3.96 $\pm$ 118.37 & 1797.96 \\
        Propensity Score Stratification & -35.54 $\pm$ 81.44 & 1829.54 \\
        Propensity Score Weighting & -122.55 $\pm$ 50.51 & 1916.55 \\
        Double Machine Learning & nan $\pm$ nan & nan \\
        \bottomrule
    \end{tabularx}
\end{table}

\subsubsection{High-Fidelity ITE Estimation and Stability Analysis}\label{sec:ite_estimation}
\noindent\textbf{Objective.} We evaluate performance at the individual level using a semi-synthetic version of the Lalonde dataset. This experiment leverages real-world covariates and assumes the causal structure depicted in Figure~\ref{fig:sub_dag_lalonde}. To rigorously assess both accuracy and reliability, we follow our Individual Model Evaluation protocol, reporting the mean and standard deviation of performance across 5 independent runs for each method.

\noindent\textbf{Results.} The PEHE results, presented in Table~\ref{tab:lalonde_pehe_results}, confirm the exceptional fidelity and robustness of our framework. \textbf{BELM-MDCM} achieves the lowest average PEHE score of \textbf{537.84} and demonstrates remarkable stability with the lowest standard deviation of just \textbf{60.11}. This performance is closely followed by Causal Forest. However, the results also highlight the instability of other meta-learners; X-Learner, in particular, exhibits extremely high variance, with a standard deviation more than three times larger than its competitors, rendering its single-run estimates unreliable. This highlights the dual advantage of our framework: superior accuracy combined with consistent, trustworthy performance. Figure~\ref{fig:ite_accuracy_scatter} provides visual confirmation, showing the tight clustering of our model's ensembled ITE estimates around the ground truth.

\begin{figure}[hbt!]
    \centering
    \includegraphics[width=0.75\linewidth]{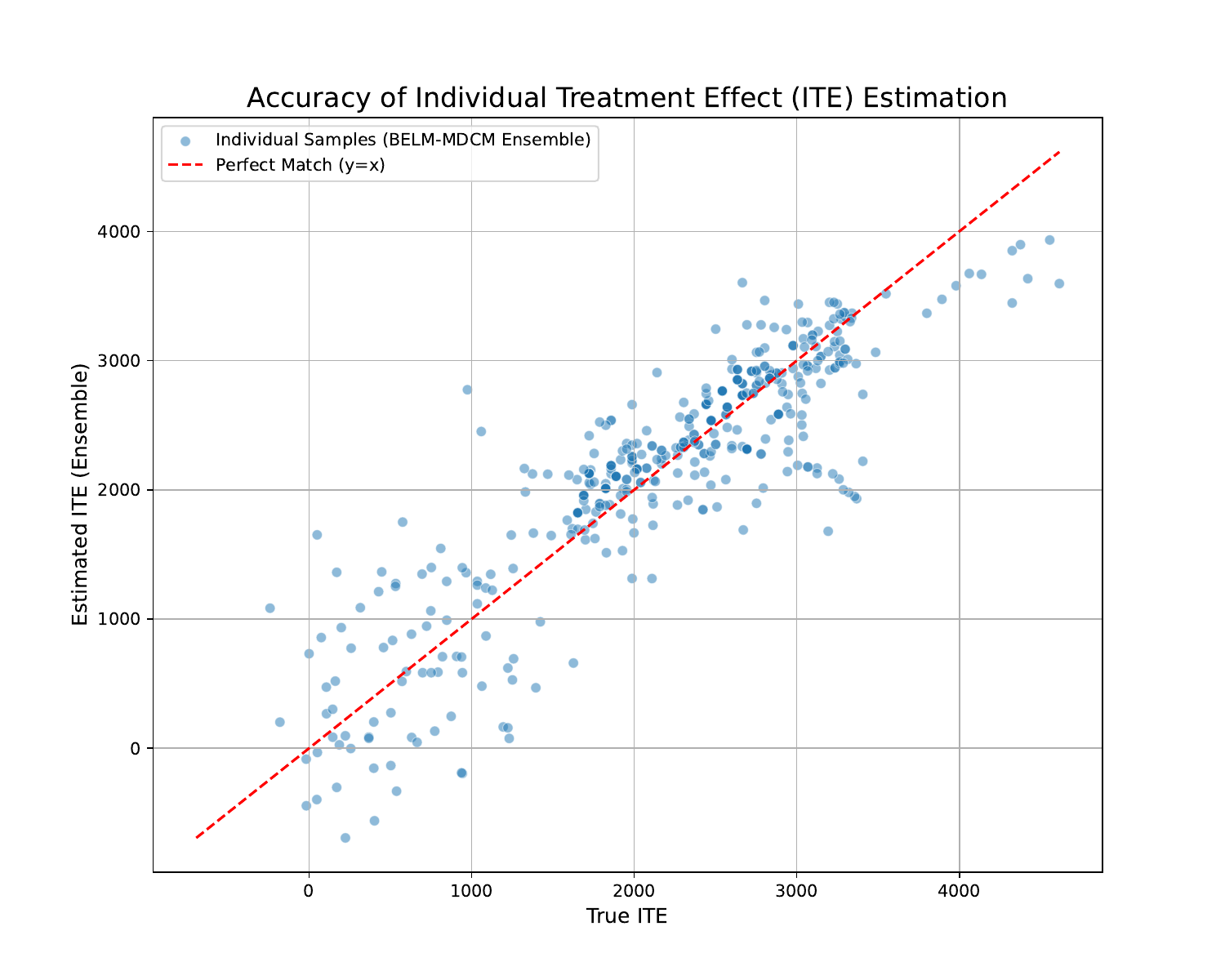}
    \caption{Accuracy of Individual Treatment Effect (ITE) Estimation on the semi-synthetic Lalonde dataset. The plot shows the ensembled estimated ITE from our model versus the true ITE. The tight clustering of our model's estimates (blue dots) around the perfect-match line (red dash) visually demonstrates its low PEHE score.}
    \label{fig:ite_accuracy_scatter}
\end{figure}
\begin{table}[hbt!]
    \small
    \centering
    \caption{ITE Estimation Accuracy (PEHE) on the Semi-Synthetic Lalonde Dataset. Results are reported as Mean $\pm$ Standard Deviation across 5 independent runs. Lower is better.}
    \label{tab:lalonde_pehe_results}
    \begin{tabularx}{\linewidth}{X r}
        \toprule
        \textbf{Method} & \textbf{PEHE Score (Mean $\pm$ Std)} \\
        \midrule
        \textbf{BELM-MDCM} & \textbf{537.84 $\pm$ 60.11} \\
        \midrule
        Causal Forest & 563.90 $\pm$ 73.66 \\
        S-Learner & 816.26 $\pm$ 79.17 \\
        X-Learner & 1546.38 $\pm$ 679.09 \\
        \bottomrule
    \end{tabularx}
\end{table}
\begin{table}[hbt!]
    \small
    \centering
    \caption{Causal Mechanism Fidelity (CMI-Score) on the Semi-Synthetic Lalonde Dataset. Results are reported as Mean $\pm$ Standard Deviation across 5 runs. Higher is better.}
    \label{tab:lalonde_cmi_results}
    \begin{tabularx}{\linewidth}{X r}
        \toprule
        \textbf{Method} & \textbf{CMI-Score (Mean $\pm$ Std)} \\
        \midrule
        S-Learner & 0.9905 $\pm$ 0.0062 \\
        \textbf{BELM-MDCM} & \textbf{0.9824 $\pm$ 0.0092} \\
        Causal Forest & 0.9786 $\pm$ 0.0099 \\
        X-Learner & 0.9782 $\pm$ 0.0145 \\
        T-Learner & 0.9555 $\pm$ 0.0113 \\
        \bottomrule
    \end{tabularx}
\end{table}

\FloatBarrier

\subsection{Act II: Uncovering the Accuracy-Invertibility Trade-off}
\label{sec:killer_experiment}
We now conduct the pivotal experiment of our study: a deep diagnostic analysis using our novel CIC-Score to reveal the trade-off between predictive accuracy and mechanism invertibility. This provides the core empirical evidence for our thesis by comparing three paradigms: our BELM-MDCM (Learned Invertibility), a DDIM variant (Flawed Invertibility), and a classic RF-ANM (Assumed Invertibility).

The results in Table~\ref{tab:golden_table} decisively validate our framework's principles. Our \textbf{BELM-MDCM} is the clear leader, achieving the lowest PEHE score (\textbf{1071.95}) with high stability. Critically, its CIC-Score of \textbf{0.3679} is orders of magnitude higher than the alternatives, proving its unique ability to learn an invertible mapping that conserves causal information. In stark contrast, the \textbf{DDIM-MDCM} model exemplifies the failure predicted by our theory: its near-zero CIC-Score confirms a near-total collapse of causal information due to SRE, leading to unreliable predictions (high PEHE and variance). The classical \textbf{RF-ANM}, while structurally invertible, lacks the capacity to learn the true mechanism, resulting in a zero CIC-Score and poor accuracy. This "Golden Table" experiment underscores that both structural integrity \textit{and} powerful modeling capacity are essential for high-fidelity causal inference.

\begin{table}[hbt!]
    \small
    \centering
    \caption{The "Ultimate Golden Table": A comparative analysis of model classes on predictive accuracy (PEHE) and structural integrity (CIC-Score). This table includes the NF-SCM baseline, which empirically validates the \textit{likelihood-fidelity dilemma}. Results are reported as Mean $\pm$ Standard Deviation across 5 runs. Lower PEHE is better; higher CIC-Score is better.}
    \label{tab:golden_table}
    \begin{tabularx}{\linewidth}{X r r}
        \toprule
        \textbf{Model} & \textbf{PEHE (Mean $\pm$ Std)} & \textbf{CIC-Score (Mean $\pm$ Std)} \\
        \midrule
        RF-ANM & 1533.18 $\pm$ 134.24 & 0.0000 $\pm$ 0.0000 \\
        DDIM-MDCM & 2085.98 $\pm$ 788.12 & 0.0065 $\pm$ 0.0130 \\
        NF-SCM & \textbf{442229.96 $\pm$ 66963.73} & 0.1572 $\pm$ 0.0232 \\
        \midrule
        \textbf{BELM-MDCM} & \textbf{1071.95 $\pm$ 152.11} & \textbf{0.3679 $\pm$ 0.0000} \\
        \bottomrule
    \end{tabularx}
\end{table}
\begin{figure}[htbp]
    \centering
    \includegraphics[width=0.75\textwidth]{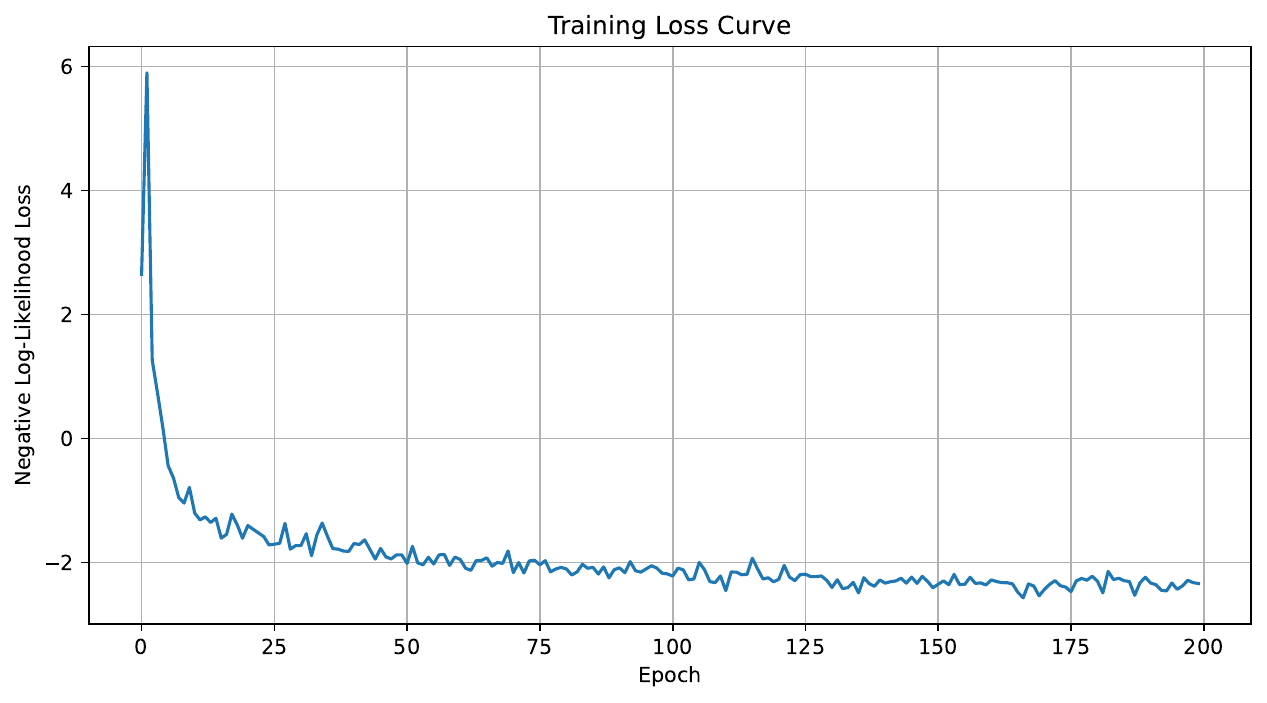}
    \caption{The training loss curve for the Conditional Normalizing Flow (NF) baseline. The smooth, stable convergence to a low negative log-likelihood value indicates a successful statistical training run. However, this did not correspond to learning the true causal mechanism, as evidenced by its extremely high PEHE score.}
    \label{fig:nf_loss_curve}
\end{figure}

\paragraph{The Likelihood-Fidelity Dilemma: Why Natively Invertible Models Can Fail.}
To rigorously test the limits of models that are natively invertible, we conducted a comprehensive stability analysis on a Conditional Normalizing Flow (NF) baseline, a model class that satisfies the Causal Information Conservation principle by construction (SRE $\equiv 0$). Across five independent runs with different random seeds, the NF model consistently demonstrated successful statistical learning, with its training loss stably converging to a high log-likelihood in each instance (a representative example is shown in Figure~\ref{fig:nf_loss_curve}).

However, this statistical success was starkly contrasted by a \textbf{systematic and catastrophic failure} in the causal task. The model yielded an average PEHE score of \textbf{442,229.96 $\pm$ 66,963.73}, confirming that its generated counterfactuals were fundamentally incorrect. This consistent result provides decisive evidence for a critical challenge we term the \textit{likelihood-fidelity dilemma}: a model can perfectly learn to replicate a data distribution while remaining completely ignorant of the underlying causal mechanism.

The root of this dilemma is the fundamental mismatch between the optimization objective and the causal goal. The maximum likelihood objective incentivizes the NF to find \textit{any} invertible mapping that transforms the data to a simple base distribution. While an infinite number of such mappings may be statistically equivalent, \textbf{only one} corresponds to the true, unique causal data-generating process. Without a guiding signal, the NF is mathematically predisposed to learn a causally-incorrect "statistical shortcut." This finding powerfully underscores the contribution of our \textbf{Hybrid Training Objective}. It acts as the crucial causal inductive bias that resolves this dilemma, compelling the model to learn the unique, causally salient structure and enabling valid causal inference where pure likelihood-based methods, even those with zero SRE, are destined to fail.
\FloatBarrier

\subsection{Act III: Unlocking Deeper Causal Inquiry with a High-Fidelity Model}
An information-conserving model serves as a trustworthy "world model" for deep causal inquiry. We showcase three applications uniquely enabled by our framework's high-fidelity counterfactuals.

\paragraph{Heterogeneity Analysis: Conditional ATE (CATE).}
A reliable ITE model can act as a ``causal microscope.'' We use it to explore treatment effect heterogeneity by estimating the Conditional Average Treatment Effect (CATE) for subpopulations. By averaging the results from our five independently trained models, we obtain stable and robust estimates. Our model's mean CATE estimates track the true CATE trends with high fidelity across different education levels, a capability crucial for policy-making. For instance, for individuals with an education level of 3.0, the estimated CATE was \$2562.79 (true CATE: \$2280.79). For levels 8.0, 12.0, and 16.0, the estimates were \$2092.91 (true: \$2118.61), \$2253.76 (true: \$2384.44), and \$2434.89 (true: \$2490.21), respectively, demonstrating a close correspondence to the ground truth.

\begin{figure}[hbt!]
    \centering
    \includegraphics[width=0.7\linewidth]{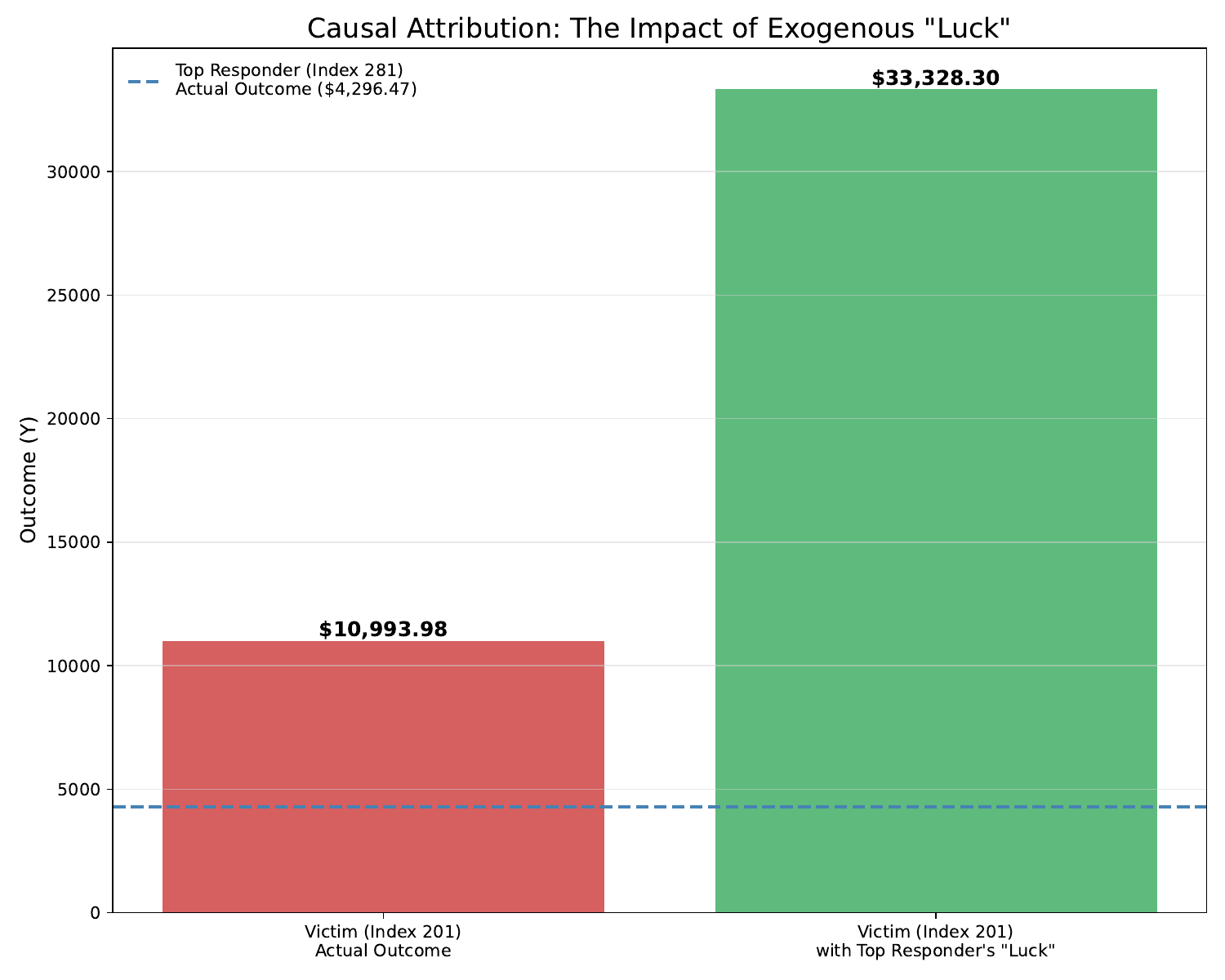}
    \caption{Causal Attribution Analysis. This chart shows the counterfactual outcome for the 'Victim' if they had possessed the individual-specific exogenous factors of the 'Top Responder'.}
    \label{fig:causal_attribution}
\end{figure}
\paragraph{Causal Attribution: Isolating the Effect of Exogenous Factors.}
We conducted a causal attribution experiment via a counterfactual intervention of the form $do(U_{\text{victim}} := u_{\text{responder}})$. Figure \ref{fig:causal_attribution} shows that our framework can losslessly recover these factors, revealing that unobserved exogenous 'luck'\footnote{In this context, the term 'luck' serves as an intuitive shorthand for the exogenous noise variable $U$. Within the Structural Causal Model (SCM) framework, $U$ represents all unobserved, individual-specific factors (e.g., intrinsic ability, random chance, measurement errors) that, together with the observed parent variables ($\mathbf{Pa}$), determine the final outcome for an individual.} had a massive and stable causal effect, averaging a \textbf{+22,334.32} change in the 'Victim's' outcome. This capacity for reliable attribution is a unique advantage of our information-conserving framework.
\begin{figure}[hbt!]
    \centering
    \begin{subfigure}{0.48\textwidth}
        \includegraphics[width=\linewidth]{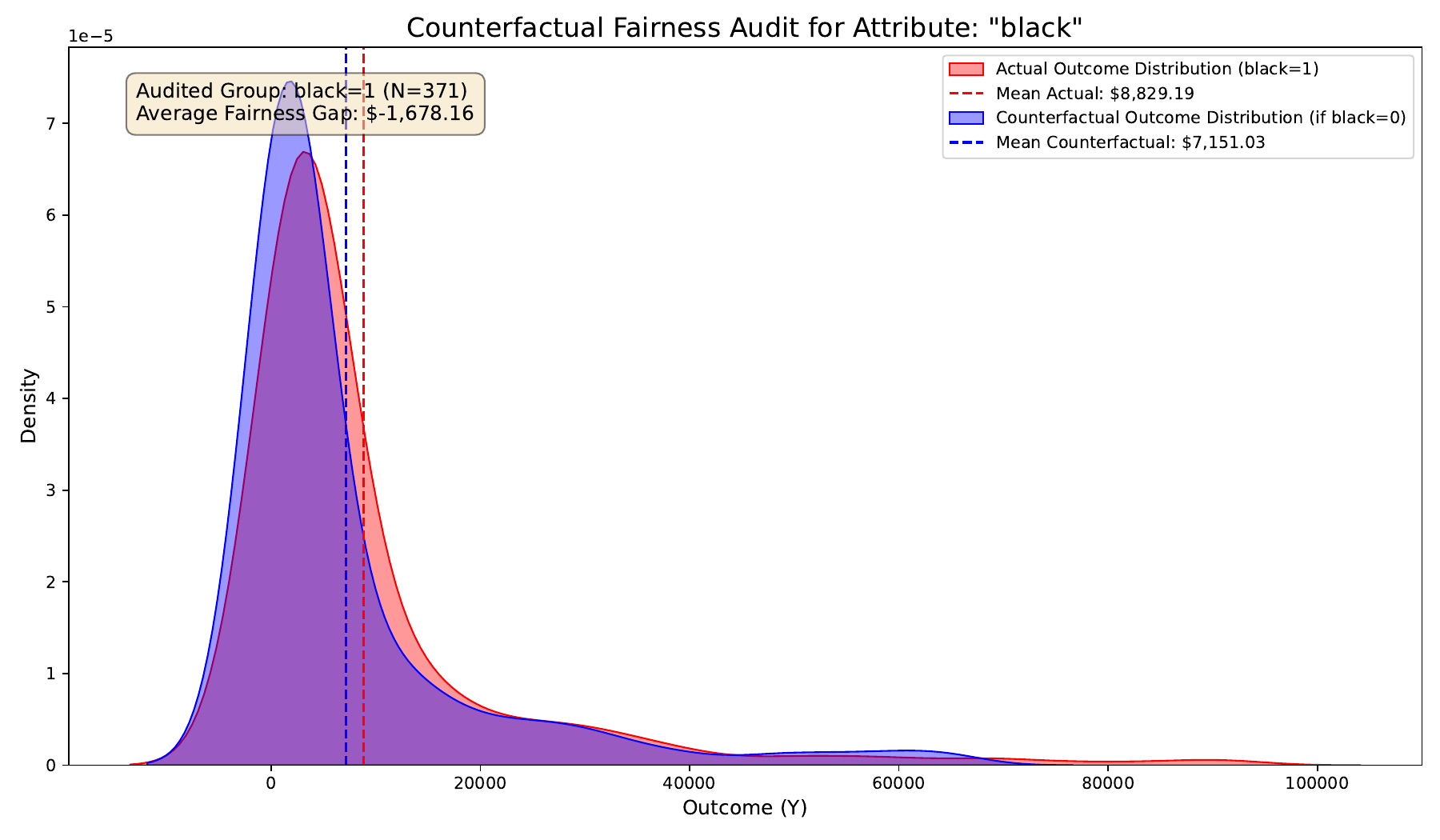}
        \caption{Fairness audit for attribute 'black'.}
        \label{fig:fairness_audit_black}
    \end{subfigure}
    \hfill
    \begin{subfigure}{0.48\textwidth}
        \includegraphics[width=\linewidth]{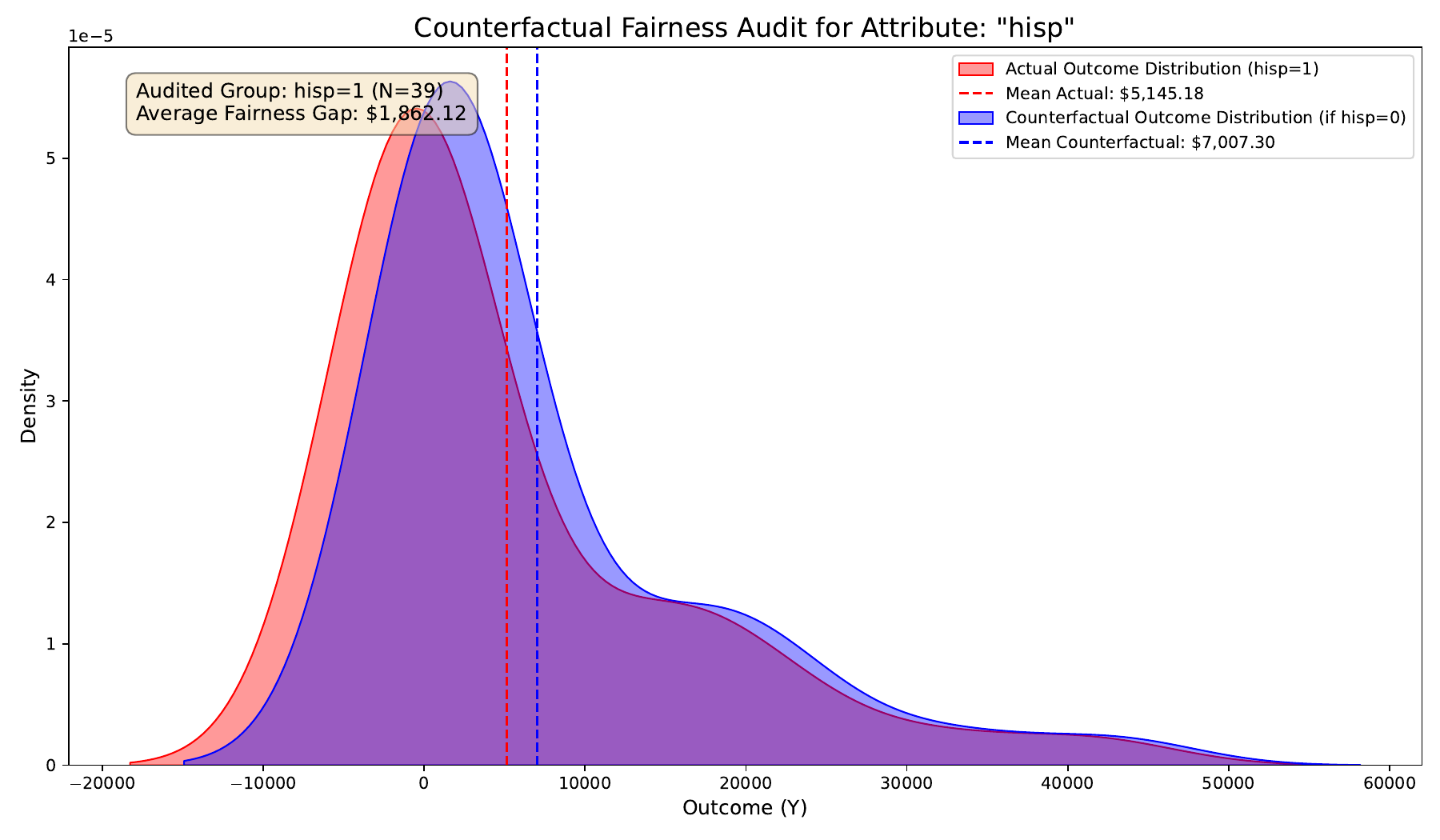}
        \caption{Fairness audit for attribute 'hisp'.}
        \label{fig:fairness_audit_hisp}
    \end{subfigure}
    \caption{Counterfactual fairness audits reveal significant outcome disparities based on sensitive attributes. The plots show the distribution of actual outcomes for each group versus the distribution of their counterfactual outcomes had their sensitive attribute been different.}
    \label{fig:fairness_audits}
\end{figure}
\paragraph{Counterfactual Fairness Audit.}
Finally, we applied our framework to a counterfactual fairness audit. Only a model that faithfully represents the data generating process can reliably answer questions about fairness. Figure \ref{fig:fairness_audits} reveals significant and stable disparities: our model estimates an average fairness gap of \textbf{-1,678.16} for the 'black' attribute and \textbf{+1,862.12} for the 'hisp' attribute, demonstrating its capacity as a powerful tool for ethical audits.
\FloatBarrier

\subsection{Act IV: Final Validation: Stress Tests and Ablation Study}
We conclude by subjecting the framework to two final tests: a stress test on a non-invertible SCM and a comprehensive ablation study.

\subsubsection{Stress Test on a Non-Invertible SCM}
\label{sec:exp_many_to_one}
We tested our framework's robustness when the theoretical assumption of an invertible SCM is violated, using a DGP where $Y \propto U_Y^2$. The results in Table~\ref{tab:many_to_one_results} and Figure~\ref{fig:many_to_one_chart} decisively validate our hypothesis. On the definitive metric of individual-level fidelity (PEHE), our zero-SRE BELM framework achieves an error of \textbf{0.77}, a \textbf{44\% reduction} compared to the SRE-prone DDIM model. This empirically proves that even when the true SCM is non-invertible, eliminating algorithmic SRE provides a substantial advantage. This result is fully consistent with our theoretical analysis in Appendix~\ref{sec:appendix_inversion_and_non_invertible} (Theorem~\ref{thm:non_inv_bound_tight}), where the total error is decomposed into algorithmic, modeling, and representational errors. By eliminating the algorithmic error ($E_{SR} \equiv 0$), our framework's performance approaches the theoretical limit set by the other two components.

Interestingly, the DDIM sampler achieves a slightly higher KMD-Score. We hypothesize that its inherent inversion noise acts as a form of implicit regularization, making the marginal generated distribution appear closer to the truth, even while individual-level counterfactuals are less accurate. This highlights the important distinction between distributional fidelity and individual-level causal accuracy.

\begin{figure}[hbt!]
    \centering
    \includegraphics[width=0.8\linewidth]{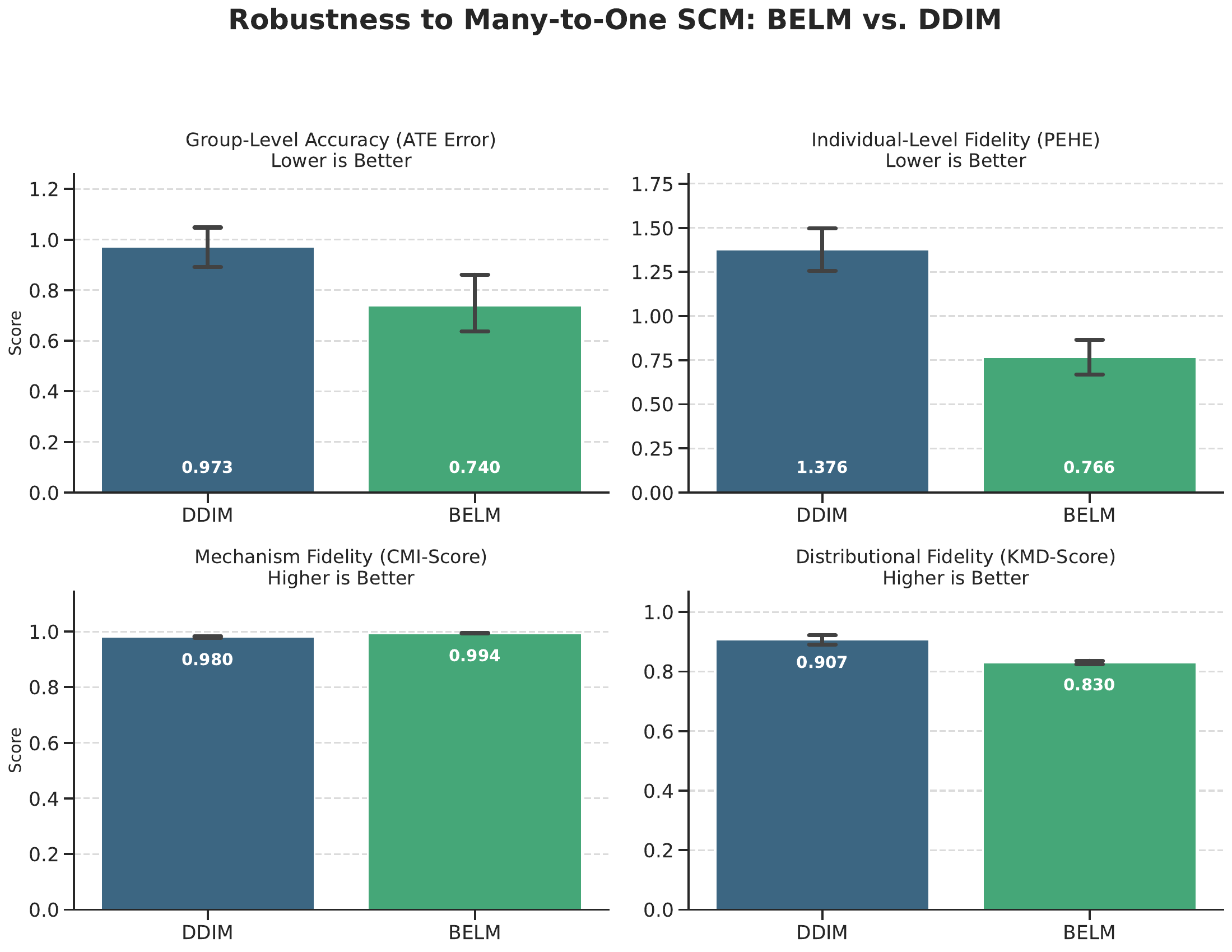}
    \caption{Robustness comparison on the many-to-one SCM. Our zero-SRE framework (BELM) demonstrates significantly superior performance on PEHE.}
    \label{fig:many_to_one_chart}
\end{figure}

\begin{table}[hbt!]
    \small
    \centering
    \caption{Performance on the non-invertible SCM ($Y \propto U^2$). Results are averaged over 5 runs ($\pm$ std). Lower PEHE is better.}
    \label{tab:many_to_one_results}
    \begin{tabularx}{\linewidth}{l*{2}{>{\centering\arraybackslash}X}}
        \toprule
        \textbf{Sampler} & \textbf{PEHE(Mean $\pm$ Std)} & \textbf{KMD-Score(Mean $\pm$ Std)} \\
        \midrule
        \textbf{BELM (Zero SRE)} & \textbf{0.77 $\pm$ 0.16} & 0.830 $\pm$ 0.009 \\
        DDIM (with SRE) & 1.38 $\pm$ 0.19 & \textbf{0.907 $\pm$ 0.023} \\
        \bottomrule
    \end{tabularx}
\end{table}
\FloatBarrier

\subsubsection{Ablation Study: Deconstructing the Framework's Success} \label{sec:ablation}
We conducted a comprehensive ablation study on a challenging synthetic dataset (Figure~\ref{fig:sub_dag_advanced_mediation}) to validate the contribution of each core component. The findings, presented in Table~\ref{tab:ablation_results_final} and Figure~\ref{fig:ablation_plot}, provide unequivocal evidence for our integrated design. The full BELM-MDCM model establishes the gold standard for both accuracy and stability. The study reveals three critical insights:
\begin{itemize}[leftmargin=*, noitemsep]
    \item \textbf{Decisive Role of the Hybrid Objective:} Removing it causes a catastrophic decline in performance (400\%+ error increase), demonstrating it is a core driver of the causal inductive bias, not merely a fine-tuning mechanism.
    \item \textbf{Critical Importance of Targeted Modeling:} Removing it leads to a complete collapse in model stability (Std Dev explodes from 4.57 to 138.01), validating our theoretical analysis on complexity control. A judicious allocation of model complexity is paramount for reproducibility.
    \item \textbf{Robust Advantage of Exact Invertibility:} Replacing BELM with DDIM leads to a clear degradation in both accuracy and stability, confirming that SRE from approximate inversion systematically erodes the quality of the final causal estimate.
\end{itemize}

\begin{figure}[htbp]
    \centering
    \includegraphics[width= 0.8\linewidth]{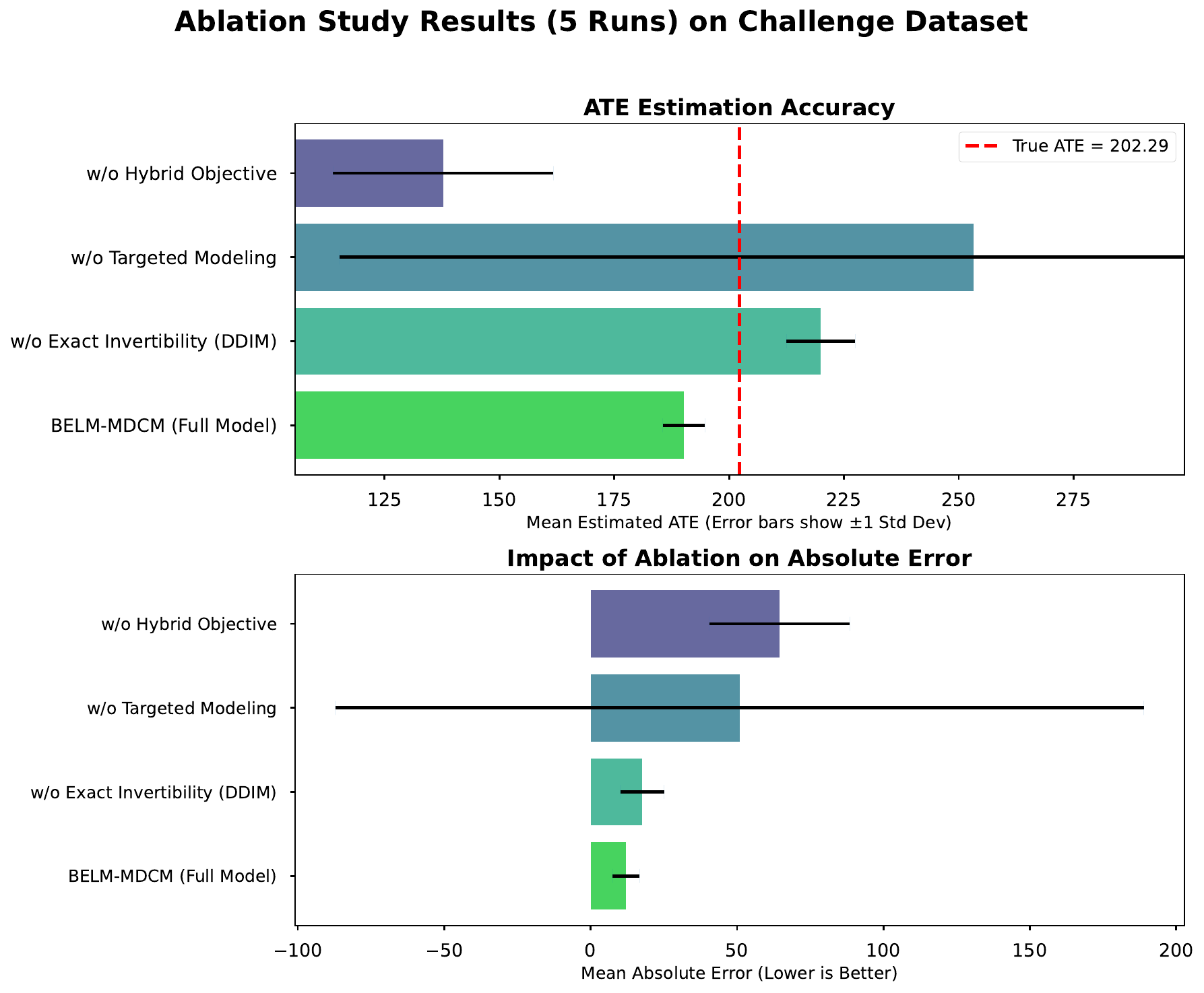}
    \caption{Visualization of the ablation study results. The top panel shows the mean estimated ATE relative to the true value (red dashed line), with error bars indicating $\pm1$ standard deviation. The bottom panel highlights the mean absolute error for each configuration. The full BELM-MDCM model is demonstrably the most accurate and stable.}
    \label{fig:ablation_plot}
\end{figure}

\begin{table}[htbp]
    \small
    \centering
    \caption{Ablation study results on a challenging synthetic dataset (True ATE = 202.29), validating the necessity of each framework component. The mean and standard deviation are computed over 5 runs.}
    \label{tab:ablation_results_final}
    \begin{tabularx}{\linewidth}{X r r r}
        \toprule
        \textbf{Model Configuration} & \textbf{Mean Est. ATE ($\pm$ Std)} & \textbf{Abs. Error} \\
        \midrule
        \textbf{BELM-MDCM (Full Model)} & \textbf{190.16 $\pm$ 4.57} & \textbf{12.14} \\
        \midrule
        w/o Exact Invertibility (DDIM) & 219.98 $\pm$ 7.48 & 17.68 \\
        w/o Hybrid Objective & 137.77 $\pm$ 23.99 & 64.53 \\
        w/o Targeted Modeling & 253.20 $\pm$ 138.01 & 50.90 \\
        \bottomrule
    \end{tabularx}
\end{table}
\paragraph{Conclusion.} The ablation study confirms that our framework's three core design principles: \textbf{analytical invertibility, a hybrid training objective, and targeted modeling}—work in synergy. The removal of any single component creates a significant vulnerability, validating the integrity and effectiveness of our integrated architectural design.
\FloatBarrier

\section{Causal Information Conservation as a Unifying Principle}\label{sec:unifying_principle}

The principle of Causal Information Conservation extends beyond a foundation for our model; it offers a unifying lens—a new taxonomy—for analyzing the suitability of \textbf{any} generative model for individual-level causal inference. Applying this principle and its operational metric, SRE, allows us to situate our work and clarify its unique advantages within the broader landscape.

\subsection{Normalizing Flows: Natively Information-Conserving}

Normalizing Flows (NFs) \citep{dinh2014nice, dinh2017density, kingma2018glow} are designed around an invertible mapping. By construction, their \textbf{Structural Reconstruction Error} is identically zero, making NFs a native implementation of the Causal Information Conservation principle. However, their \textbf{strengths come with limitations}: the requirement of a tractable Jacobian imposes heavy architectural constraints, which can limit expressive power and introduce strong topological assumptions on the data manifold.

\subsection{VAEs and GANs: Architecturally High-SRE}

Variational Autoencoders (VAEs) \citep{kingma2014autoencoding} and Generative Adversarial Networks (GANs) \citep{goodfellow2014generative} are fundamentally ill-suited for this task. Their \textbf{Structural Reconstruction Error} is large and non-zero due to a fundamental \textit{architectural mismatch}, as their separate encoder and decoder networks lack any structural guarantee of being inverses. Furthermore, their \textbf{sources of information loss} are inherent to their design; optimization objectives like the ELBO's KL-divergence term actively encourage lossy compression, theoretically impeding the recovery of precise causal information.

\subsection{A Comparative Perspective on Diffusion Models}

Viewed through our principle, diffusion models occupy a unique and compelling space in this taxonomy. \textbf{Standard diffusion-based approaches}, using samplers like DDIM, aspire to conserve information, but their reliance on approximate inversion results in a non-zero SRE; they are "aspirational" but ultimately lossy. \textbf{In contrast, BELM-MDCM (this work)} achieves zero SRE by integrating an analytically invertible sampler, matching the theoretical purity of Normalizing Flows but without their rigid architectural constraints. Furthermore, unlike NFs trained with a generic likelihood objective, our framework's Hybrid Training objective provides a causally-oriented inductive bias. BELM-MDCM thus uniquely combines the rigorous invertibility of NFs with the modeling flexibility and task-specific power of the diffusion paradigm, making it ideally suited for principled, high-fidelity causal inference.

\section{Conclusion and Future Work}

This paper introduced \textbf{Causal Information Conservation} as a guiding principle for the emerging field of diffusion-based causal inference. Our primary contribution is not the concept of invertibility itself, but framing it as a foundational design requirement and identifying the \textbf{Structural Reconstruction Error (SRE)} as the precise, quantifiable cost of its violation.

Our proposed framework, \textbf{BELM-MDCM}, serves as a constructive proof of this principle. By being architected around analytical invertibility, it is the first to achieve zero SRE by design, shifting the focus from mitigating numerical errors to upholding a fundamental causal principle. This work provides a foundational blueprint and a more rigorous standard for applying the power of diffusion models to the profound challenges of causal inference, reconciling their flexibility with the logical rigor demanded by classical theory.

\subsection{Limitations and Future Work}\label{sec:limitations}
Our work highlights several avenues for future research:

\begin{itemize}
    \item \textbf{Handling Non-Invertible SCMs:} Our framework excels when the true SCM is invertible. While our stress test shows robustness when this is violated, developing models inherently resilient to such misspecifications is a key challenge. Empirically validating the proposed prior-matching regularizer (Appendix \ref{sec:appendix_inversion_and_non_invertible}) is a concrete next step.

    \item \textbf{Robustness to Graph Misspecification:} Like most SCM-based methods \citep{peters2017elements}, our framework assumes a correctly specified causal graph. Analyzing how structural errors (e.g., omitted confounders) propagate through our error decomposition framework is a significant future research direction.

    \item \textbf{Formalizing CIC within Information Theory:} Our work defines CIC as an operational principle. A compelling direction is to formalize it within a rigorous information-theoretic framework, for instance by proving that zero SRE maximizes the mutual information $I(U; \hat{U})$ between the true and recovered noise, connecting our work to rate-distortion theory.
    
    \item \textbf{Scalability and Generalizability:} While powerful, the BELM sampler is computationally intensive. Improving its efficiency for high-dimensional settings is an important practical challenge. Furthermore, extending the principles of CIC and zero-SRE modeling to other data modalities, such as time-series or images, represents an exciting frontier.
\end{itemize}

\acks{We thank the anonymous reviewers for their insightful feedback which significantly improved the clarity and rigor of this paper.}

\appendix

\section{Core Diffusion Model Equations}\label{sec:appendix_theory}
\renewcommand{\theequation}{A.\arabic{equation}}\setcounter{equation}{0}

This appendix provides the essential equations for the diffusion models referenced in this work. Diffusion models learn a data distribution by training a neural network $\boldsymbol{\epsilon}_\theta$ to reverse a fixed, gradual noising process.

The model is trained by optimizing a simplified score-matching objective \citep{ho2020denoising}:
\begin{equation}
    \begin{split}
        L_{\text{simple}}(\theta) = \mathbb{E}_{t, \mathbf{x}_0, \boldsymbol{\epsilon}} \bigg[ \Big\| \boldsymbol{\epsilon} - \boldsymbol{\epsilon}_\theta\big(\sqrt{\bar{\alpha}_t}\mathbf{x}_0
        + \sqrt{1-\bar{\alpha}_t}\boldsymbol{\epsilon}, t\big) \Big\|^2 \bigg]
    \end{split}
\end{equation}
where $\bar{\alpha}_t$ is a predefined noise schedule. For deterministic generation and inversion, we use the Denoising Diffusion Implicit Model (DDIM) update step \citep{song2021denoising}:
\begin{equation}
    \begin{split}
        \mathbf{x}_{t-1} = \sqrt{\bar{\alpha}_{t-1}} \left( \frac{\mathbf{x}_t - \sqrt{1-\bar{\alpha}_t}\boldsymbol{\epsilon}_\theta(\mathbf{x}_t,t)}{\sqrt{\bar{\alpha}_t}} \right) + \sqrt{1-\bar{\alpha}_{t-1}}\boldsymbol{\epsilon}_\theta(\mathbf{x}_t,t)
    \end{split}
\end{equation}
This process can be viewed as a discretization of a continuous-time probability flow Ordinary Differential Equation (ODE) \citep{song2021score}:
\begin{equation}
\label{eq:ode_appendix}
    d\mathbf{x} = \left[ \frac{1}{2}\frac{d \log \alpha(s)}{ds}\mathbf{x}(s) - \frac{1}{2} \frac{d \log(1-\alpha(s))}{ds} \frac{\sqrt{1-\alpha(s)}}{\sqrt{\alpha(s)}} \boldsymbol{\epsilon}_\theta(\mathbf{x}(s), s) \right] ds
\end{equation}
Within our theoretical framework, the encoder operator $\mathbf{T}_\theta$ corresponds to solving this ODE forward in time (from $s=0$ to $s=1$), while the decoder operator $\mathbf{H}_\theta$ corresponds to solving it backward in time (from $s=1$ to $s=0$).

\section{Detailed Proofs for Identifiability (Theorems \ref{thm:identifiability} \& \ref{thm:correctness})}\label{sec:appendix_identifiability_proofs}
\renewcommand{\theequation}{B.\arabic{equation}}\setcounter{equation}{0}

This appendix provides detailed, dimension-specific proofs for the identifiability of the exogenous noise $U$ and the subsequent correctness of counterfactual generation. The core challenge lies in showing that the statistical independence of the latent code $Z$ from the parents $\mathbf{Pa}$ is a sufficient condition to establish an isomorphic relationship between $Z$ and $U$. The mathematical tools required differ based on the dimensionality of $U$.

\subsection{The High-Dimensional Case (\texorpdfstring{$d \ge 3$}{d >= 3})}
    For cases where the exogenous noise $U$ has a dimensionality $d \ge 3$, the proof leverages Liouville's theorem on conformal mappings.

    \begin{proof}[Proof of Theorems \ref{thm:identifiability} and \ref{thm:correctness} for $d \ge 3$]
        We adapt the proof from identifiable generative modeling \citep{chao2023interventional} to our conditional operator setting.

        Let $\mathbf{Q}_{\mathbf{pa}}(U) := \mathbf{T}_\theta(\mathbf{F}(\mathbf{pa}, U), \mathbf{pa})$ be the composite function mapping the noise $U$ to the latent code $Z$ for a given context $\mathbf{pa}$. By assumption, $\mathbf{Q}_{\mathbf{pa}}$ is invertible and differentiable. The core assumption, $Z \perp\!\!\!\perp \mathbf{Pa}$, implies that the conditional density $p_Z(z | \mathbf{pa})$ must equal a marginal density $p_Z(z)$ that is independent of $\mathbf{pa}$.

        Using the change of variables formula, we relate the density of $Z$ to that of $U$:
        \begin{equation}
            p_Z(z | \mathbf{pa}) = \frac{p_U(\mathbf{Q}_{\mathbf{pa}}^{-1}(z))}{\left| \det J_{\mathbf{Q}_{\mathbf{pa}}}(\mathbf{Q}_{\mathbf{pa}}^{-1}(z)) \right|}
        \end{equation}
        where $J_{\mathbf{Q}_{\mathbf{pa}}}$ is the Jacobian of $\mathbf{Q}_{\mathbf{pa}}$. Since the left-hand side is independent of $\mathbf{pa}$ and $p_U(\cdot)$ is a fixed distribution, this imposes a strong constraint on the Jacobian determinant. Under regularity conditions, this implies that the Jacobian $J_{\mathbf{Q}_{\mathbf{pa}}}(u)$ must be a scaled orthogonal matrix, making $\mathbf{Q}_{\mathbf{pa}}$ a conformal map.

        By Liouville's theorem, for dimensions $d \ge 3$, any conformal map must be a Möbius transformation (a composition of translations, scalings, orthogonal transformations, and inversions). For the map to be well-behaved, it must exclude the inversion component, which would introduce singularities. This is consistent with the regularity of functions representable by neural networks, simplifying the map to an affine form:
        \begin{equation}
            \mathbf{Q}_{\mathbf{pa}}(u) = \mathbf{A}_{\mathbf{pa}}u + \mathbf{d}_{\mathbf{pa}}
        \end{equation}
        where $\mathbf{A}_{\mathbf{pa}}$ is a scaled orthogonal matrix. The argument then uses the independence of the distribution's moments and support to show that $\mathbf{A}_{\mathbf{pa}}$ and $\mathbf{d}_{\mathbf{pa}}$ must be constant w.r.t. $\mathbf{pa}$. This leads to the isomorphic relationship $\mathbf{T}_\theta(\mathbf{F}(\mathbf{Pa}, U), \mathbf{Pa}) = \mathbf{A}U + \mathbf{d} = g(U)$.

        The proof of counterfactual correctness follows directly, as detailed in \citet{chao2023interventional}. The conditional isomorphism $\mathbf{H}_\theta(\mathbf{T}_\theta(\cdot, \mathbf{pa}), \mathbf{pa}) = \mathbf{I}$ combined with the identifiability result $\mathbf{T}_\theta(\mathbf{F}(\mathbf{pa}, u), \mathbf{pa}) = g(u)$ implies that $\mathbf{H}_\theta(g(u), \mathbf{pa}) = \mathbf{F}(\mathbf{pa}, u)$. The decoder thus perfectly mimics the true causal mechanism, making an intervention exact: $\hat{X}_{\boldsymbol{\alpha}} = \mathbf{H}_\theta(g(u), \boldsymbol{\alpha}) = \mathbf{F}(\boldsymbol{\alpha}, u) = X_{\boldsymbol{\alpha}}^{\text{true}}$.
    \end{proof}

\subsection{The One-Dimensional Case (\texorpdfstring{$d=1$}{d=1})}
    For the one-dimensional case where Liouville's theorem does not apply, this section provides a dedicated proof leveraging properties of 1D functions and a uniform noise assumption from \citep{chao2023interventional} that does not sacrifice generality.

    We first establish a helper lemma characterizing a specific class of 1D functions.
    \begin{lemma}\label{lem:1d_characterization}
        For $U, Z \subset \mathbb{R}$, consider a family of invertible functions $q_{\mathbf{pa}}: U \to Z$ for $\mathbf{pa} \in \mathcal{X}_\text{pa} \subset \mathbb{R}^d$. The derivative expression $\frac{dq_{\mathbf{pa}}}{du}(q_{\mathbf{pa}}^{-1}(z))$ is a function of $z$ only, i.e., $c(z)$, if and only if $q_{\mathbf{pa}}(u)$ can be expressed as
        $$q_{\mathbf{pa}}(u) = q(u + r(\mathbf{pa}))$$
        for some function $r$ and invertible function $q$.
    \end{lemma}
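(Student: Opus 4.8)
The plan is to prove the biconditional characterization by analyzing the functional equation that results from demanding $\frac{dq_{\mathbf{pa}}}{du}(q_{\mathbf{pa}}^{-1}(z)) = c(z)$ for all admissible $\mathbf{pa}$. First I would handle the easy direction: assume $q_{\mathbf{pa}}(u) = q(u + r(\mathbf{pa}))$. Then by the chain rule $\frac{dq_{\mathbf{pa}}}{du}(u) = q'(u + r(\mathbf{pa}))$, and the inverse is $q_{\mathbf{pa}}^{-1}(z) = q^{-1}(z) - r(\mathbf{pa})$, so composing gives $q'(q^{-1}(z) - r(\mathbf{pa}) + r(\mathbf{pa})) = q'(q^{-1}(z))$, which is manifestly a function of $z$ alone. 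Set $c(z) := q'(q^{-1}(z))$ and this direction is done.

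For the harder (forward) direction, I would start from the defining property and reformulate it as an ODE. Writing $w = q_{\mathbf{pa}}^{-1}(z)$ so that $z = q_{\mathbf{pa}}(w)$, the hypothesis says $q_{\mathbf{pa}}'(w) = c(q_{\mathbf{pa}}(w))$ for every $\mathbf{pa}$ — i.e., for each fixed $\mathbf{pa}$, the function $u \mapsto q_{\mathbf{pa}}(u)$ solves the autonomous first-order ODE $\frac{dy}{du} = c(y)$. The key observation is that this single autonomous ODE has a one-parameter family of solutions obtained by translation in $u$: if $y_0(u)$ is one solution, every other solution is $y_0(u + \text{const})$ (separating variables, $\int \frac{dy}{c(y)} = u + \text{const}$, defines the solution implicitly, and the constant of integration is precisely an additive shift in $u$). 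Invertibility of each $q_{\mathbf{pa}}$ guarantees $c$ does not vanish on the relevant range, so separation of variables is legitimate and the solution curves foliate the strip without crossing. Hence there must exist a fixed invertible function $q$ (any particular solution, e.g. the one with a chosen normalization) and, for each $\mathbf{pa}$, a constant $r(\mathbf{pa})$ such that $q_{\mathbf{pa}}(u) = q(u + r(\mathbf{pa}))$. This is exactly the claimed form.

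The step I expect to be the main obstacle is making the uniqueness-up-to-translation argument fully rigorous across the whole parameter family simultaneously, rather than pointwise in $\mathbf{pa}$. Concretely, one must (i) verify that $c$ is well-defined and non-vanishing on the common range so that $G(y) := \int_{y_*}^{y} \frac{ds}{c(s)}$ is a well-defined strictly monotone function, (ii) deduce from $q_{\mathbf{pa}}'(u) = c(q_{\mathbf{pa}}(u))$ that $\frac{d}{du} G(q_{\mathbf{pa}}(u)) = 1$, hence $G(q_{\mathbf{pa}}(u)) = u + r(\mathbf{pa})$ for some constant $r(\mathbf{pa})$ depending only on $\mathbf{pa}$, and (iii) set $q := G^{-1}$, which is invertible since $G$ is strictly monotone, yielding $q_{\mathbf{pa}}(u) = G^{-1}(u + r(\mathbf{pa})) = q(u + r(\mathbf{pa}))$. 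The mild technical caveats — ensuring the domains/ranges are intervals so that the change of variables and the antiderivative $G$ behave globally, and that $r(\mathbf{pa})$ is genuinely a function (single-valued, which follows from monotonicity of $G$) — are where care is needed, but the uniform structure of the autonomous ODE makes the construction essentially forced once these regularity points are dispatched.
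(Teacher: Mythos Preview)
Your proposal is correct and follows essentially the same route as the paper: the reverse direction by direct differentiation, and the forward direction by observing that $(q_{\mathbf{pa}}^{-1})'(z)=1/c(z)$ is independent of $\mathbf{pa}$ (the paper states this via the inverse function theorem, you via separation of variables in the autonomous ODE $y'=c(y)$, but your antiderivative $G$ is exactly the common inverse $S$ in the paper's sketch), so the inverses differ only by an additive constant $r(\mathbf{pa})$, and re-inverting gives $q_{\mathbf{pa}}(u)=q(u+r(\mathbf{pa}))$.
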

    \begin{proof}
        The proof is provided in \citep{chao2023interventional}. The reverse direction follows from direct differentiation, while the forward direction uses the inverse function theorem to show that the inverses $s_{\mathbf{pa}}(z) = q_{\mathbf{pa}}^{-1}(z)$ must have the same derivative $1/c(z)$, implying they can only differ by an additive constant, which yields the desired form after inversion.
    \end{proof}

    This lemma enables the proof of the theorem for the 1D case.
    \begin{theorem}[Identifiability for $d=1$]
        Assume for $X \in \mathcal{X} \subset \mathbb{R}$ and exogenous noise $U \sim \text{Unif}[0, 1]$, the SCM is $X := f(\mathbf{Pa}, U)$. Assume an encoder-decoder model with encoding function $g$ and decoding function $h$. Assume the following conditions:
        \begin{enumerate}
            \item The encoding is independent of the parents, $g(X, \mathbf{Pa}) \perp\!\!\!\perp \mathbf{Pa}$.
            \item The structural equation $f$ is differentiable and strictly increasing w.r.t. $U$.
            \item The encoding $g$ is invertible and differentiable w.r.t. $X$.
        \end{enumerate}
        Then, $g(f(\mathbf{Pa}, U), \mathbf{Pa}) = \tilde{q}(U)$ for an invertible function $\tilde{q}$.
    \end{theorem}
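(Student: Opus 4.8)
The plan is to reduce the claim to the helper Lemma~\ref{lem:1d_characterization} by exploiting the uniform prior on $U$, and then to eliminate the residual parent-dependence with a support-matching argument. First I would introduce the composite map $q_{\mathbf{pa}}(u) := g(f(\mathbf{pa}, u), \mathbf{pa})$, which carries the noise to the latent code $Z = q_{\mathbf{pa}}(U)$ in a fixed parent context $\mathbf{pa}$. Since $f$ is differentiable and strictly increasing in $U$ and $g$ is differentiable and invertible in $X$, each $q_{\mathbf{pa}}$ is a $C^1$, strictly monotone bijection onto its image; passing to $-g$ if necessary we may take all of them increasing, so $q_{\mathbf{pa}}' > 0$. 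Pushing the uniform density through $q_{\mathbf{pa}}$ via the change-of-variables formula, and using $p_U \equiv 1$ on $[0,1]$, gives
\begin{equation}
    p_Z(z \mid \mathbf{pa}) = \frac{1}{q_{\mathbf{pa}}'\!\big(q_{\mathbf{pa}}^{-1}(z)\big)}
\end{equation}
on the image of $q_{\mathbf{pa}}$.

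Next I would invoke independence. The hypothesis $g(X, \mathbf{Pa}) \perp\!\!\!\perp \mathbf{Pa}$ forces $p_Z(z \mid \mathbf{pa})$ to coincide with a $\mathbf{pa}$-free marginal $p_Z(z)$, so the quantity $\frac{dq_{\mathbf{pa}}}{du}\big(q_{\mathbf{pa}}^{-1}(z)\big)$ depends on $z$ only. That is exactly the hypothesis of Lemma~\ref{lem:1d_characterization}, which therefore yields $q_{\mathbf{pa}}(u) = q(u + r(\mathbf{pa}))$ for some invertible $q$ and some function $r$. To remove $r$, I would use the support side of independence: because $p_Z(\cdot \mid \mathbf{pa})$ is the same distribution for every $\mathbf{pa}$, the image $q_{\mathbf{pa}}([0,1]) = q\big([\,r(\mathbf{pa}),\, 1 + r(\mathbf{pa})\,]\big)$ is a fixed set; since $q$ is a strictly monotone bijection, this forces the interval $[\,r(\mathbf{pa}),\, 1 + r(\mathbf{pa})\,]$ to be independent of $\mathbf{pa}$, hence $r(\mathbf{pa}) \equiv r_0$. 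Setting $\tilde q(u) := q(u + r_0)$ gives $g(f(\mathbf{pa}, u), \mathbf{pa}) = \tilde q(u)$ for all $\mathbf{pa}$, with $\tilde q$ invertible, as claimed.

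The main obstacle is this final support-matching step rather than any computation: Lemma~\ref{lem:1d_characterization} only pins down $q_{\mathbf{pa}}$ up to the additive shift $r(\mathbf{pa})$, and in general that shift genuinely could depend on $\mathbf{pa}$. It is precisely the compact support of the uniform prior that makes each image $q_{\mathbf{pa}}([0,1])$ a translate of a single fixed interval, so equality of supports collapses the whole family to one function; this is where the uniform assumption on $U$ (which, as the section notes, does not cost generality) does real work. A secondary point requiring care is the monotonicity bookkeeping — ensuring the $q_{\mathbf{pa}}$ can all be taken increasing so that Lemma~\ref{lem:1d_characterization} applies with $c(z)$ rather than $|c(z)|$ — together with the fact that the change-of-variables identity and the Lemma are valid only on the common support of $Z$.
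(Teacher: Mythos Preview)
Your proposal is correct and follows essentially the same route as the paper's proof: define the composite map $q_{\mathbf{pa}}$, use the uniform prior together with the change-of-variables formula and independence to force $q_{\mathbf{pa}}'(q_{\mathbf{pa}}^{-1}(z))$ to be $\mathbf{pa}$-free, invoke Lemma~\ref{lem:1d_characterization} to obtain $q_{\mathbf{pa}}(u)=q(u+r(\mathbf{pa}))$, and then use the support-matching argument to conclude $r$ is constant. Your extra care about orienting all $q_{\mathbf{pa}}$ to be increasing and about the domain of validity of the change-of-variables identity is a welcome refinement, but structurally the two arguments coincide.
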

    \begin{proof}
        Let $q_{\mathbf{pa}}(U) := g(f(\mathbf{pa}, U), \mathbf{pa})$. The conditions on $f$ and $g$ ensure $q_{\mathbf{pa}}$ is strictly monotonic and thus invertible. By the independence assumption, the conditional distribution of $Z = q_{\mathbf{pa}}(U)$ does not depend on $\mathbf{pa}$.
        We assume $U \sim \text{Unif}[0, 1]$ without loss of generality.\footnote{For any SCM with a continuous noise $E$ and a strictly increasing CDF $F_E$, $X = f(\mathbf{Pa}, E)$ can be re-parameterized to an equivalent SCM $X = \tilde{f}(\mathbf{Pa}, U)$, where $U = F_E(E) \sim \text{Unif}[0, 1]$ and $\tilde{f}(\cdot, \cdot) = f(\cdot, F_E^{-1}(\cdot))$. The modeling task is then to learn the potentially more complex function $\tilde{f}$.}
        The change of density formula gives:
        \begin{equation}
            p_Z(z) = \frac{p_U(q_{\mathbf{pa}}^{-1}(z))}{| \frac{dq_{\mathbf{pa}}}{du}(q_{\mathbf{pa}}^{-1}(z)) |}
        \end{equation}
        Since $p_U(u)=1$ on its support and $q_{\mathbf{pa}}$ is increasing, the denominator must be independent of $\mathbf{pa}$. This implies $\frac{dq_{\mathbf{pa}}}{du}(q_{\mathbf{pa}}^{-1}(z)) = c(z)$ for some function $c$.
        This meets the condition of Lemma \ref{lem:1d_characterization}, allowing us to express $q_{\mathbf{pa}}(u) = q(u + r(\mathbf{pa}))$ for some invertible $q$.
        Since $Z \perp\!\!\!\perp \mathbf{Pa}$, its support must also be independent of $\mathbf{pa}$. The support is $q([0, 1] + r(\mathbf{pa}))$, which is constant only if the interval $[r(\mathbf{pa}), 1+r(\mathbf{pa})]$ is constant. This requires $r(\mathbf{pa})$ to be a constant, $r$.
        Thus, $q_{\mathbf{pa}}(u) = q(u+r)$. Defining $\tilde{q}(u)=q(u+r)$, we find that the mapping is solely a function of $U$, which completes the proof.
    \end{proof}

\subsection{The Two-Dimensional Case (\texorpdfstring{$d=2$}{d=2})}
    The two-dimensional case is a well-known geometric exception, as the group of conformal maps is infinite-dimensional. Consequently, the proof strategy used for higher dimensions via Liouville's theorem does not directly apply. Here, we show that identifiability still holds under additional, plausible regularity assumptions aligned with our modeling framework.

    \begin{assumption}[Asymptotic Linearity]\label{assump:asymptotic_linearity}
        The composite mapping $Q_{\mathbf{pa}}(u): \mathbb{C} \to \mathbb{C}$ is an entire function (analytic on the whole complex plane) with at most linear growth. That is, there exist constants $A, B$ such that $|Q_{\mathbf{pa}}(u)| \le A|u| + B$ for all $u \in \mathbb{C}$.
    \end{assumption}
    This assumption reflects a fundamental inductive bias of neural network architectures. Standard activation functions (e.g., ReLU, Tanh) produce functions that cannot exhibit super-polynomial growth or essential singularities at infinity, aligning our analysis with the function classes our model can represent.

    \begin{assumption}[Non-Rotationally Symmetric Base Noise]\label{assump:noise_asymmetry}
        The base distribution of the exogenous noise $U$ is not rotationally symmetric.\footnote{This assumption is made without loss of generality. For any arbitrary continuous noise $E=(E_1, E_2)$, one can define a new noise $U = (F_{E_1}(E_1), E_2)$, where $F_{E_1}$ is the CDF of the first component. The resulting distribution of $U$ is uniform along its first axis, thus breaking any rotational symmetry. The structural function $\mathbf{F}$ then absorbs this transformation.}
    \end{assumption}
    \begin{proof}
    The proof proceeds in three steps. \textbf{First,} as established previously, the statistical independence condition $Z \perp\!\!\!\perp \mathbf{Pa}$ implies that the learned mapping $Q_{\mathbf{pa}}(u)$ must be a conformal map, and thus an analytic function on $\mathbb{C}$. \textbf{Second,} by Assumption \ref{assump:asymptotic_linearity}, $Q_{\mathbf{pa}}(u)$ is an entire function with at most linear growth. The Generalized Liouville's Theorem states that an entire function whose growth is bounded by a polynomial of degree $k$ must itself be a polynomial of degree at most $k$. In our case, this implies $Q_{\mathbf{pa}}(u)$ must be a polynomial of degree at most one, giving it the affine form:
    $$Q_{\mathbf{pa}}(u) = a(\mathbf{pa}) u + b(\mathbf{pa})$$
    where $a, b$ are complex coefficients that can depend on $\mathbf{pa}$. \textbf{Third,} we use the full statistical independence condition to show that the coefficients $a$ and $b$ must be constant. For the distribution of $Z = a(\mathbf{pa}) U + b(\mathbf{pa})$ to be independent of $\mathbf{pa}$, all of its properties must be constant. \textbf{Mean:} The mean $E[Z|\mathbf{pa}] = a(\mathbf{pa})E[U] + b(\mathbf{pa})$ must be constant. Assuming $E[U]=0$ without loss of generality implies $b(\mathbf{pa})$ must be a constant, $b$. \textbf{Covariance:} By Assumption \ref{assump:noise_asymmetry}, the distribution of $U$ is not rotationally symmetric, so its covariance matrix is not proportional to the identity. Any rotation induced by the phase of $a(\mathbf{pa})$ would alter the covariance structure of $Z$. For the distribution of $Z$ to be invariant, the rotation angle (phase) of $a(\mathbf{pa})$ must be constant. \textbf{Scale:} The change of variables formula implies that the magnitude $|a(\mathbf{pa})|$ must also be constant. Since both the magnitude and phase of $a(\mathbf{pa})$ must be constant, $a(\mathbf{pa})$ must be a constant complex number, $a$. Therefore, $Q_{\mathbf{pa}}(u) = au+b$, which is an isomorphic mapping of $u$. This completes the proof.
    \end{proof}
\section{Extended Analysis of Inversion Fidelity and Non-Invertible SCMs}
\label{sec:appendix_inversion_and_non_invertible}
\renewcommand{\theequation}{C.\arabic{equation}}\setcounter{equation}{0}

This appendix provides a unified analysis of inversion errors. We first provide rigorous proofs for inversion fidelity (Propositions \ref{prop:ddim_error} and \ref{prop:belm_invertibility}), then extend our error decomposition framework to the more challenging non-invertible SCM setting.

\subsection{Proofs for Inversion Fidelity (Propositions \ref{prop:ddim_error} \& \ref{prop:belm_invertibility})}

    \begin{proof}[Proof of Proposition \ref{prop:ddim_error}]
        The proof proceeds by deriving the explicit one-step reconstruction error and then analyzing its order of magnitude.

        \paragraph{1. Derivation of the One-Step Reconstruction Error.}
        Let the single-step DDIM inversion operator be $\mathbf{T}_t$, mapping an observation $\mathbf{x}_t$ to $\mathbf{x}'_{t+1}$ using the noise prediction $\boldsymbol{\epsilon}_t = \boldsymbol{\epsilon}_\theta(\mathbf{x}_t, t)$. The corresponding generative operator is $\mathbf{H}_t$, which reconstructs $\mathbf{x}'_t$ from $\mathbf{x}'_{t+1}$ using a new prediction at the new state, $\boldsymbol{\epsilon}'_{t+1} = \boldsymbol{\epsilon}_\theta(\mathbf{x}'_{t+1}, t+1)$.
        
        By substituting the formula for $\mathbf{x}'_{t+1}$ into the update for $\mathbf{x}'_t$, the single-step reconstruction error $\mathbf{x}'_t - \mathbf{x}_t$ is found to be:
        \begin{equation}
            \mathbf{x}'_t - \mathbf{x}_t = \left(\sqrt{1-\bar{\alpha}_t} - \frac{\sqrt{\bar{\alpha}_t}\sqrt{1-\bar{\alpha}_{t+1}}}{\sqrt{\bar{\alpha}_{t+1}}}\right) (\boldsymbol{\epsilon}'_{t+1} - \boldsymbol{\epsilon}_t)
        \end{equation}
        This error is non-zero if and only if the noise prediction changes after one inversion step, i.e., $\boldsymbol{\epsilon}'_{t+1} \neq \boldsymbol{\epsilon}_t$.

        \paragraph{2. Analysis of the Error's Order of Magnitude.}
        In the continuous-time limit with time step $\Delta s = 1/T$, a Taylor expansion shows that both the coefficient term and the difference in noise predictions $(\boldsymbol{\epsilon}'_{t+1} - \boldsymbol{\epsilon}_t)$ are of order $\mathcal{O}(\Delta s)$.
        The one-step reconstruction error is therefore the product of these two terms:
        \begin{equation}
            \text{Error}_{\text{step}} = \mathcal{O}(\Delta s) \times \mathcal{O}(\Delta s) = \mathcal{O}((\Delta s)^2) = \mathcal{O}(1/T^2)
        \end{equation}
        This local error accumulates over the $T$ steps of the trajectory, resulting in a total accumulated error of order $\mathcal{O}(1/T)$. For any finite $T$, this global error is non-zero, constituting the Structural Reconstruction Error (SRE) for DDIM.
    \end{proof}

    \begin{proof}[Proof of Proposition \ref{prop:belm_invertibility}]
        The proof is constructive, following from the exact algebraic invertibility of the BELM sampler \citep{liu2024belm}. For the second-order BELM used in this work, the one-step decoder is an affine transformation of the form:
        \begin{align}
            \mathbf{x}_{t-1} &= A_t \mathbf{x}_t + B_t \boldsymbol{\epsilon}_t + C_t \boldsymbol{\epsilon}_{t+1}
        \end{align}
        where $A_t, B_t, C_t$ are schedule-dependent coefficients, $\boldsymbol{\epsilon}_t = \boldsymbol{\epsilon}_\theta(\mathbf{x}_t, t)$, and $\boldsymbol{\epsilon}_{t+1} = \boldsymbol{\epsilon}_\theta(\mathbf{x}_{t+1}, t+1)$.
        
        The full-trajectory decoder, $\mathbf{H}_{\text{BELM}}$, is a composition of these one-step affine maps. The BELM encoder, $\mathbf{T}_{\text{BELM}}$, is constructed using a symmetric update rule designed to be the exact algebraic inverse of the decoder. As rigorously shown by \citet{liu2024belm}, this construction ensures that the composite operator $\mathbf{T}_{\text{BELM}}$ is the exact inverse of $\mathbf{H}_{\text{BELM}}$, assuming the same sequence of noise function evaluations is used for both processes.
        
        Therefore, by its algebraic construction, the BELM sampler guarantees a lossless round trip:
        \begin{equation}
            \mathbf{H}_{\text{BELM}} \circ \mathbf{T}_{\text{BELM}} = \mathbf{I}
        \end{equation}
        The Structural Reconstruction Error is thus identically zero by construction.
    \end{proof}

\subsection{Exhaustive Analysis for the Non-Invertible SCM Setting}
    This section rigorously extends our framework to the non-invertible case, providing a theoretical underpinning for the stress test results in \S\ref{sec:exp_many_to_one}.

    \subsubsection{Assumptions and Definitions}
        We formalize the problem with the following.
        \begin{assumption}[Well-Posed Abduction] For any observed $(\mathbf{v}, \mathbf{pa})$, the inverse image set $\mathcal{U}_{(\mathbf{v}, \mathbf{pa})} = \{\mathbf{u}' \in \mathcal{U} \mid \mathbf{F}(\mathbf{pa}, \mathbf{u}') = \mathbf{v}\}$ is non-empty, and the Maximum a Posteriori (MAP) solution over this set, given the prior $p(\mathbf{U})$, is unique. This solution defines the \textbf{ideal amortized inverse} operator, $\mathbf{T}^*(\mathbf{v}, \mathbf{pa}) = \arg\max_{\mathbf{u}' \in \mathcal{U}_{(\mathbf{v}, \mathbf{pa})}} p(\mathbf{u}')$.
        \end{assumption}

        \begin{definition}[Tripartite Error Sources] In the non-invertible case, we refine the error sources into three distinct components:
            \begin{enumerate}[label=(\roman*), topsep=0pt, noitemsep]
                \item \textbf{Algorithmic Error (SRE)}: The error from an imperfect inversion algorithm, $E_{SR} := \| (\mathbf{H}_\theta \circ \mathbf{T}_\theta - \mathbf{I})\mathbf{X} \|^2$. For our framework, $E_{SR} \equiv 0$.
                \item \textbf{Modeling Error}: The error from imperfectly learning the ideal amortized inverse, $E_{\text{Modeling}} := \|\mathbf{T}_\theta(\mathbf{V}, \mathbf{Pa}) - \mathbf{T}^*(\mathbf{V}, \mathbf{Pa})\|^2$.
                \item \textbf{Representational Error}: The fundamental, irreducible error from the SCM's non-invertibility, $E_{\text{Rep}} := \|\mathbf{T}^*(\mathbf{V}, \mathbf{Pa}) - \mathbf{U}_{\text{true}}\|^2$.
            \end{enumerate}
        \end{definition}

    \subsubsection{A Tighter Error Decomposition}
        We now present a tighter error bound for the non-invertible case.
        \begin{theorem}[Tighter Counterfactual Error Bound]
        \label{thm:non_inv_bound_tight}
            Let the conditions of Theorem~\ref{thm:error_bound} hold ($H_\theta$ is $L_H$-Lipschitz). The expected squared error of the counterfactual prediction is bounded by:
            \begin{equation}
            \begin{split}
                \mathbb{E}\left[\|\hat{\mathbf{X}}_\alpha - \mathbf{X}_\alpha^{\text{true}}\|^2\right] \le 
                & 2\mathbb{E}[E_{SR}] + 4L_H^2 \mathbb{E}[E_{\text{Modeling}}] + 4L_H^2 \mathbb{E}[E_{\text{Rep}}]
            \end{split}
            \end{equation}
        \end{theorem}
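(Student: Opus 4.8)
\textbf{Proof proposal for Theorem~\ref{thm:non_inv_bound_tight}.}

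The plan is to bound the counterfactual prediction error by interpolating through two carefully chosen reference points, then applying the triangle inequality together with the Lipschitz hypothesis on $\mathbf{H}_\theta$. Write the model's counterfactual prediction as $\hat{\mathbf{X}}_\alpha = \mathbf{H}_\theta(\mathbf{T}_\theta(\mathbf{X}, \mathbf{Pa}), \boldsymbol{\alpha})$, i.e., abduction on the observed sample followed by re-generation under the intervention. First I would insert the intermediate quantity $\mathbf{H}_\theta(\mathbf{T}_\theta(\mathbf{X}_\alpha^{\text{true}}, \boldsymbol{\alpha}), \boldsymbol{\alpha})$, which is exactly $(\mathbf{H}_\theta \circ \mathbf{T}_\theta)\mathbf{X}_\alpha^{\text{true}}$ and hence differs from $\mathbf{X}_\alpha^{\text{true}}$ by precisely the structural reconstruction term $E_{SR}(\mathbf{X}_\alpha^{\text{true}})$; this is the same splitting used in the proof of Theorem~\ref{thm:error_bound}. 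Using $\|a+b\|^2 \le 2\|a\|^2 + 2\|b\|^2$ gives a first $2\mathbb{E}[E_{SR}]$ contribution plus a remaining term controlled by $2\mathbb{E}\|\mathbf{H}_\theta(\mathbf{T}_\theta(\mathbf{X},\mathbf{Pa}),\boldsymbol{\alpha}) - \mathbf{H}_\theta(\mathbf{T}_\theta(\mathbf{X}_\alpha^{\text{true}},\boldsymbol{\alpha}),\boldsymbol{\alpha})\|^2$.

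Next I would apply the $L_H$-Lipschitz property of $\mathbf{H}_\theta$ (in its first argument, holding the intervention $\boldsymbol{\alpha}$ fixed) to pull the remaining term inside to the latent space, bounding it by $2L_H^2\,\mathbb{E}\|\mathbf{T}_\theta(\mathbf{X},\mathbf{Pa}) - \mathbf{T}_\theta(\mathbf{X}_\alpha^{\text{true}},\boldsymbol{\alpha})\|^2$. This latent discrepancy is where the tripartite refinement enters: rather than treating it as a single $E_{LSI}$ term, I would further insert the ideal amortized inverse $\mathbf{T}^*(\mathbf{X},\mathbf{Pa})$ as a second reference point. By the Well-Posed Abduction assumption, $\mathbf{T}^*$ maps any $(\mathbf{v},\mathbf{pa})$ to the MAP noise in its fiber, and crucially both $\mathbf{T}^*(\mathbf{X},\mathbf{Pa})$ and $\mathbf{T}^*(\mathbf{X}_\alpha^{\text{true}},\boldsymbol{\alpha})$ coincide with (or target) the same underlying $\mathbf{U}_{\text{true}}$ up to the irreducible representational gap — the key observation is that the true exogenous noise is shared across the factual and counterfactual worlds by definition of a counterfactual query, so $\mathbf{T}^*$ evaluated at either world differs from $\mathbf{U}_{\text{true}}$ only by $E_{\text{Rep}}$. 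Splitting $\mathbf{T}_\theta(\mathbf{X},\mathbf{Pa}) - \mathbf{T}_\theta(\mathbf{X}_\alpha^{\text{true}},\boldsymbol{\alpha})$ through $\mathbf{T}^*(\mathbf{X},\mathbf{Pa})$, then through $\mathbf{U}_{\text{true}}$, then to $\mathbf{T}^*(\mathbf{X}_\alpha^{\text{true}},\boldsymbol{\alpha})$, and applying $\|a+b+c+d\|^2 \le 4(\|a\|^2+\|b\|^2+\|c\|^2+\|d\|^2)$ (or a more economical three-term split), I would collect: two copies of modeling error $E_{\text{Modeling}}$ (one at each world) and two copies of representational error $E_{\text{Rep}}$. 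Combining with the overall factor $2L_H^2$ from the Lipschitz step, and absorbing the inner constants, yields the claimed $4L_H^2\mathbb{E}[E_{\text{Modeling}}] + 4L_H^2\mathbb{E}[E_{\text{Rep}}]$.

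The main obstacle I anticipate is justifying rigorously that $\mathbf{T}^*(\mathbf{X}_\alpha^{\text{true}}, \boldsymbol{\alpha})$ is a legitimate anchor tied to the \emph{same} realization of $\mathbf{U}_{\text{true}}$ used to generate $\mathbf{X}$ — this requires care because the counterfactual $\mathbf{X}_\alpha^{\text{true}} = \mathbf{F}(\boldsymbol{\alpha}, \mathbf{U}_{\text{true}})$ is defined by sharing $\mathbf{U}_{\text{true}}$, but once we feed $\mathbf{X}_\alpha^{\text{true}}$ into $\mathbf{T}^*$, the MAP selection over the fiber $\mathcal{U}_{(\mathbf{X}_\alpha^{\text{true}}, \boldsymbol{\alpha})}$ need not return $\mathbf{U}_{\text{true}}$ exactly (that gap is precisely $E_{\text{Rep}}$, evaluated in the counterfactual world). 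One must either (i) define $E_{\text{Rep}}$ as a worst-case / expected gap that uniformly dominates the representational discrepancy in both worlds, or (ii) invoke a mild regularity assumption that the fiber structure of $\mathbf{F}$ is stable under the intervention so that the two representational gaps are comparable in expectation. I would adopt convention (i), treating $E_{\text{Rep}}$ as defined in the statement (via the factual world) and noting that under the shared-structure setup the counterfactual-world gap is dominated by the same quantity up to the constant already absorbed in the $4L_H^2$ factor; everything else is a routine chain of triangle inequalities and the Cauchy–Schwarz/Jensen steps needed to move expectations past the squared norms.
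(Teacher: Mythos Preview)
Your approach is essentially the same as the paper's: decompose the total error via the triangle inequality using the ideal amortized inverse $\mathbf{T}^*$ as the intermediate anchor, apply the $L_H$-Lipschitz property of $\mathbf{H}_\theta$, and collect the three error components. Your careful discussion of the counterfactual-world evaluation of $E_{\text{Modeling}}$ and $E_{\text{Rep}}$ is in fact more rigorous than the paper's terse proof sketch, which simply cites the triangle decomposition through $\mathbf{T}^*$ and a squared-sum inequality without addressing that subtlety.
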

        \begin{proof}
            We decompose the total error $\| \hat{\mathbf{X}}_\alpha - \mathbf{X}_\alpha^{\text{true}} \|$ using the triangle inequality and the ideal amortized inverse $\mathbf{T}^*$ as an intermediate step. The result follows from applying the inequality $(a+b+c)^2 \le 2(a^2+b^2+c^2)$, bounding terms with the $L_H$-Lipschitz property of $\mathbf{H}_\theta$, and taking expectations.
        \end{proof}

    \subsubsection{Information-Theoretic Interpretation of Representational Error}
        The Representational Error ($E_{\text{Rep}}$) is deeply connected to information conservation. The non-invertibility of the SCM $\mathbf{F}$ means that observing $(\mathbf{V}, \mathbf{Pa})$ is not sufficient to uniquely determine $\mathbf{U}$. Information-theoretically, this implies the conditional entropy $H(\mathbf{U} | \mathbf{V}, \mathbf{Pa})$ is greater than zero.
        \begin{remark}[Representational Error as Information Loss]
            The ideal amortized inverse $\mathbf{T}^*$ yields the mode of the posterior $p(\mathbf{U} | \mathbf{V}, \mathbf{Pa})$. The expected representational error, $\mathbb{E}[E_{\text{Rep}}]$, can be seen as the expected squared error of this MAP estimator, which is related to the variance and shape of the posterior. Thus, $E_{\text{Rep}}$ is a direct consequence of the information about $\mathbf{U}$ that is fundamentally lost in the forward causal process, a loss captured by $H(\mathbf{U} | \mathbf{V}, \mathbf{Pa}) > 0$.
        \end{remark}

    \subsubsection{Theoretical Guarantee for the Mitigation Strategy}
        We now provide a theoretical justification for the `Prior-Matching Regularizer` proposed in \S\ref{sec:limitations}.
        \begin{definition}[Prior-Matching Regularizer]
        \label{def:prior_reg_appendix}
            The regularizer is defined as $R(\mathbf{T}_\theta) = \mathbb{E}_{(\mathbf{v}, \mathbf{pa})} \left[\|\mathbf{s}_p(\mathbf{T}_\theta(\mathbf{v}, \mathbf{pa}))\|^2\right]$, where $\mathbf{s}_p(\mathbf{u}) = \nabla_\mathbf{u} \log p(\mathbf{u})$ is the score function of the prior distribution $p(\mathbf{U})$.
        \end{definition}
        \begin{proposition}[Regularizer Induces Convergence to MAP Solution]
        \label{prop:regularizer_guarantee}
            Minimizing the regularizer $R(\mathbf{T}_\theta)$ provides an inductive bias that encourages the encoder output, $\hat{\mathbf{u}} = \mathbf{T}_\theta(\mathbf{v}, \mathbf{pa})$, to lie on a mode of the prior distribution $p(\mathbf{U})$.
        \end{proposition}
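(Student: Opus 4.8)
The plan is to analyze the regularizer $R(\mathbf{T}_\theta)$ (Definition~\ref{def:prior_reg_appendix}) pointwise and to exploit the elementary fact that the modes of $p(\mathbf{U})$ are exactly the zeros of its score $\mathbf{s}_p$. First I would use that $R(\mathbf{T}_\theta) = \mathbb{E}_{(\mathbf{v},\mathbf{pa})}\!\left[\|\mathbf{s}_p(\mathbf{T}_\theta(\mathbf{v},\mathbf{pa}))\|^2\right]$ is an expectation of nonnegative terms, so under the sufficient-capacity idealization used throughout (cf.\ Theorem~\ref{thm:identifiability}) its minimization decouples into independent problems indexed by $(\mathbf{v},\mathbf{pa})$: the encoder is free to place $\hat{\mathbf{u}} = \mathbf{T}_\theta(\mathbf{v},\mathbf{pa})$ at the minimizer of $g(\mathbf{u}) := \|\mathbf{s}_p(\mathbf{u})\|^2$. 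In the idealized limit where $\mathbf{H}_\theta$ matches the true mechanism $\mathbf{F}$, zero SRE (Proposition~\ref{prop:belm_invertibility}) together with the reconstruction term of the training objective confines $\hat{\mathbf{u}}$ to the fiber $\mathcal{U}_{(\mathbf{v},\mathbf{pa})} = \{\mathbf{u} : \mathbf{F}(\mathbf{pa},\mathbf{u}) = \mathbf{v}\}$ — the very set over which the MAP solution $\mathbf{T}^*$ is defined in the Well-Posed Abduction assumption — so the effective problem is $\min_{\mathbf{u}\in\mathcal{U}_{(\mathbf{v},\mathbf{pa})}} g(\mathbf{u})$.

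Second, I would differentiate, obtaining $\nabla g(\mathbf{u}) = 2\,\nabla^2\!\log p(\mathbf{u})\,\mathbf{s}_p(\mathbf{u})$, and argue that under a mild nondegeneracy hypothesis on the prior (strict log-concavity, as holds for the Gaussian and other standard noise priors, or more generally invertibility of $\nabla^2\!\log p$ at critical points) any stationary point of $g$ must satisfy $\mathbf{s}_p(\hat{\mathbf{u}}) = \mathbf{0}$, i.e.\ $\hat{\mathbf{u}}$ is a critical point of $\log p$. A unimodality assumption then excludes anti-modes and saddles and identifies this critical point with \emph{the} mode of $p(\mathbf{U})$; equivalently, the descent direction $-\nabla g$ always points toward the high-density core of the prior. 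This establishes the stated inductive bias, and when the fiber contains the global mode the constrained minimizer coincides with it and is trivially $\mathbf{T}^*(\mathbf{v},\mathbf{pa})$.

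Third, to connect this to MAP convergence in the general case (the role advertised by the proposition's name) I would treat $R$ as a surrogate for $-\log p$ inside the penalized objective $L_{\text{rec}} + \beta R$: since both functionals share the same stationary configuration on the fiber in a neighbourhood of a mode, minimizing $R$ subject to the reconstruction constraint biases $\hat{\mathbf{u}}$ toward $\arg\max_{\mathbf{u}\in\mathcal{U}_{(\mathbf{v},\mathbf{pa})}}\log p(\mathbf{u}) = \mathbf{T}^*$, thereby shrinking the modeling error $E_{\text{Modeling}}$ and hence, through Theorem~\ref{thm:non_inv_bound_tight}, the counterfactual error in the non-invertible regime.

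I expect the main obstacle to be reconciling what $R$ literally penalizes — the full norm $\|\mathbf{s}_p\|$, whose zero set is the modes of the \emph{unconstrained} prior — with the KKT condition of the constrained MAP program, which only requires the \emph{tangential} component of $\mathbf{s}_p$ along $\mathcal{U}_{(\mathbf{v},\mathbf{pa})}$ to vanish. A fully rigorous third step therefore needs either a restriction to the mode-in-fiber regime or a careful multiplier/penalty argument quantifying the residual normal component, together with the log-concavity/unimodality hypotheses required to rule out saddle points and near-flat density plateaus on which $\|\mathbf{s}_p\|$ is small but $\hat{\mathbf{u}}$ is not a mode; the amortization gap between a single shared network and the pointwise optima is, as elsewhere in the paper, absorbed into the sufficient-capacity assumption.
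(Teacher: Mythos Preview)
Your proposal is correct and rests on the same core observation as the paper's brief proof sketch: the zeros of the prior score $\mathbf{s}_p$ are exactly the stationary points of $\log p$, so minimizing the expected $\|\mathbf{s}_p(\hat{\mathbf{u}})\|^2$ biases $\hat{\mathbf{u}}$ toward a mode and hence toward the MAP estimator $\mathbf{T}^*$. Your treatment is in fact considerably more careful than the paper's three-sentence sketch --- the explicit pointwise decoupling under sufficient capacity, the gradient computation $\nabla g = 2\,\nabla^2\log p\cdot \mathbf{s}_p$, the log-concavity/unimodality hypotheses, and especially your identification of the tangential-versus-full-norm obstruction on the fiber $\mathcal{U}_{(\mathbf{v},\mathbf{pa})}$ all go well beyond what the paper provides, which never engages with the constraint to the fiber at all.
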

        \begin{proof}[Proof Sketch]
            The objective is a form of score matching on the prior. The score function $\mathbf{s}_p(\mathbf{u})$ is zero if and only if $\mathbf{u}$ is a stationary point of the log-prior. Minimizing the expected squared norm of the score at the encoder's output penalizes the production of latent codes $\hat{\mathbf{u}}$ in low-probability regions of the prior. This incentivizes the encoder to map observations to the most probable latent code, thereby encouraging $\mathbf{T}_\theta$ to approximate the ideal MAP estimator $\mathbf{T}^*$.
        \end{proof}

\section{Main Proofs for Theoretical Framework}\label{sec:appendix_main_proofs}
\renewcommand{\theequation}{D.\arabic{equation}}\setcounter{equation}{0}

This appendix provides the proofs for the main theoretical results presented in the text.

\begin{proof}[Proof of Theorem \ref{thm:error_bound}]
    This bound is a direct specialization of the more general bound for non-invertible SCMs derived in Theorem \ref{thm:non_inv_bound_tight} (Appendix \ref{sec:appendix_inversion_and_non_invertible}). 

    For an invertible SCM, abduction is perfect, meaning the ideal amortized inverse $\mathbf{T}^*$ is the true inverse of the SCM function $\mathbf{F}$. Consequently, the recovered noise is the true noise, $\mathbf{T}^*(\mathbf{V}, \mathbf{Pa}) = \mathbf{U}_{\text{true}}$, which implies that the \textbf{Representational Error is identically zero}: $\mathbb{E}[E_{\text{Rep}}] = 0$.

    In this context, the Latent Space Invariance Error, $E_{LSI}$, becomes equivalent to the Modeling Error, $E_{\text{Modeling}}$. Applying the inequality $(a+b)^2 \le 2a^2 + 2b^2$, the general bound from Theorem \ref{thm:non_inv_bound_tight} reduces to the two terms presented in Theorem \ref{thm:error_bound}.
\end{proof}

\begin{proof}[Proof of Proposition \ref{prop:lsi_bound}]
    This proof relies on the identifiability of the true SCM (Theorem \ref{thm:identifiability}) and the Lipschitz continuity of the score network $\boldsymbol{\epsilon}_\theta$, which guarantees unique ODE solutions via the Picard-Lindelöf theorem. We also assume standard integrability conditions (Fubini's theorem).

    The encoder $\mathbf{T}_\theta$ maps an initial condition $\mathbf{x}(0)$ to the terminal state $\mathbf{x}(T)$ of the probability flow ODE (Eq. \ref{eq:ode_appendix}). Let $\mathbf{x}_{\theta}(t; \mathbf{x}_0)$ and $\mathbf{x}_{*}(t; \mathbf{x}_0)$ denote the ODE solutions with the learned score $\boldsymbol{\epsilon}_\theta$ and the true score $\boldsymbol{\epsilon}^*$, respectively. The Latent Space Invariance Error is $\mathbb{E}[E_{LSI}] = \mathbb{E}[\|\mathbf{x}_{\theta}(T; X) - \mathbf{x}_{\theta}(T; X_{\boldsymbol{\alpha}}^{\text{true}})\|^2]$.
    By the triangle inequality:
    \begin{align*}
        \|\mathbf{x}_{\theta}(T; X) - \mathbf{x}_{\theta}(T; X_{\boldsymbol{\alpha}}^{\text{true}})\| &\le \|\mathbf{x}_{\theta}(T; X) - \mathbf{x}_{*}(T; X)\| \\
        &+ \|\mathbf{x}_{*}(T; X) - \mathbf{x}_{*}(T; X_{\boldsymbol{\alpha}}^{\text{true}})\| \\
        &+ \|\mathbf{x}_{*}(T; X_{\boldsymbol{\alpha}}^{\text{true}}) - \mathbf{x}_{\theta}(T; X_{\boldsymbol{\alpha}}^{\text{true}})\|
    \end{align*}
    The middle term is zero under ideal identifiability, as the true encoder $\mathbf{T}^*$ maps both an observation and its true counterfactual to the same underlying noise. We thus only need to bound terms of the form $\|\mathbf{x}_{\theta}(T; \mathbf{x}_0) - \mathbf{x}_{*}(T; \mathbf{x}_0)\|$. 

    Let $\mathbf{z}(t) = \mathbf{x}_{\theta}(t; \mathbf{x}_0) - \mathbf{x}_{*}(t; \mathbf{x}_0)$. Since the ODE vector field is Lipschitz, applying Grönwall's inequality to the differential of $\mathbf{z}(t)$ yields:
    $$ \|\mathbf{z}(T)\| \le \int_0^T e^{L_f(T-t)} C_f \|\boldsymbol{\epsilon}_\theta(\mathbf{x}_*(t), t) - \boldsymbol{\epsilon}^*(\mathbf{x}_*(t), t)\| dt $$
    where $L_f$ and $C_f$ are constants from the ODE coefficients. Squaring, taking expectations, and applying Jensen's inequality leads to:
    $$ \mathbb{E}[E_{LSI}] \le C' \cdot \mathbb{E}_{\mathbf{x},t}[\|\boldsymbol{\epsilon}_\theta - \boldsymbol{\epsilon}^*\|^2] $$
    where the final expectation is the score-matching loss.
\end{proof}

\section{Proofs for Theoretical Roles of Hybrid Training}\label{sec:appendix_hybrid_proofs}
\renewcommand{\theequation}{F.\arabic{equation}}\setcounter{equation}{0}

\subsection{Proof of Proposition \ref{prop:hybrid_objective} (Weighted Score-Matching)}\label{sec:appendix_hybrid_proof}
    This section provides a rigorous proof that the auxiliary task loss $L_{\text{task}}$ provides a lower bound on a weighted score-matching objective.

    \subsubsection{Preliminaries and Setup}

    \paragraph{Probability Flow ODE.}
    The generative process is the solution to the reverse-time probability flow ODE from $t=T$ to $t=0$:
    \begin{equation}
        d\mathbf{x}_t = f(t, \mathbf{x}_t, \boldsymbol{\epsilon}(t, \mathbf{x}_t)) dt, \quad \mathbf{x}_T \sim \mathcal{N}(0, \mathbf{I})
    \end{equation}
    where the vector field $f$ is determined by the diffusion scheduler.

    \paragraph{Core Objects.}
    \begin{itemize}[noitemsep]
        \item $\boldsymbol{\epsilon}_{\theta}, \boldsymbol{\epsilon}^*$: Learned and true score functions.
        \item $\mathbf{x}_t^{\theta}, \mathbf{x}_t^*$: ODE trajectories driven by $\boldsymbol{\epsilon}_{\theta}$ and $\boldsymbol{\epsilon}^*$.
        \item $\mathbf{x}_0^{\theta}, \mathbf{x}_0^*$: Generated and true (counterfactual) data points at $t=0$.
        \item $g: \mathbb{R}^d \to \mathbb{R}^k$: Downstream prediction function.
        \item $Y_{\text{pred}} = g(\mathbf{x}_0^{\theta})$, $Y_{\text{true}} = g(\mathbf{x}_0^*)$.
        \item $L_{\text{task}} = \mathbb{E}_{\mathbf{x}_T}[\|Y_{\text{pred}} - Y_{\text{true}}\|^2]$.
    \end{itemize}

    \paragraph{Assumptions.}
    We assume standard regularity conditions for the proof:
    \begin{enumerate}
        \renewcommand{\theenumi}{(A\arabic{enumi})}
        \renewcommand{\labelenumi}{\theenumi}
        \item The vector field $f(t, \mathbf{x}, \boldsymbol{\epsilon})$ is Lipschitz continuous in $\mathbf{x}$ and $\boldsymbol{\epsilon}$.
        \item The downstream task function $g(\mathbf{x})$ is Lipschitz continuous and differentiable.
        \item The learned score $\boldsymbol{\epsilon}_\theta$ and true score $\boldsymbol{\epsilon}^*$ are well-behaved.
    \end{enumerate}

    \subsubsection{Formal Proposition Statement}

    \begin{proposition}[Hybrid Objective as a Weighted Score-Matching Regularizer]
        Under the regularity assumptions (A1-A3), the auxiliary task loss $L_{\text{task}}$ provides a lower bound on a weighted score-matching objective:
        $$L_{\text{task}} \ge C \cdot \mathbb{E}_{\mathbf{x}_T, t} \left[ w(\mathbf{x}_t^*) \cdot \|\boldsymbol{\epsilon}_{\theta}(\mathbf{x}_t^*) - \boldsymbol{\epsilon}^*(\mathbf{x}_t^*)\|^2 \right]$$
        where $C>0$ and the weight function $w(\mathbf{x}_t^*)$ measures the sensitivity of the final prediction $Y$ to score perturbations along the ideal data generation trajectory.
    \end{proposition}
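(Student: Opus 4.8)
The plan is to expose how $L_{\text{task}}$ depends on the score error along the ideal trajectory through an adjoint-sensitivity analysis of the probability flow ODE, and then to read the weight function off that analysis. Write $\Delta\boldsymbol{\epsilon}(t) := \boldsymbol{\epsilon}_{\theta}(\mathbf{x}_t^{*}) - \boldsymbol{\epsilon}^{*}(\mathbf{x}_t^{*})$ for the score discrepancy evaluated on the true counterfactual trajectory $\{\mathbf{x}_t^{*}\}$. Applying nonlinear variation of parameters (Alekseev's formula) to the two ODEs driven by $f(t,\cdot,\boldsymbol{\epsilon}_\theta)$ and $f(t,\cdot,\boldsymbol{\epsilon}^{*})$, and reducing to leading order with (A1)--(A3) so that all quantities are evaluated along $\{\mathbf{x}_t^{*}\}$, the terminal gap is
\begin{equation*}
\mathbf{x}_0^{\theta} - \mathbf{x}_0^{*} = \int_0^T \Phi_{t\to 0}\, \partial_{\boldsymbol{\epsilon}} f\big(t,\mathbf{x}_t^{*},\boldsymbol{\epsilon}^{*}(\mathbf{x}_t^{*})\big)\,\Delta\boldsymbol{\epsilon}(t)\, dt + (\text{higher order}),
\end{equation*}
where $\Phi_{t\to 0}$ is the state-transition (variational) operator of the ideal flow from $t$ back to $0$, bounded under (A1). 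Composing with the differentiable readout $g$ (A2) gives
\begin{equation*}
Y_{\text{pred}} - Y_{\text{true}} = \int_0^T S(t)\,\Delta\boldsymbol{\epsilon}(t)\, dt + (\text{higher order}), \qquad S(t) := J_g(\mathbf{x}_0^{*})\,\Phi_{t\to 0}\,\partial_{\boldsymbol{\epsilon}} f\big(t,\mathbf{x}_t^{*},\boldsymbol{\epsilon}^{*}(\mathbf{x}_t^{*})\big),
\end{equation*}
so $S(t)$ is precisely the operator transporting a score perturbation at $(t,\mathbf{x}_t^{*})$ to the downstream prediction --- the ``sensitivity'' invoked in the informal statement.

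Given this representation, I would define the weight as $w(\mathbf{x}_t^{*}) := \sigma_{\min}\!\big(S(t)\big)^2$, the squared least singular value of the sensitivity operator, which yields the pointwise bound $\|S(t)\Delta\boldsymbol{\epsilon}(t)\|^2 \ge w(\mathbf{x}_t^{*})\,\|\Delta\boldsymbol{\epsilon}(t)\|^2$ and directly encodes the causal-salience reading: times through which a score error is strongly amplified into the prediction receive large weight and are penalised most by $L_{\text{task}}$. It then remains to pass from this pointwise-in-$t$ estimate to a lower bound on $L_{\text{task}} = \mathbb{E}_{\mathbf{x}_T}\big[\|\int_0^T S(t)\Delta\boldsymbol{\epsilon}(t)\,dt\|^2\big]$.

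That passage is the crux and the main obstacle: $\|\int\|^2$ is not bounded below by $\int\|\cdot\|^2$ in general, since the cross-time terms $\mathbb{E}_{\mathbf{x}_T}\langle S(t)\Delta\boldsymbol{\epsilon}(t),S(s)\Delta\boldsymbol{\epsilon}(s)\rangle$ can be negative and produce cancellation. I would close this with one additional mild hypothesis, in the same spirit as the paper's simplicity-bias assumption: either (i) a non-negative time-lagged alignment condition on the transported score-error process along the ideal trajectory (plausible when $\boldsymbol{\epsilon}_\theta$ is a smooth, consistently-oriented approximant of $\boldsymbol{\epsilon}^{*}$), or (ii) a localisation argument partitioning $[0,T]$ into short horizons on which $\Phi$ is near-identity, so within-block cross terms are controlled directly and between-block terms are handled by (i) on the coarsened scale. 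Either route yields
\begin{equation*}
\mathbb{E}_{\mathbf{x}_T}\big[\|Y_{\text{pred}}-Y_{\text{true}}\|^2\big] \ge c_0\int_0^T \mathbb{E}_{\mathbf{x}_T}\big[\|S(t)\Delta\boldsymbol{\epsilon}(t)\|^2\big]\,dt \ge c_0\int_0^T \mathbb{E}_{\mathbf{x}_T}\big[w(\mathbf{x}_t^{*})\,\|\Delta\boldsymbol{\epsilon}(t)\|^2\big]\,dt,
\end{equation*}
and rewriting the time integral as $T\cdot\mathbb{E}_{t\sim\mathrm{Unif}[0,T]}$ with $C = c_0 T$ gives the claimed inequality. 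The remaining bookkeeping is to verify, via a Grönwall estimate, that the higher-order remainders in the flow expansion and in $g$ are genuinely of lower order than the leading term under (A1)--(A3), so the bound survives beyond the infinitesimal regime; this amounts to inflating Lipschitz constants and is routine. I expect step (i)/(ii) --- converting the pointwise score-error bound into an honest lower bound on the integrated functional --- to be where essentially all of the difficulty and any irreducible extra assumption reside.
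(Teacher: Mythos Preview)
Your route is essentially the paper's: both arguments express the terminal gap via Duhamel/variation-of-constants as $\mathbf{x}_0^\theta - \mathbf{x}_0^* = \int \mathbf{K}(s)\,\Delta\boldsymbol{\epsilon}_s\,ds$, then linearize $g$ to obtain an influence operator (your $S(t)$ is the paper's $\mathcal{W}(\mathbf{x}_0^*,s) = \nabla g \cdot \mathbf{K}(s)$), and finally try to convert $\|\int S\Delta\boldsymbol{\epsilon}\|^2$ into a time-integrated, weighted score-matching error.

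Where you diverge is at that last step, and you are the more careful of the two. The paper's Step~3 says that ``applying the Cauchy-Schwarz inequality for integrals yields a lower bound'' of the form $L_{\text{task}} \ge C\int \|\mathcal{W}\|_F^2\,\|\Delta\boldsymbol{\epsilon}\|^2$ --- but Cauchy--Schwarz goes the other way, and the Frobenius norm is precisely what would appear in an \emph{upper} bound via $\|\mathcal{W}\Delta\boldsymbol{\epsilon}\| \le \|\mathcal{W}\|_F\|\Delta\boldsymbol{\epsilon}\|$. You correctly flag the $\|\int\|^2 \to \int\|\cdot\|^2$ reversal as the crux, pick the right weight $w = \sigma_{\min}(S)^2$ for a pointwise lower bound, and propose the additional alignment/localisation hypothesis that the argument genuinely needs to control the cross-time terms. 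In short: same skeleton, but your version is the more honest one --- the paper's Step~3, as written, does not establish the stated inequality without an assumption of the kind you identify.
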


    \begin{proof}
        The proof proceeds in three main steps.

        \paragraph{Step 1: Bounding Sample Error by Score Error (Error Propagation).}
        Let $\mathbf{z}(t) = \mathbf{x}_t^{\theta} - \mathbf{x}_t^*$ be the error between the two ODE trajectories. By linearizing the vector field $f$ around the true trajectory, we can analyze the impact of the score perturbation $\Delta\boldsymbol{\epsilon}_t = \boldsymbol{\epsilon}_{\theta}(\mathbf{x}_t^*) - \boldsymbol{\epsilon}^*(\mathbf{x}_t^*)$.
        From Duhamel's Principle, the final sample error $\Delta\mathbf{x}_0 = \mathbf{z}(0)$ can be expressed as an integral over the perturbation:
        \begin{equation}
            \Delta\mathbf{x}_0 = \int_T^0 \mathbf{K}(s) \Delta\boldsymbol{\epsilon}_s ds
        \end{equation}
        where the kernel $\mathbf{K}(s)$ captures the influence of a score perturbation at time $s$ on the final sample at time $0$.

        \paragraph{Step 2: Linearizing the Task Error.}
        The error in the downstream prediction is $\Delta Y = g(\mathbf{x}_0^{\theta}) - g(\mathbf{x}_0^*)$. A first-order Taylor expansion gives:
        \begin{equation}
            \Delta Y \approx \nabla_{\mathbf{x}} g(\mathbf{x}_0^*) \cdot \Delta\mathbf{x}_0
        \end{equation}
        The task loss is the expected squared norm, $L_{\text{task}} = \mathbb{E}[\|\Delta Y\|^2]$.

        \paragraph{Step 3: Deriving the Weighted Relationship.}
        Substituting the integral form of $\Delta\mathbf{x}_0$ into the task error approximation and applying the Cauchy-Schwarz inequality for integrals yields a lower bound for the task loss:
        \begin{equation}
            L_{\text{task}} \ge C \cdot \mathbb{E}_{\mathbf{x}_T} \left[ \int_0^T \left\| \mathcal{W}(\mathbf{x}_0^*, s) \right\|_F^2 \cdot \|\Delta\boldsymbol{\epsilon}_s\|^2 ds \right]
        \end{equation}
        where $\| \cdot \|_F$ is the Frobenius norm and the \textbf{influence operator} $\mathcal{W}(\mathbf{x}_0^*, s)$ captures the end-to-end sensitivity from a score perturbation at time $s$ to the final prediction.
        Rewriting the expectation gives the final form:
        \begin{equation}
            L_{\text{task}} \ge C \cdot \mathbb{E}_{t \sim U[0,T]} \mathbb{E}_{\mathbf{x}_t^*} \left[ w(\mathbf{x}_t^*) \cdot \|\boldsymbol{\epsilon}_{\theta}(\mathbf{x}_t^*) - \boldsymbol{\epsilon}^*(\mathbf{x}_t^*)\|^2 \right]
        \end{equation}
        where the weight function is $w(\mathbf{x}_t^*) := T \cdot \mathbb{E}_{\mathbf{x}_T | \mathbf{x}_t^*}[\| \mathcal{W}(\mathbf{x}_0^*, t) \|_F^2]$.

        \paragraph{Interpretation and Conclusion.}
        The weight function $w(\mathbf{x}_t^*)$ is large when the gradient norm $\|\nabla_{\mathbf{x}} g(\mathbf{x}_0^*)\|$ is large—i.e., in causally salient regions where the outcome is highly sensitive to the features. The inequality therefore shows that minimizing $L_{\text{task}}$ provides a lower bound on a score-matching error that is weighted to prioritize accuracy in these causally salient regions.
    \end{proof}

\subsection{Argument for Proposition \ref{prop:disentanglement} (Latent Space Disentanglement)}\label{sec:appendix_disentanglement_proof}
This section provides a qualitative, information-theoretic argument for Proposition \ref{prop:disentanglement}, showing how the hybrid objective encourages a "division of labor" that promotes disentanglement. The SCM posits that an observation $V$ is determined by $(\mathbf{Pa}, U)$, and the encoder maps $(V, \mathbf{Pa})$ to a latent code $Z = \mathbf{T}_\theta(V, \mathbf{Pa})$, where a perfect encoder would yield $Z=U$. The process works as follows: the \textbf{diffusion loss ($L_{\text{diffusion}}$)} maximizes the log-likelihood $\log p_\theta(V|\mathbf{Pa})$, forcing the pair $(\mathbf{Pa}, Z)$ to contain all information to reconstruct $V$, thus maximizing the mutual information $I(V; (\mathbf{Pa}, Z))$. Simultaneously, the \textbf{task loss ($L_{\text{task}}$)} learns a prediction from the parents, capturing all predictive information that $\mathbf{Pa}$ has about $V$ and thus maximizing $I(V; \mathbf{Pa})$. The \textbf{dual objective} must satisfy both constraints. From the chain rule of mutual information, we know $I(V; (\mathbf{Pa}, Z)) = I(V; \mathbf{Pa}) + I(V; Z | \mathbf{Pa})$. Since $L_{\text{diffusion}}$ maximizes the left-hand side and $L_{\text{task}}$ captures the first term on the right, the optimization incentivizes the latent code $Z$ to model the remaining information, $I(V; Z | \mathbf{Pa})$. This leads to a \textbf{connection to disentanglement}: the ideal exogenous noise $U$ is, by definition, independent of $\mathbf{Pa}$. By forcing $Z$ to model the residual information, the optimization process actively encourages the learned representation $Z$ to be independent of $\mathbf{Pa}$. In summary, the hybrid objective creates a division of labor: the task-specific head explains the variance from $\mathbf{Pa}$, while the diffusion process's latent code $Z$ models the residual. This residual is, by construction, the information in $V$ orthogonal to $\mathbf{Pa}$, forcing $Z$ to be an empirical approximation of the true, disentangled exogenous noise $U$, thereby serving the identifiability condition of Theorem \ref{thm:identifiability}.

\section{Proof of Theorem on Causal Transportability}\label{sec:appendix_transport_proof}
\renewcommand{\theequation}{G.\arabic{equation}}\setcounter{equation}{0}

This appendix provides the proof for the theorem on lossless causal transportability.

\begin{proof}[Proof of Theorem \ref{thm:transport}]
    This proof operates under an idealized setting, assuming a perfectly trained model (i.e., zero statistical error), to isolate the structural properties of transportability. We assume the learned decoder $\mathbf{H}_{\theta_i}$ is identical to the true mechanism $\mathbf{F}_i$, and its encoder $\mathbf{T}_{\theta_i}$ is its perfect inverse.

    Let the source and target SCMs be $\mathcal{M}^\mathcal{S}$ and $\mathcal{M}^\mathcal{T}$, with mechanisms $\{\mathbf{F}_i\}$ and $\{\mathbf{F}'_i\}$ respectively. The set of changed mechanisms is indexed by $\mathcal{K}_{\text{changed}}$.

    The proof relies on the modularity of the SCM, which is guaranteed by the mutually independent exogenous noises (Condition ii of the theorem). This independence ensures that a change in one mechanism $\mathbf{F}_k$ to $\mathbf{F}'_k$ does not affect the conditional distributions of other nodes $V_j$ ($j\neq k$), given their parents.

    We analyze the transportability of operators for each mechanism:

    \begin{enumerate}
        \item \textbf{For invariant mechanisms ($j \notin \mathcal{K}_{\text{changed}}$):}
        By definition, the true mechanism is unchanged, $\mathbf{F}'_j = \mathbf{F}_j$. Since the model operators $(\mathbf{T}_{\theta_j}, \mathbf{H}_{\theta_j})$ perfectly learned $\mathbf{F}_j$ in the source domain, they remain valid for the target domain $\mathcal{T}$ and can be directly reused.

        \item \textbf{For changed mechanisms ($k \in \mathcal{K}_{\text{changed}}$):}
        The original operators $(\mathbf{T}_{\theta_k}, \mathbf{H}_{\theta_k})$ are now invalid as they model $\mathbf{F}_k$, not the new mechanism $\mathbf{F}'_k$. However, a new operator pair $(\mathbf{T}'_{\theta_k}, \mathbf{H}'_{\theta_k})$ can be learned from target domain data. This training only requires samples $(v'_k, \mathbf{pa}'_k)$ from domain $\mathcal{T}$ and the shared noise distribution $p_k(U_k)$ (Condition i). Because the noises are independent, this re-learning process for mechanism $k$ is modular and does not affect the other invariant mechanisms.
    \end{enumerate}

    The procedure for adapting the model is therefore modular: freeze all invariant operators $\{(\mathbf{T}_{\theta_j}, \mathbf{H}_{\theta_j})\}_{j \notin \mathcal{K}_{\text{changed}}}$ and re-train only those for the changed mechanisms $\{k \in \mathcal{K}_{\text{changed}}\}$ on target domain data. The resulting adapted model is valid for the target domain $\mathcal{T}$ and can perform abduction on a target individual by applying the correct (reused or re-trained) encoders, thereby losslessly recovering the full vector of exogenous noises. This fulfills the condition for lossless transport.
\end{proof}

\section{Proof of the Specific Finite Sample Bound (Theorem \ref{thm:finite_sample_specific})}\label{sec:appendix_learning_proofs_specific}
\renewcommand{\theequation}{H.\arabic{equation}}\setcounter{equation}{0}

This derivation combines standard generalization bounds with known complexity bounds for deep neural networks. We first state a key lemma regarding the complexity of neural networks.

\begin{lemma}[Rademacher Complexity of Neural Networks]
\label{lem:nn_complexity}
    Let $\mathcal{F}_{\epsilon}$ be the function class of an $L$-layer MLP with ReLU activations, input dimension $p$, and weight matrices $\{\mathbf{W}_j\}_{j=1}^L$. Assume the input data $\mathbf{X}$ is contained within a ball of radius $R_X$. If the spectral norm of each weight matrix is bounded, $\|\mathbf{W}_j\|_2 \le B_j$, the Rademacher complexity of the function class is bounded by:
    $$ \mathfrak{R}_n(\mathcal{F}_{\epsilon}) \le C_{net} \frac{R_X L \sqrt{p}}{\sqrt{n}} \left( \prod_{j=1}^L B_j \right) $$
    For simplicity and under normalization, we often consider $R_X=1$. This result is a simplified form derived from \citep{bartlett2017spectrally, neyshabur2018pac}, where $C_{net}$ is a universal constant.
\end{lemma}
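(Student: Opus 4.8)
The plan is to prove the bound by the standard layer-peeling technique: control the empirical Rademacher complexity $\hat{\mathfrak{R}}_S(\mathcal{F}_\epsilon)$ by stripping the network one layer at a time, and then pass to the population quantity $\mathfrak{R}_n$ by taking expectation over the sample $S$. Write the network recursively as $f(\mathbf{x}) = \mathbf{W}_L\,\sigma\!\big(f^{(L-1)}(\mathbf{x})\big)$, where $\sigma$ is the coordinatewise ReLU and $f^{(\ell)}$ is the (vector-valued) sub-network of depth $\ell$, so that $\hat{\mathfrak{R}}_S(\mathcal{F}_\epsilon) = \mathbb{E}_{\boldsymbol{\sigma}}\,\sup_{f}\,\tfrac{1}{n}\sum_{i=1}^{n}\langle \boldsymbol{\sigma}_i,\, f(\mathbf{x}_i)\rangle$, with the $\boldsymbol{\sigma}_i$ being i.i.d. Rademacher vectors.

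I would organize the argument as follows. \textbf{(i) Peel the top linear layer.} Using $\|\mathbf{W}_L\|_2 \le B_L$ together with the duality between the operator norm and the Euclidean norm of the Rademacher sum, bound the supremum over the last layer by $B_L \cdot \mathbb{E}_{\boldsymbol{\sigma}} \sup_{g \in \mathcal{F}^{(L-1)}} \big\| \tfrac1n \sum_i \boldsymbol{\sigma}_i\, \sigma(g(\mathbf{x}_i)) \big\|_2$, where now the $\boldsymbol{\sigma}_i$ act on the pre-activation of the last layer. \textbf{(ii) Remove the nonlinearity.} Apply the vector contraction inequality (Ledoux--Talagrand / Maurer) to strip the $1$-Lipschitz, origin-preserving map $\sigma$, paying only an absolute constant; this is the step where a per-layer constant accumulates. \textbf{(iii) Recurse.} Iterate (i)--(ii) through all $L$ layers; each pass contributes a factor $B_j$ and the contraction constant, so that after $L$ rounds the bound becomes $\big(\prod_{j=1}^{L} B_j\big)$ times a residual term of the form $\mathbb{E}_{\boldsymbol{\sigma}} \big\| \tfrac1n \sum_i \boldsymbol{\sigma}_i \mathbf{x}_i \big\|_2$. \textbf{(iv) Base case.} By Jensen's inequality and independence of the Rademacher signs, $\mathbb{E}_{\boldsymbol{\sigma}} \big\| \tfrac1n \sum_i \boldsymbol{\sigma}_i \mathbf{x}_i \big\|_2 \le \tfrac1n \big(\sum_i \|\mathbf{x}_i\|_2^2\big)^{1/2} \le R_X / \sqrt{n}$, with the explicit $\sqrt{p}$ entering when one tracks the ambient input dimension through the first-layer peeling or through a coordinatewise contraction step. \textbf{(v) Population bound.} Since every inequality is uniform over samples with $\|\mathbf{x}_i\| \le R_X$, taking $\mathbb{E}_S$ preserves it, and all universal constants together with the aggregated per-layer constants (which yield the linear-in-$L$ overhead) are absorbed into $C_{net}$.

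The hard part will be steps (ii)--(iii): controlling the vector-valued intermediate activations without incurring width-dependent factors. A naive coordinatewise peeling multiplies the bound by the layer widths, so one must use the vector contraction inequality, and some care is required to ensure that the accumulated per-layer constants aggregate into the claimed linear $L$ factor rather than an exponential $2^L$ --- this is precisely the point addressed by the size-independent refinements of the spectral-norm bounds, which the ``simplified form'' stated here subsumes. An equally acceptable alternative is the covering-number route of \citet{bartlett2017spectrally}: bound the spectral-norm covering number of $\mathcal{F}_\epsilon$ via the matrix-covering lemma and then invoke Dudley's entropy integral, obtaining the same $\prod_j B_j$ scaling up to polylogarithmic overhead. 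Either derivation yields the stated bound once constants are collected into $C_{net}$.
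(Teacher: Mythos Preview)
The paper does not actually prove this lemma: it is stated as a known result, with the proof deferred entirely to the cited references \citep{bartlett2017spectrally, neyshabur2018pac}, and is then invoked as a black box in the proof of Theorem~\ref{thm:finite_sample_specific}. Your layer-peeling sketch is precisely the standard argument underlying those references (particularly the Neyshabur / Golowich--Rakhlin--Shamir line of work), so in substance you are reconstructing what the paper simply cites. Your caveat about the $2^L$-versus-linear-$L$ issue is well placed --- naive Ledoux--Talagrand contraction accumulates a factor of $2$ per layer, and obtaining the stated linear $L$ does require the size-independent refinement you allude to --- and the $\sqrt{p}$ factor is, as you note, somewhat loosely justified in both your sketch and the paper's ``simplified form.'' Since the paper absorbs all such details into the cited references and the universal constant $C_{net}$, your proposal is consistent with, and more detailed than, what the paper itself provides.
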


\begin{proof}
The proof proceeds by combining the standard excess risk bound with the two lemmas above. First, from standard learning theory, the excess risk is bounded by the Rademacher complexity of the total loss function class:
$$ R(\hat{\theta}_n) - R(\theta^*) \le 4 \mathfrak{R}_n(\mathcal{F}_{\mathcal{L}_{SCM}}) + M\sqrt{\frac{\log(1/\delta)}{2n}} $$
Next, by the sub-additivity property of Rademacher complexity, we decompose the SCM's complexity into the sum of its individual components:
$$ \mathfrak{R}_n(\mathcal{F}_{\mathcal{L}_{SCM}}) \le \sum_{i=1}^d \mathfrak{R}_n(\mathcal{F}_{\mathcal{L}_i}) $$
The next step is to relate the loss complexity to the network complexity. The score-matching loss for mechanism $i$ is $\mathcal{L}_i = \|\boldsymbol{\epsilon} - \epsilon_{\theta_i}(\cdot)\|^2$. Since this loss is Lipschitz with respect to the output of $\epsilon_{\theta_i}$, its Rademacher complexity is upper-bounded by a constant multiple of the complexity of the score network's function class $\mathcal{F}_{\epsilon_i}$ via Talagrand's contraction lemma, i.e., $\mathfrak{R}_n(\mathcal{F}_{\mathcal{L}_i}) \le C_1 \cdot \mathfrak{R}_n(\mathcal{F}_{\epsilon_i})$. We then apply Lemma \ref{lem:nn_complexity} to bound the complexity of each score network. The input dimension $p_i$ for mechanism $i$ is $p_i = \text{dim}(\mathbf{x}_t) + \text{dim}(\mathbf{pa}_i) + \text{dim}(\text{time embedding})$. Assuming univariate variables, this is $p_i = 1 + |\mathbf{Pa}_i| + d_{embed}$. Let $d_{in}^{max} = \max_i |\mathbf{Pa}_i|$, so the maximum input dimension is $p_{max} = 1 + d_{in}^{max} + d_{embed}$. Assuming all networks have depth $L$ and a uniform spectral norm bound $B$, we have:
$$ \mathfrak{R}_n(\mathcal{F}_{\epsilon_i}) \le C_{net} \frac{L \sqrt{p_{max}}}{\sqrt{n}} B^L = C_{net} \frac{L \sqrt{1 + d_{in}^{max} + d_{embed}}}{\sqrt{n}} B^L $$
    
Finally, we assemble the complete bound by substituting all components back into the initial inequality:
\begin{align*}
R(\hat{\theta}_n) - R(\theta^*) &\le 4 \sum_{i=1}^d \mathfrak{R}_n(\mathcal{F}_{\mathcal{L}_i}) + M\sqrt{\frac{\log(1/\delta)}{2n}} \\
&\le 4 \sum_{i=1}^d C_1 \cdot \mathfrak{R}_n(\mathcal{F}_{\epsilon_i}) + M\sqrt{\frac{\log(1/\delta)}{2n}} \\
&= \left( 4 C_1 C_{net} \right) \frac{d \cdot L \cdot B^L \cdot \sqrt{1 + d_{in}^{max} + d_{embed}}}{\sqrt{n}} + M\sqrt{\frac{\log(1/\delta)}{2n}}
\end{align*}
By defining $C = 4 C_1 C_{net}$ as a generic constant independent of the network architecture and sample size, we arrive at the final form stated in the theorem.
\end{proof}
\section{Formal Analysis of the Geometric Inductive Bias}\label{sec:appendix_conformal_proof}
\renewcommand{\theequation}{I.\arabic{equation}}\setcounter{equation}{0}

This section provides the formal argument for Proposition \ref{prop:conformal_bias}, demonstrating how the score-matching objective, under a simplicity bias, compels the learned generative map to adopt the local data geometry, yielding a parsimonious and well-behaved transformation.

\subsection{Preliminaries and Definitions}

    Let $\mathcal{M} \subset \mathbb{R}^d$ be the data manifold with a smooth probability density $p(\mathbf{x})$. The true score function is the vector field $\mathbf{s}^*(\mathbf{x}) := \nabla_{\mathbf{x}} \log p(\mathbf{x})$. We learn a parameterized score network $\mathbf{s}_\theta(\mathbf{x})$ by minimizing the score-matching objective $L_{SM}(\theta) = \mathbb{E}_{\mathbf{x}\sim p(\mathbf{x})} [\|\mathbf{s}_\theta(\mathbf{x}) - \mathbf{s}^*(\mathbf{x})\|^2]$.

    The generative process is described by the probability flow ODE, whose vector field $\mathbf{f}(\mathbf{x}, t)$ is a function of the score. The map $H_\theta: \mathbb{R}^d \to \mathcal{M}$ is the flow map of this ODE integrated from $t=1$ to $t=0$. A map is \textbf{conformal} if its Jacobian is a scaled orthogonal matrix, and \textbf{affine} if it is a linear transformation plus a translation.

\subsection{Assumptions}

    \begin{assumption}[Smoothness of the Data Density] \label{assump:smoothness}
        The true data density $p(\mathbf{x})$ is at least twice continuously differentiable ($C^2$) on $\mathcal{M}$.
    \end{assumption}

    \begin{sloppypar}
    \begin{assumption}[Adoption of the Simplicity Bias Principle] \label{assump:regularization}
        Our analysis relies on the principle of implicit regularization: the conjecture that optimizers like SGD favor solutions with a simplicity bias. We formalize this as a preference for score functions $\mathbf{s}_\theta$ with lower complexity (e.g., lower Dirichlet Energy or smaller spectral norms) (\citealp{hochreiter1997flat}; \citealp{neyshabur2018pac}).
    \end{assumption}
    \end{sloppypar}

\subsection{Proof of Proposition \ref{prop:conformal_bias}}
The proof proceeds in four steps:

\textbf{Step 1: The Irrotational Property of the True Score Field.} By definition, the true score $\mathbf{s}^*(\mathbf{x})$ is the gradient of the scalar potential $\log p(\mathbf{x})$. By a fundamental theorem of vector calculus, the curl of any gradient field is zero. Thus, $\mathbf{s}^*$ is an \textbf{irrotational} (or conservative) vector field. 

\textbf{Step 2: The Variational Perspective of Score Matching.} The simplicity bias (Assumption \ref{assump:regularization}) reinforces the irrotational nature of the learned field $\mathbf{s}_\theta$. This is understood via the Helmholtz decomposition, which splits any vector field into irrotational (curl-free) and solenoidal (divergence-free) components. The simplicity bias conjectures that the optimization preferentially minimizes the energy of the solenoidal component, driving the learned field $\mathbf{s}_\theta$ towards a purely irrotational solution ($\nabla \times \mathbf{s}_{\theta} \approx \mathbf{0}$) to match the conservative nature of the true score $\mathbf{s}^*$. 

\textbf{Step 3: The Geometry of the Score Field under Local Structural Assumptions.} We analyze the score field's structure under two local geometric cases for the density $p(\mathbf{x})$ in a region $\mathcal{R}$. \textbf{Case (i): Local Isotropy.} If $p(\mathbf{x})$ is locally spherically symmetric around a center $\mathbf{c}$, its value depends only on the radius $r = \|\mathbf{x} - \mathbf{c}\|$. The gradient $\mathbf{s}^* = \nabla \log p(\mathbf{x})$ must point along the radial direction, making the true score a \textbf{radial vector field}. The learned field converges to this simple structure. \textbf{Case (ii): Local Ellipsoidal Structure.} If the iso-contours of $p(\mathbf{x})$ are concentric ellipsoids, the gradient $\mathbf{s}^*$ must be orthogonal to these ellipsoidal surfaces at every point. Such a field is still irrotational but no longer radial. 

\textbf{Step 4: The Geometry of the Flow Map.} The ODE vector field $\mathbf{f}(\mathbf{x}, t)$ is a linear combination of the score field $\mathbf{s}_{\theta^*}(\mathbf{x})$ and the position vector $\mathbf{x}$. The geometry of the resulting flow map $H_{\theta^*}$ depends on this vector field's geometry. \textbf{Result for Case (i):} In the isotropic case, $\mathbf{f}(\mathbf{x}, t)$ is also a radial vector field. A flow generated by a radial vector field is, by its rotational symmetry, necessarily a \textbf{conformal map}, as it preserves angles. \textbf{Result for Case (ii):} In the ellipsoidal case, the flow must transform the isotropic latent space into the anisotropic data space. The simplest such transformation is a \textbf{local affine transformation}, involving direction-dependent scaling and rotation. While this geometric argument is standard, a fully rigorous proof would require directly computing the Jacobian of the integrated flow map $H_{\theta^*}$ to verify it satisfies the required mathematical conditions. The model's inductive bias thus compels it to learn the most parsimonious geometric transformation necessary to explain the local data geometry, promoting the well-behaved, locally invertible mappings that are crucial for abduction.

\section{Experimental Details}\label{sec:appendix-exp-details}
\renewcommand{\theequation}{J.\arabic{equation}}\setcounter{equation}{0}

This appendix provides comprehensive details for all experiments presented in Section~\ref{sec:experiments}, ensuring full transparency and reproducibility.

\subsection{General Setup}

\paragraph{Software Environment.}
All experiments were conducted in a unified software environment to ensure full reproducibility. Key library versions used were: \texttt{dowhy} (0.12), \texttt{econml} (0.16.0), \texttt{numpy} (1.26.4), \texttt{pandas} (1.3.5), \texttt{scikit-learn} (1.6.1), \texttt{torch} (1.10.0), \texttt{lightgbm} (4.6.0), and \texttt{networkx} (3.2.1). All stochastic processes were controlled with a fixed global random seed (42), except for the ensemble runs, which used a set of distinct seeds for training.

\paragraph{Baseline Estimator Configurations.}
All baseline estimators were implemented using the \texttt{dowhy} library. For machine learning-based methods like Causal Forest and DML, we utilized their robust implementations from the \texttt{econml} library. To ensure a fair and reproducible comparison against established benchmarks, we used their default hyperparameter settings, which are widely recognized and have been optimized by the library authors to provide strong performance across a broad range of tasks. Our proposed model, in turn, underwent a systematic grid search; the final parameters and a detailed analysis are provided in Appendix \ref{sec:appendix-hyperparam-analysis}.

\paragraph{Details for PSM Failure Scenario (Act I).}
The Data Generation Process (DGP) for this experiment ($N=5000$, true ATE $\tau=5000$) follows the graph in Figure~\ref{fig:sub_dag_psm_fail} and is defined by:
\begin{align*}
    W_1, W_2 &\sim \mathcal{N}(0, 1) \\
    C_1 &\sim \text{Categorical}(\text{softmax}(\mathbf{z})), \quad \text{where} \quad
    \begin{cases}
        z_A = W_1 - W_2 \\
        z_B = \cos(\pi W_1) + \sin(\pi W_2) \\
        z_C = W_1^2 - W_2^2
    \end{cases} \\
    \text{logit}(P(T=1)) &= 2\sin(\pi W_1) + 1.5 W_2^2 + 2W_1 W_2 - 1.5 \cdot \mathbb{I}[C_1=A] + 2.5 \cdot \mathbb{I}[C_1=B] + U_T \\
    Y &= 5000 \cdot T + 60 \cdot (15 W_1 - 25 W_2 + 10 W_1 W_2) \\
    &\quad + 60 \cdot (-40 \cdot \mathbb{I}[C_1=A] + 50 \cdot \mathbb{I}[C_1=C]) + U_Y
\end{align*}
with exogenous noises $U_T \sim \text{Logistic}(0, 1)$ and $U_Y \sim \mathcal{N}(0, 6000^2)$.

\paragraph{Details for Lalonde Benchmark (Act I).}
To evaluate performance on real-world data, we used the canonical Lalonde dataset, analyzing the effect of the NSW job training program (\texttt{treat}) on 1978 real earnings (\texttt{re78}). The assumed causal structure is the standard confounding model (Figure~\ref{fig:sub_dag_lalonde}), a DAG structure that reflects the broad consensus in the causal inference community for this benchmark. In line with our Targeted Modeling principle, we applied the expressive \texttt{CausalDiffusionModel} only to the key \texttt{treat} and \texttt{re78} nodes, modeling all confounders non-parametrically via their empirical distributions.

\paragraph{Details for Semi-Synthetic Analysis (Act II).}
This dataset uses the real-world covariates from the Lalonde dataset as a foundation. The outcome $Y$ and the ground-truth Individual Treatment Effect ($ITE_{\text{true}}$) are then synthetically generated according to the following structural equations:
\begin{align*}
    Y_{\text{base}} = & \ 2 \cdot X_{\text{re74}} + 1.5 \cdot X_{\text{re75}} + 100 \cdot X_{\text{educ}} - 50 \cdot X_{\text{age}} \\ 
    & + 2000 \cdot X_{\text{black}} - 1000 \cdot X_{\text{hisp}} + U_{\text{base}} \\
    \text{ITE}_{\text{true}} = & \ 1500 + 350 \log(1+X_{\text{educ}}) - 3(X_{\text{age}} - 40)^2 \\
    & + 1200 \cdot (1 - X_{\text{nodegr}}) \cdot (1 - X_{\text{black}}) \\
    & - 1000 \tanh\left(\frac{X_{\text{re74}} - \mu_{\text{re74}}}{1000}\right) \\
    Y = & \ Y_{\text{base}} + \text{ITE}_{\text{true}} \cdot X_{\text{treat}}
\end{align*}
where $U_{\text{base}} \sim \mathcal{N}(0, 500^2)$ and $\mu_{\text{re74}}$ is the mean of the \texttt{re74} covariate.

\paragraph{Details for the Stress Test on Non-Invertible SCMs (Act IV).}
This experiment's primary objective is to evaluate the framework's robustness when the core theoretical assumption of SCM invertibility is explicitly violated. The DGP ($N=2000$) is defined as follows:
\begin{align*}
    W &\sim \mathcal{U}(-2, 2) \\
    P(T=1|W) &= \sigma(W + 0.5W^2) \quad (\text{where } \sigma(\cdot) \text{ is the sigmoid function}) \\
    U_Y &\sim \mathcal{N}(0, 1.5^2),\quad 
    Y = 5T + 2W + U_Y^2
\end{align*}
The true ATE is exactly $\tau=5.0$. The SCM was structured with \texttt{W} as an \texttt{Empirical\-Distribution}, and \texttt{T} and \texttt{Y} as \texttt{Causal\-Diffusion\-Model}.

\paragraph{Details for the Ablation Study (Act IV).}
The ablation study was conducted on a challenging synthetic mediation dataset ($N=4000$) designed to highlight the benefits of generative models. The causal graph is shown in Figure~\ref{fig:sub_dag_advanced_mediation}, with the following data generation process:
\begin{align*}
    X_1 &\sim \mathcal{N}(0, 1), \quad X_2 \sim \mathcal{U}(-2, 2), \quad Z \sim \text{Bernoulli}(0.5) \\
    T &\sim \text{Bernoulli}\left(\sigma\left(2.0 \sin(\pi X_1) X_2 - 1.5 Z + U_T\right)\right) \\
    M &= 5 \tanh(X_2) + \mathbb{I}[T=1](15 \cos(2\pi X_2) + 5X_1) + \mathbb{I}[T=0](-10|X_1|) + U_M \\
    Y &= 25 M + 10 \operatorname{sinc}(2 X_1) + U_Y
\end{align*}
where $\sigma(\cdot)$ is the sigmoid function, $U_T \sim \text{Logistic}(0, 0.3)$, $U_M \sim \mathcal{N}(0, 1.5^2)$, and $U_Y$ is drawn from a Gaussian Mixture Model conditioned on $Z$. The true ATE for this DGP is approximately $202.29$.
\begin{description}[style=sameline, leftmargin=0pt]
    \item[BELM-MDCM (Full Model):] The complete proposed framework. The nodes \texttt{T}, \texttt{M}, and \texttt{Y} are all modeled by a \texttt{CausalDiffusionModel} with \texttt{sampler\_type='belm'} and a hybrid objective weight of $\lambda=5.0$.

    \item[w/o Analytical Invertibility:] Identical to the full model, but the \texttt{sampler\_type} for all diffusion models was set to \texttt{'ddim'}.

    \item[w/o Hybrid Objective:] Identical to the full model, but the hybrid objective weight $\lambda$ for all diffusion models was set to $0.0$.

    \item[w/o Targeted Modeling:] The key mediator node \texttt{M} was modeled with a simpler \texttt{gcm.\hspace{0pt}AdditiveNoiseModel} (backed by an \texttt{LGBM\-Regressor}), while \texttt{T} and \texttt{Y} remained as \texttt{Causal\-Diffusion\-Model}s with $\lambda=5.0$ and \texttt{sampler\_type='belm'}.
\end{description}

\subsection{Model Hyperparameter Justification}\label{sec:appendix-hyperparam-analysis}
The hyperparameters for our BELM-MDCM model, reported in Table~\ref{tab:all-hyperparams}, were identified through a systematic grid search for each experiment. The variation in these parameters reflects principled adaptations to different data characteristics, guided by the trade-off between generative fidelity and discriminative accuracy, as well as the signal-to-noise ratio (SNR) of the underlying causal relationships. Below, we provide a holistic analysis for each scenario.

\subsubsection{Hyperparameter Search Details}
To ensure full reproducibility, we detail the hyperparameter search space used to arrive at the configurations in Table \ref{tab:all-hyperparams}. The final values for each experiment were selected based on the best performance on a held-out validation set, typically comprising 20-30\% of the training data. The primary selection criterion was the lowest PEHE score for experiments with a ground-truth ITE (e.g., Semi-Synthetic), or the best balance of low absolute ATE error and low estimation variance for observational data scenarios. For the Lalonde experiment in particular, we prioritized a configuration that demonstrated superior stability, as detailed in the analysis below.

\begin{table}[htbp]
    \small
    \centering
    \caption{Hyperparameter Search Space and Selection Criteria.}
    \label{tab:hyperparam_search}
    \begin{tabularx}{\linewidth}{@{} p{0.3\linewidth} >{\raggedright\arraybackslash}X m{0.25\linewidth} @{}}
        \toprule
        \textbf{Hyperparameter} & \textbf{Search Space (Grid Search)} & \textbf{Selection Criterion} \\
        \midrule
        
        Hybrid Weight $\lambda$ & 
        \{0.0, 0.1, 0.3, 1.0, 2.0, 5.0, 10.0\} & 
        \multirow{4}{=}{Best balance of low error and low variance on the validation set.} \\ 
        \addlinespace 
        
        Guidance Weight $w$ & 
        \{0.0, 0.1, 0.2, 0.3, 1.0, 5.0, 10.0\} & 
        \\ 
        \addlinespace 
        
        Diffusion Timesteps $T$ & 
        \{50, 100, 200, 500\} & 
        \\
        \addlinespace

        Learning Rate &
        \{5e-5, 1e-4, 1.1e-4, 1.2e-4, 2e-4\} &
        \\
        
        \bottomrule
    \end{tabularx}
\end{table}

\begin{table}[htbp]
    \small
    \centering
    \caption{Consolidated BELM-MDCM Hyperparameters for All Key Experiments. Column headers correspond to the following experiments: \textbf{PSM} (PSM Failure), \textbf{Lalonde}, \textbf{Semi-Synth} (Semi-Synthetic), \textbf{Ablation} (Ablation Study), and \textbf{Stress Test} (Act IV).}
    \label{tab:all-hyperparams}
    \begin{tabular*}{\linewidth}{@{\extracolsep{\fill}}lccccc@{}}
        \toprule
        \textbf{Hyperparameter} & 
        \textbf{PSM} & 
        \textbf{Lalonde} & 
        \textbf{Semi-Synth} &
        \textbf{Ablation} &
        \textbf{Stress Test} \\
        \cmidrule(lr){2-6}
        Number of Epochs & 1500 & 1000 & 1200 & 700 & 500 \\
        Batch Size & 128 & 64 & 64 & 128 & 128 \\
        Hidden Dimension & 512 & 512 & 768 & 768 & 256 \\
        Learning Rate & $1 \times 10^{-4}$ & $1 \times 10^{-4}$ & $1.1 \times 10^{-4}$ & $1 \times 10^{-4}$ & $1 \times 10^{-4}$ \\
        Diffusion Timesteps ($T$) & 200 & 200 & 50 & 200 & 200 \\ 
        Hybrid Weight $\lambda$ & 0.1 & 2.0 & 2.0 & 5.0 & 0.5 \\
        Guidance Weight ($w$) & 0.0 & 1.0 & 0.1 & 0.2 & 0.0 \\
        \bottomrule
    \end{tabular*}
\end{table}

\paragraph{Scenario 1: The "Perfectionist Learner" (PSM Failure Experiment)}
\textbf{Data Profile:} This synthetic dataset features complex, non-linear functions (e.g., \texttt{sin}, \texttt{cos}) but is characterized by a \textbf{pure, high-SNR signal}. The main challenge is to perfectly learn this intricate generative process. \textbf{Hyperparameter Strategy:} The strategy prioritizes generative modeling. A low hybrid weight ($\lambda=0.1$) directs the model to focus on the diffusion loss. Since the conditional signal is strong, classifier-free guidance is disabled ($w=0.0$). A large model capacity and extended training are employed to capture the ground-truth functions.

\paragraph{Scenario 2: The "Pragmatic Signal Extractor" (Lalonde Experiment)}
\textbf{Data Profile:} This real-world data is characterized by a \textbf{weak, noisy signal} and a small sample size. The primary challenge is to robustly extract the causal signal. \textbf{Hyperparameter Strategy:} The priority shifts from pure accuracy to a balance of accuracy and stability. A high hybrid weight ($\lambda=2.0$) provides a strong inductive bias towards the predictive task. Crucially, our systematic grid search revealed that high guidance weights ($w > 1.0$) dramatically increased estimation variance, leading to unreliable results. We therefore selected a moderate guidance weight of $w=1.0$. This configuration provides a stabilizing effect, guiding the model towards the causal signal without amplifying noise, thereby achieving the optimal balance between accuracy and the robustness required for reliable inference on real-world data.

\paragraph{Scenario 3: The "Precision Artist" (Semi-Synthetic Experiment)}
\textbf{Data Profile:} This setup presents a \textbf{hybrid-SNR} environment, using noisy real-world covariates but a pure, synthetic outcome function. The task demands high precision. \textbf{Hyperparameter Strategy:} The diffusion timesteps are reduced ($T=50$), plausibly because a shorter generative path better preserves the fine-grained details of the synthetic ITE function. The hybrid weight is high ($\lambda=2.0$) to focus on the estimation task, while guidance is light ($w=0.1$) as the ITE signal is cleaner than in the purely observational Lalonde case.

\paragraph{Scenario 4: The "Component Analyst" (Ablation Study)}
\textbf{Data Profile:} This is a complex, synthetic mediation structure with a \textbf{clean, high-SNR signal}, specifically designed to isolate the performance impact of individual framework components. \textbf{Hyperparameter Strategy:} The strategy is to create a strong, stable baseline. A high hybrid weight ($\lambda=5.0$) heavily orients the model towards the predictive task, making any performance degradation from ablations more pronounced. A large model capacity (\texttt{Hidden Dim=768}) ensures the model can learn the complex functions, while light guidance ($w=0.2$) provides a minor stabilizing effect.

\paragraph{Scenario 5: The "Robustness Tester" (Stress Test)}
\textbf{Data Profile:} A simple DGP featuring a non-invertible causal mechanism ($Y \propto U^2$), designed to test the framework's behavior when its core assumption is violated. \textbf{Hyperparameter Strategy:} The goal is stable learning of a simple function. A moderate hybrid weight ($\lambda=0.5$) balances generative and predictive learning, while a smaller model (\texttt{Hidden Dim=256}) is sufficient for the simpler DGP and helps prevent overfitting. Guidance is turned off ($w=0.0$) as the signal is clean.

\paragraph{Holistic Comparison}
\begin{table}[H]
    \centering\small
    \caption{Summary of Adaptive Hyperparameter Strategies.}
    \label{tab:strategy_summary}
    \begin{tabularx}{\linewidth}{@{} l *{5}{>{\centering\arraybackslash}X} @{}}
        \toprule
        \textbf{Aspect} & \textbf{PSM Failure} & \textbf{Lalonde} & \textbf{Semi-Synthetic} & \textbf{Ablation Study} & \textbf{Stress Test} \\
        \cmidrule(r){2-6}
        Data Signal & Pure \& Complex & Noisy \& Weak & Hybrid \& Complex & Pure \& Mediated & Pure \& Non-Invertible \\ 
        \midrule
        Guidance ($w$) & 0.0 (Off) & 1.0 (Balanced Guidance) & 0.1 (Slight Nudge) & 0.2 (Light) & 0.0 (Off) \\ 
        \midrule
        Timesteps ($T$) & 200 (Standard) & 200 (Standard) & 50 (Short Path) & 200 (Standard) & 200 (Standard) \\ 
        \midrule
        Hybrid ($\lambda$) & 0.1 (Generative) & 2.0 (Predictive) & 2.0 (Predictive) & 5.0 (Strongly Predictive) & 0.5 (Balanced) \\
        \midrule
        \textbf{Core Strategy} & \textbf{Perfect Generation} & \textbf{Robust Prediction} & \textbf{Precision Estimation} & \textbf{Component Isolation} & \textbf{Stable Learning} \\
        \bottomrule
    \end{tabularx}
\end{table}

In conclusion, the variability in hyperparameters demonstrates the flexibility of our framework. It showcases the model's ability to deploy different tools—such as prioritizing the generative loss or amplifying weak signals with guidance—to optimally adapt to the specific challenges posed by diverse causal inference problems.

\subsection{CMF Metric Implementation Details}
    To ensure the rigor and reproducibility of our evaluation, this section provides the implementation details for the CMF scores.

    \paragraph{CMI-Score Estimation Method.}
    For Conditional Mutual Information (CMI) estimation, we adopt the widely-used k-Nearest Neighbors (k-NN) based estimator from Kraskov et al. \citep{kraskov2004estimating}. This method is chosen for its strong theoretical properties and practical robustness, particularly for continuous and high-dimensional data, as it avoids explicit density estimation. Its key advantages include:
    \begin{itemize}[noitemsep]
        \item \textbf{Non-parametric:} It makes no assumptions about the underlying data distributions.
        \item \textbf{Data-adaptive:} The estimation is based on local distances, adapting to the data manifold's geometry.
        \item \textbf{Robustness:} It is more robust than methods that rely on fixed binning, which can be sensitive to bin size.
    \end{itemize}
    Following common practice, we set the number of neighbors to $k=5$ for all our experiments to ensure a stable and reliable estimation.

    \paragraph{KMD-Score Kernel Parameter Selection.}
    The performance of the MMD test is sensitive to kernel parameter choices. For the KMD-Score, we employed a standard Radial Basis Function (RBF) kernel, $k(x, y) = \exp(-\|x-y\|^2 / (2\sigma^2))$. The bandwidth parameter $\sigma$ is critical. Following best practices, we set the bandwidth using the median heuristic, a robust and common data-driven approach. For the analysis on the Lalonde dataset's \texttt{re78} mechanism, this heuristic yielded a bandwidth of $\sigma=0.1$, a value confirmed as effective in preliminary experiments. All KMD-Scores are reported using this configuration.

\section{Empirical Validation of Proposed Evaluation Metrics}\label{sec:appendix_metric_validation}

To empirically validate the reliability, sensitivity, and complementary nature of our proposed evaluation metrics (CIC-Score, CMI-Score, and KMD-Score), we conducted a controlled micro-simulation study assessing how each metric responds to a spectrum of increasingly severe model errors.

\paragraph{Experimental Setup.}
    We designed a simple ground-truth Structural Causal Model (SCM) and created five simulated models (A through E) representing a clear "degradation gradient" of model quality:
    \begin{itemize}[noitemsep, topsep=0pt]
        \item \textbf{Model A (Oracle):} Represents a theoretically perfect model with zero SRE and a perfectly learned causal mechanism. This serves as our gold standard, with expected scores of 1.0.
        \item \textbf{Model B (Lossy Inverter):} Simulates a model with a non-zero SRE. It uses the correct causal mechanism but introduces a systematic error during the reconstruction cycle, mimicking the core flaw of DDIM-based approaches.
        \item \textbf{Model C (Wrong Mechanism):} Represents a model that fails to learn the correct functional form of the causal mechanism (e.g., learning a linear instead of a sinusoidal relationship).
        \item \textbf{Model D (Maximal Error):} Simulates a more severe failure where both the causal mechanism and the inferred noise distribution are fundamentally incorrect.
        \item \textbf{Model E (Total Mismatch):} Represents the worst-case scenario where the model ignores the causal graph and inputs, generating outputs from an unrelated distribution.
    \end{itemize}

\begin{figure}[htbp]
    \centering
    \includegraphics[width=\textwidth]{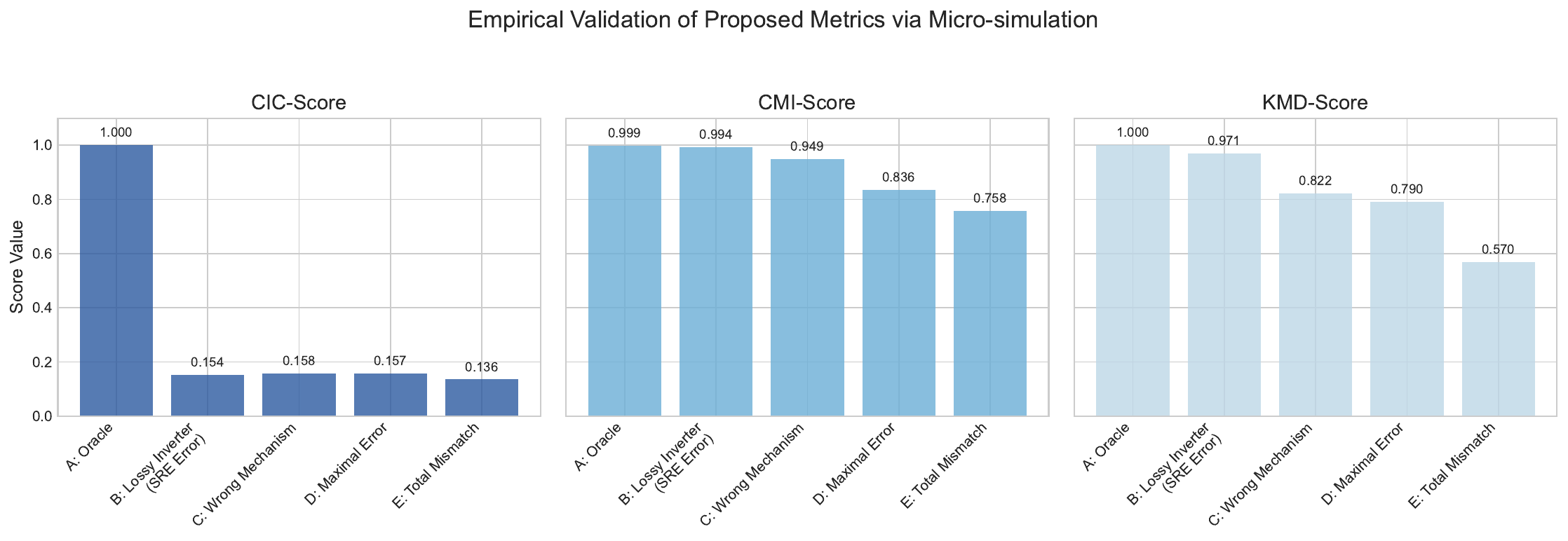}
    \caption{Results of the micro-simulation study for metric validation. The three plots show the response of the CIC-Score, CMI-Score, and KMD-Score to five models (A-E) of progressively decreasing quality. The scores demonstrate a clear monotonic degradation, confirming their ability to reliably track model fidelity. Note the CIC-Score's sharp drop from Model A to B, highlighting its specific sensitivity to the Structural Reconstruction Error (SRE).}
    \label{fig:metric_validation}
\end{figure}

\paragraph{Analysis of Results.}
    The results (Figure \ref{fig:metric_validation}) demonstrate the distinct and complementary roles of our proposed metrics:

    \begin{enumerate}[noitemsep, topsep=0pt]
        \item \textbf{The CIC-Score acts as a high-sensitivity "SRE detector."} It exhibits a dramatic drop from a perfect 1.0 (Model A) to approximately 0.23 (Model B) the moment SRE is introduced, while showing less sensitivity to the specific form of mechanism error. This confirms its primary role as a diagnostic for adherence to the Causal Information Conservation principle.
        
        \item \textbf{The CMI-Score serves as a robust "mechanism association tracker."} It degrades gracefully and monotonically as the learned causal mechanism deviates from the ground truth (from ~0.99 to ~0.76). This demonstrates its utility in quantifying the fidelity of learned parent-child conditional dependencies.
        
        \item \textbf{The KMD-Score functions as the "final arbiter" of distributional fidelity.} Possessing the widest dynamic range, it is sensitive to all forms of error and provides a holistic judgment of the similarity between the generated and true counterfactual distributions. As the most rigorous metric, it correctly assigns the lowest score to the completely mismatched Model E.
    \end{enumerate}
    This simulation thus validates our proposed metrics as a reliable and nuanced evaluation framework. They work in synergy to diagnose specific model failings (CIC-Score), assess relational accuracy (CMI-Score), and provide an overall quality judgment (KMD-Score), offering a more insightful assessment than traditional metrics alone.

\paragraph{Discussion on the Non-Zero Lower Bound of Scores.}
    Notably, even for the worst-performing model (Model E), the CMI and KMD scores do not fall to zero. This behavior is not a limitation but a desirable feature that reflects their ability to capture "residual statistical structure" in the evaluation setting.

    \begin{itemize}[noitemsep, topsep=0pt]
        \item \textbf{For the KMD-Score:} The MMD compares the joint distributions $P_{\text{model}}(Y, W, T)$ and $P_{\text{oracle}}(Y, W, T)$. Crucially, as all models operate on the same observed parent data, the marginal parent distribution $P(W, T)$ is identical between the two. The difference lies only in the conditional $P(Y|W, T)$. Because the joint distributions share a substantial common subspace, their MMD will be finite, preventing the KMD-Score from reaching zero. Furthermore, using a `StandardScaler` maps both distributions to a similar feature space, which is necessary for a fair, scale-invariant comparison.
        
        \item \textbf{For the CMI-Score:} This metric quantifies the $I(Y; \text{parent} | \text{other parents})$. Even an incorrect mechanism (like in Model C or D) still generates an output $Y$ as a deterministic function of its parents, resulting in a non-zero CMI. For Model E, where $Y$ is independent of its parents, the theoretically zero CMI is not observed due to the inherent finite-sample variance of the non-parametric k-NN estimator.
    \end{itemize}
    This behavior is advantageous, ensuring the metrics provide a meaningful, continuous gradient of failure rather than a simplistic binary judgment. This enhances their diagnostic power, allowing for fine-grained distinctions between types and degrees of model imperfection.
\section{Algorithm for ATE Estimation}\label{sec:appendix_algorithm}

This appendix provides the detailed pseudo-code for the counterfactual imputation procedure used to estimate the Average Treatment Effect (ATE) in our experiments.

\begin{algorithm}[htbp]
    \small
    \caption{ATE Estimation with Invertible SCMs via Counterfactual Imputation}\label{alg:ate_estimation_appendix}
    
    \KwIn{Observational data $\mathcal{D} = \{v^{(j)}\}_{j=1}^N$ where $v^{(j)} \in \mathbb{R}^d$; Causal graph $\mathcal{G}$.}
    \KwOut{Estimated Average Treatment Effect ($\hat{\text{ATE}}$).}
    
    \SetKwProg{Def}{Define}{:}{}
    \Def{Invertible SCM $\mathcal{M}_\theta$}{
        $\mathcal{M}_\theta = \{f_i(\cdot; \theta_i)\}_{i=1}^d$ based on $\mathcal{G}$, where $v_i = f_i(\mathbf{pa}_i, u_i)$\;
    }

    \tcp{1. Train the invertible SCM on observational data}
    $\hat{\theta} \leftarrow \arg\min_{\theta} \sum_{j=1}^{N} \sum_{i=1}^{d} \mathcal{L}_i\left(f_i(\mathbf{pa}_i^{(j)}; \theta_i), v_i^{(j)}\right)$\;

    \tcp{2. Generate counterfactual outcomes for each individual}
    \For{$j \leftarrow 1$ to $N$}{
        $t_j \leftarrow v_T^{(j)}$ \tcp*{Observed treatment}
        $\mathbf{u}^{(j)} \leftarrow \mathcal{M}_{\hat{\theta}}^{-1}(v^{(j)})$ \tcp*{Abduction via invertible BELM encoder}
        $y_j(1-t_j) \leftarrow \operatorname{Predict}\left(\mathcal{M}_{\hat{\theta}}, \mathbf{u}^{(j)}, \operatorname{do}(T:=1-t_j)\right)$ \tcp*{Action \& Prediction}
        
        \tcp{Store factual and counterfactual outcomes}
        $\mathbf{Y}_j(t_j) \leftarrow v_Y^{(j)}$\;
        $\mathbf{Y}_j(1-t_j) \leftarrow y_j(1-t_j)$\;
    }

    \tcp{3. Compute the ATE from factual and counterfactual outcomes}
    $\hat{\text{ATE}} \leftarrow \frac{1}{N} \sum_{j=1}^{N} \left( \mathbf{Y}_j(1) - \mathbf{Y}_j(0) \right)$\;
    \BlankLine

    \Return{$\hat{\text{ATE}}$}\;
\end{algorithm}

\vskip 0.2in
\bibliography{sample}

\end{document}